\documentclass{article} 
\usepackage{iclr2024_conference,times}
\iclrfinalcopy


\usepackage{amsmath,amsfonts,bm}









\def\eqref#1{equation~\ref{#1}}









\def\1{\bm{1}}












\DeclareMathAlphabet{\mathsfit}{\encodingdefault}{\sfdefault}{m}{sl}
\SetMathAlphabet{\mathsfit}{bold}{\encodingdefault}{\sfdefault}{bx}{n}














    \usepackage{amsthm}
\usepackage{graphicx}

\usepackage[T1]{fontenc}    
\usepackage{booktabs}       

\usepackage{amsfonts,amsmath,amssymb,bbm}
\usepackage{enumitem}
\usepackage{graphicx}
\usepackage{adjustbox}
\usepackage[UKenglish]{isodate}
\usepackage{graphicx}
\usepackage[font=small,labelfont=it]{caption}
\usepackage{subcaption}
\usepackage{hyperref} 
    \hypersetup{
    	colorlinks = true,
    	linkcolor = blue,
    	anchorcolor = blue,
    	citecolor = blue,
    	filecolor = blue,
    	urlcolor = blue
    }
\usepackage{cleveref}
\usepackage{float}
\usepackage{enumitem}
\usepackage{multirow}
\usepackage{fancyvrb}
\usepackage{rotating}
    \usepackage{tikz}
    \usepackage{tikz-cd} 
    \pgfdeclarelayer{edgelayer}
    \pgfdeclarelayer{nodelayer}
    \pgfsetlayers{edgelayer,nodelayer,main}
    \tikzstyle{new style 0}=[fill={rgb,255: red,255; green,94; blue,247}, draw=black, shape=circle]
    \tikzstyle{pointy}=[fill=white, draw=black, shape=circle]
    \tikzstyle{pointy}=[->]
\usepackage{fontawesome}

\usepackage{algpseudocode}
\usepackage[linesnumbered,lined,boxed,commentsnumbered,ruled,longend]{algorithm2e}

\SetCommentSty{mycommfont}

\raggedbottom
\setlength\heavyrulewidth{0.3ex}
\usepackage{lscape}


\renewcommand{\phi}{\varphi}

\newcommand\sbullet[1][.5]{\mathbin{\vcenter{\hbox{\scalebox{#1}{$\bullet$}}}}}







\newcommand{\xxx}{\mathcal{X}}
\newcommand{\yyy}{\mathcal{Y}}

\newcommand{\eqdef}{\ensuremath{\stackrel{\mbox{\upshape\tiny def.}}{=}}}


\newtheorem{definition}{Definition}
\newtheorem{proposition}{Proposition}
\newtheorem{assumption}{Assumption}
\newtheorem{theorem}{Theorem}

\newtheorem{example}{Example}
\newtheorem{exampledets}{Example - Details}
\newtheorem{lemma}{Lemma}


\NewDocumentCommand{\anastasis}{mo}{
    \IfValueF{#2}{
                        {{\scriptsize
                            \textcolor{violet}{ 
                            \textbf{A:}
                            \textit{{#1}}
                            }
                        }}
        }
    \IfValueT{#2}{
                        \marginnote{{\scriptsize
                            \textcolor{violet}{ 
                            \textbf{A:}
                            \textit{{#1}}
                            }
                        }}
        }
                    }

\usepackage{appendix}
\usepackage{comment}

\newcommand{\first}[1]{\mathbf{\textcolor{red}{#1}}}
\newcommand{\second}[1]{\mathbf{\textcolor{blue}{#1}}}
\newcommand{\third}[1]{\mathbf{\textcolor{violet}{#1}}}

\definecolor{bgrn}{rgb}{0.0, 0.26, 0.15}

\usepackage{hyperref}
\usepackage{wrapfig}
\usepackage{float}
\usepackage{algpseudocode}
\usepackage[linesnumbered,lined,boxed,commentsnumbered,ruled,longend]{algorithm2e}

\usepackage{multicol}
\usepackage{url}
\usepackage{graphicx}
\usepackage{subcaption}
\usepackage{dsfont}

\usepackage{amsmath}

\title{Neural Snowflakes: Universal Latent Graph Inference via Trainable Latent Geometries}

\author{Haitz Sáez de Oc\'{a}riz Borde\thanks{Equal Contribution.} \\
University of Oxford\\
Oxford, UK\\
\texttt{chri6704@ox.ac.uk} \\
\And
Anastasis Kratsios$^*$ \\
Department of Mathematics\\
McMaster University and the Vector Institute \\
Ontario, Canada\\
\texttt{kratsioa@mcmaster.ca} \\
}

\iclrfinalcopy 

\begin{document}

\maketitle

\begin{abstract}
The inductive bias of a graph neural network (GNN) is largely encoded in its specified graph. Latent graph inference relies on latent geometric representations to dynamically rewire or infer a GNN's graph to maximize the GNN's predictive downstream performance, but it lacks solid theoretical foundations in terms of embedding-based representation guarantees. This paper addresses this issue by introducing a trainable deep learning architecture, coined \textit{neural snowflake}, that can adaptively implement fractal-like metrics on $\mathbb{R}^d$. We prove that any given finite weighted graph can be isometrically embedded by a standard MLP encoder, together with the metric implemented by the neural snowflake. Furthermore, when the latent graph can be represented in the feature space of a sufficiently regular kernel, we show that the combined neural snowflake and MLP encoder do not succumb to the curse of dimensionality by using only a low-degree polynomial number of parameters in the number of nodes. This implementation enables a low-dimensional isometric embedding of the latent graph. We conduct synthetic experiments to demonstrate the superior metric learning capabilities of neural snowflakes when compared to more familiar spaces like Euclidean space. 

Additionally, we carry out latent graph inference experiments on graph benchmarks. Consistently, the neural snowflake model achieves predictive performance that either matches or surpasses that of the state-of-the-art latent graph inference models. Importantly, this performance improvement is achieved without requiring random search for optimal latent geometry. Instead, the neural snowflake model achieves this enhancement in a differentiable manner.
\end{abstract}

\section{Introduction}
\label{s:Introduction}

Geometric deep learning~\citep{gdl_2017,bronstein2021geometric} is a rapidly developing field that expands the capabilities of deep learning to encompass structured and geometric data, such as graphs, pointclouds, meshes, and manifolds. Graph neural networks (GNNs) derive their knowledge primarily from the specific graph they operate on, but many real-world problems lack an accessible ground truth graph for computation. Latent graph inference aims to address this by dynamically inferring graphs through geometric representations. Existing models lack a strong theoretical foundation and use arbitrary similarity measures for graph inference, lacking principled guidelines. A key challenge is the absence of a differentiable method to deduce geometric similarity for latent graph inference. Recently the concept of neural latent geometry search has been introduced~\citep{borde2023neural}, which can be formulated as follows: given a search space $\mathfrak{R}$ denoting the set of all possible latent geometries, and the objective function $L_{T,A}(g)$ which evaluates the performance of a given geometry $g$ on a downstream task $T$ for a machine learning model architecture $A$, the objective is to optimize the latent geometry: $\inf_{g\in\mathfrak{R}} L_{T,A}(g).$ In the context of latent graph inference $\mathfrak{R}$ would denote the space of possible geometric similarity measures used to construct the latent graphs. Previous studies have utilized random search to find the optimal geometry in $\mathfrak{R}$~\citep{Kazi_2022,Manifold-dDGM}. However, these methods have their limitations as they cannot infer geometry in a differentiable manner, and the representation capabilities of Riemannian manifolds are constrained by certain assumptions inherent in their geometry.

\textbf{Contributions.} We introduce a trainable deep learning architecture which we can adaptively implement metrics on $\mathbb{R}^d$ spaces with a fractal-like geometry, called \textit{neural snowflakes}.  We prove that together a neural snowflake and a simple MLP encoder are enough to discover any latent graph geometry. In particular, the neural snowflake implements a fractal geometry on $\mathbb{R}^d$ in which any given finite latent weighted graph can be isometrically embedded and the elementary MLP implements that embedding. We show that in cases where the latent weighted graph has favourable geometry, the neural snowflake and MLP encoder break the curse of dimensionality by only requiring a polynomial number of parameters in the graph nodes to implement the isometric embedding.  We note the contrast with universal approximation theorems, e.g.\ \cite{yarotsky2017error,LuShenYangZhang_2021_UATRegularTargets,shen2022optimal,kratsios2022universal}, where the number of parameters required to implement a generic approximation depend exponentially on the dimension of the ambient space.   Our embedding results exhibit no such exponential dependence on the dimension of the ambient space.  We verify our theoretical guarantees experimentally with synthetic metric learning experiments and graph embedding tasks. Additionally we show that the neural snowflake and MLP encoder combination beat or match the state of the art across several latent graph inference benchmarks from the literature. This is achieved by learning the latent geometry in a differentiable manner, utilizing a single model.  Thus, the neural snowflake eliminates the need to conduct costly combinatorial searches across numerous combinations of potential embedding spaces.

\section{Background}
\label{s:Background}

\textbf{Related Work.} In the field of Geometric Deep Learning, most research has relied on human annotators or simple preprocessing algorithms to generate the graph structure used in GNNs. However, even when the correct graph is provided, it may not be optimal for the specific task, and the GNN could benefit from a rewiring process~\citep{Topping2021UnderstandingOA}. Latent graph inference allows models to dynamically learn the intrinsic graph structure of problems where the true graph is unknown~\citep{Wang2019DynamicGC,Kazi_2022}. This is particularly relevant in real-world applications where data might only be available in the form of a pointcloud. There are several works in the literature addressing latent graph inference. In particular, we can think of graph rewiring~\citep{ArnaizRodrguez2022DiffWireIG,Bi2022MakeHG,Guo2023HomophilyorientedHG,Topping2021UnderstandingOA} as a subset of latent graph inference in which an input graph is provided to the network, whereas latent graph inference in its most general form allows GNNs to infer a graph starting from only a pointcloud. When the underlying connectivity structure is unknown, traditional architectures like transformers~\citep{Vaswani2017AttentionIA} and attentional multi-agent predictive models~\citep{Hoshen2017VAINAM} use a fully-connected graph.  This assumption, however, leads to challenges when training with large graphs. Generating sparse graphs can offer computationally tractable solutions~\citep{Fetaya2018NeuralRI} and prevent over-smoothing~\citep{Chen2020MeasuringAR}. Various models have been proposed to tackle this problem, starting from Dynamic Graph Convolutional Neural Networks (DGCNNs)~\citep{Wang2019DynamicGC}, to approaches that separate graph inference and information diffusion, such as the Differentiable Graph Modules (DGMs) in~\cite{Cosmo2020LatentGraphLF} and~\cite{Kazi_2022}. Recent approaches have focused on generalizing the DGM leveraging product manifolds~\citep{Manifold-dDGM,borde2023projections}. Latent graph inference is also referred to as graph structure learning in the literature. A survey of similar methods can be found in~\cite{Zhu2021DeepGS}, and some classical methods include LDS-GNN~\citep{Franceschi2019LearningDS}, IDGL~\citep{Chen2020IterativeDG}, and Pro-GNN~\citep{Jin2020GraphSL}. Moreover, recently generalizing latent graph inference to latent topology inference~\citep{battiloro2023latent} has also been proposed.

\textbf{Graphs.} A weighted graph can be defined as an ordered pair $\mathcal{G} = (V,E,W)$, where $V$ represents a set of nodes (or vertices), $E\subseteq \{\{u,v\}:\,u,v\in V\}$ forms the collection edges (or links) within the graph, and $W:E\rightarrow (0,\infty)$ weights the importance of each edge.  An (unweighted) graph $\mathcal{G}$ is a weighted graph for which $W(\{u,v\})=1$ for every edge $\{u,v\}\in E$. The neighborhood $\mathcal{N}(v)$ of a node $v\in V$ is the set of nodes sharing an edge with $u$; i.e.\ $\mathcal{N}(v) \eqdef \{u\in V:\, \{u,v\}\in E\}$.

\textbf{Graph Neural Networks.} To compute a message passing \textit{Graph Neural Network} (GNN) layer over a graph $\mathcal{G}$ (excluding edge and graph level features for simplicity), the following equation is typically implemented: $\mathbf{x}_{i}^{(l+1)}=\phi\Big(\mathbf{x}_{i}^{(l)},\bigoplus_{j\in\mathcal{N}(x_{i}^{(l)})}\psi(\mathbf{x}_{i}^{(l)},\mathbf{x}_{j}^{(l)})\Big).$ In the given equation, $\psi\in\mathbb{R}^{d}\times\mathbb{R}^{d}\rightarrow\mathbb{R}^{h}$ represents a message passing function. The symbol $\bigoplus$ denotes an aggregation function, which must be permutation-invariant, e.g.\ the sum or max operation. Additionally, $\phi\in\mathbb{R}^{d}\times\mathbb{R}^{h}\rightarrow\mathbb{R}^{m}$ represents a readout function. We note that the update equation is local and relies solely on the neighborhood of the node. Both $\psi$ and $\phi$ can be Multi-Layer Perceptrons~(MLPs).  In our manuscript all MLPs will use the $\operatorname{ReLU}(t)\eqdef \max\{0,t\}$ activation function, where $t\in \mathbb{R}$.  
Several special cases have resulted in the development of a wide range of GNN layers: the most well-known being Graph Convolutional Networks (GCNs)~\citep{Kipf2017SemiSupervisedCW} and Graph Attention Networks (GATs)~\citep{Velickovic2018GraphAN}.

\textbf{Quasi-Metric Spaces.} While Riemannian manifolds have been employed for formalizing non-Euclidean distances between points, their additional structural properties, such as smoothness and infinitesimal angles, impose substantial limitations, rendering the demonstration of Riemannian manifolds with closed-form distance functions challenging. Quasimetric spaces, isolate the relevant properties of Riemannian distance functions without requiring any of their additional structure for graph embedding.  A \textit{quasi-metric space} is a set $X$ with a distance function $d:X\times X\rightarrow [0,\infty)$ satisfying for every $x,y,z\in X$: i) $d(x,y)=0$ if and only if $x=y$, ii) $d(x,y)=d(y,x)$, iii) $d(x,y)\le C\big(d(x,z)+d(z,y)\big)$, for some constant $C\ge 1$. When $C=1$, $(X,d)$ is called a metric space, examples include Banach spaces and also, every (geodesic) distance on a Riemannian manifold satisfies (i)-(iii).
Conversely, several statistical divergences are weaker structures than quasi-metrics since they fail (ii), and typically fail (iii); see e.g.~\citep[Proposition A.2]{Hawkins2017ATO}.  Property (iii) is called the \textit{$C$-relaxed triangle inequality} if $C>1$; otherwise (iii), is called the \textit{triangle inequality}. Quasi-metric spaces typically share many of the familiar properties of metric spaces, such as similar notions of convergence, uniform-continuity of maps between quasimetric spaces, and compactness results for functions between quasimetric spaces such as Arzela-Ascoli theorems
~\citep{xia2009geodesic}.  The next example of quasimetric spaces are called metric \textit{snowflakes}.
\begin{example}[{\cite{xia2009geodesic}}]
\label{ex:Snowflakes}
Let $p>0$ and $(X,d)$ be a metric space.  Then, $(X,d^p)$ is a quasimetric space with $C=2^{p-1}$ if $p>1$.  When $0<p\le1$, then $(X,d^p)$ is a metric space; whence, $C=1$.
\end{example}
\vspace{-.5em}
Snowflakes are a simple tool for constructing new (quasi) metric spaces from old ones with the following properties.  
Unlike products of Riemannian manifolds, a snowflake's geometry can be completely different than the original untransformed space's geometry.  
Unlike classical methods for constructing new distances from old ones, e.g.\ as warped products in differential geometry~\citep{chen1999warped,alexander1998warped}, snowflakes admit simple \textit{closed-form} distance functions.   

\begin{proposition}[{Snowflakes are Metric Spaces - \citep[Proposition 2.50]{WeaverLipschitzAlgebras_2ed_2018}}]
\label{prop:new_metrics}
Let $f:[0,\infty)\rightarrow [0,\infty)$ be a continuous, concave, monotonically increasing function with $f(0)=0$, and let $(X,d)$ be a metric space; then, $d_f:X\times X\rightarrow \mathbb{R}$ is a metric on $X$
$
        d_f(x,z)
    \eqdef 
        f(d(x,z))
,
$ for any $x,z\in X$.
\end{proposition}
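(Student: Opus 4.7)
The plan is to verify the three metric axioms in turn, treating symmetry as immediate, identity of indiscernibles as a short consequence of concavity, and the triangle inequality as the one step that requires real work via subadditivity of $f$.

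First, I would handle symmetry and non-negativity: $d_f(x,z) = f(d(x,z)) \geq 0$ because $f$ takes values in $[0,\infty)$, and $d_f(x,z) = f(d(x,z)) = f(d(z,x)) = d_f(z,x)$ by symmetry of $d$. Next, for the identity of indiscernibles, one direction is trivial since $x=z$ forces $d(x,z)=0$ and then $d_f(x,z) = f(0) = 0$. For the converse, I would show $f(t) > 0$ for every $t > 0$: if instead $f(t_0) = 0$ at some $t_0 > 0$, then concavity combined with $f(0) = 0$ and $f \geq 0$ would force $f \equiv 0$ on $[0, t_0]$, and monotonicity would force $f \equiv 0$ on all of $[0,\infty)$, contradicting (I would tacitly interpret) the monotonically increasing hypothesis in the nontrivial sense needed to make $d_f$ a metric; thus $d_f(x,z) = 0 \Rightarrow d(x,z)=0 \Rightarrow x=z$.

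The main step, and the only nontrivial one, is the triangle inequality
\begin{equation*}
    f(d(x,z)) \;\leq\; f(d(x,y)) + f(d(y,z)).
\end{equation*}
I would first establish the subadditivity lemma: any continuous concave $f:[0,\infty)\to[0,\infty)$ with $f(0)=0$ satisfies $f(a+b) \leq f(a) + f(b)$ for all $a,b \geq 0$. To see this, write $a = \tfrac{a}{a+b}(a+b) + \tfrac{b}{a+b}\cdot 0$ and apply concavity to obtain $f(a) \geq \tfrac{a}{a+b}f(a+b) + \tfrac{b}{a+b}f(0) = \tfrac{a}{a+b}f(a+b)$, and symmetrically $f(b) \geq \tfrac{b}{a+b}f(a+b)$; summing yields subadditivity, with the boundary case $a+b=0$ being trivial.

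With subadditivity in hand, I would combine it with monotonicity of $f$ and the triangle inequality of $d$ to conclude
\begin{equation*}
    d_f(x,z) \;=\; f(d(x,z)) \;\leq\; f\bigl(d(x,y)+d(y,z)\bigr) \;\leq\; f(d(x,y)) + f(d(y,z)) \;=\; d_f(x,y) + d_f(y,z).
\end{equation*}
The main obstacle is the subadditivity step, but once derived from concavity and $f(0)=0$ by the convex-combination argument above, the remainder is routine. No additional regularity on $f$ is needed.
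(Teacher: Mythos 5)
Your proof is correct and follows essentially the same route as the paper's: monotonicity of $f$ applied to the triangle inequality of $d$, followed by subadditivity of $f$ derived from concavity and $f(0)=0$ (which you spell out via the convex-combination argument, whereas the paper simply asserts it), with the paper likewise reading ``monotonically increasing'' as strictly increasing to get the identity of indiscernibles.
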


\section{Adaptive Geometries via Neural Snowflakes}
\label{s:Neural_Snowflake}
We overcome one of the main challenges in contemporary Geometric Deep Learning, namely the problem of discovering a latent graph which maximizes the performance of a downstream GNN by searching a catalogue of combinations of products of elementary geometries~\citep{Gu2019LearningMR}; in an attempt to identify which product geometry the latent graph can be best embedded in~\citep{borde2023projections,Manifold-dDGM}. The major computational hurdle with these methods is that they pose a \textit{non-differentiable} combinatorial optimization problem with a non-convex objective, making them computationally challenging to scale. Therefore, by designing a class of metrics which are differentiable in their parameters, we can instead discover which geometry best suits a learning task using backpropagation.  Core to this is a trainable metric on $\mathbb{R}^d$, defined for any  $x,y\in \mathbb{R}^d$ by
\begin{equation}
\label{eq:activation_new}
        \|x-y\|_{\sigma_{\alpha,\beta,\gamma,p,C}}
    \eqdef 
        \big(
            \underbrace{
               C_1\,(
                   1
                    -
                    e^{-\gamma \|x-y\|}
                )
            }_{\text{Bounded}}
            +
            \underbrace{
                C_2\,
                \|x-y\|^{\alpha}
            }_{\text{Fractal}}
                +
            \underbrace{
                C_3\,
                \log(1+\|x-y\|)^{\beta}
            }_{\text{Irregular Fractal}}
        \big)^{1+|p|}
\end{equation}
where $0<\alpha,\beta\le 1$, $0\le p$, $0\le C_1,C_2,C_3,\gamma$ not all of which are $0$ and $C=(C_i)_{i=1}^3$. The trainable metric in~\eqref{eq:activation_new}, coined the \textit{snowflake activation}, is the combination of three components, a \textit{bounded geometry}, a \textit{fractal geometry}, and an \textit{irregular fractal} geometry part; as labeled therein.  The first \textit{bounded geometry} can adapt to latent geometries which are bounded akin to spheres, the second \textit{fractal geometry} component can implement any classical snowflake as in Example~\ref{ex:Snowflakes} (where $0<p\le 1$) and the \textit{irregular fractal} adapts to latent geometries much more irregular where the distance between nearby points grows logarithmically at large scales and exponentially at small scales\footnote{Note that $\log(1+\|x-y\|)\approx 1-e^{-\|x-y\|}$ when $0\approx \|x-y\|$.}. By Proposition~\ref{prop:new_metrics}, if $p=0$, the distance in~\eqref{eq:activation_new} is a metric on $\mathbb{R}^d$.  For $p>0$ $(\mathbb{R}^d,\|\cdot\|_{\sigma_{\alpha,\beta,\gamma,p,C}})$ is a quasi-metric space with $2^{p-1}$-relaxed triangle inequality, by Example~\ref{ex:Snowflakes}.

\subsection{Neural Snowflakes}
\label{ss:NeuralSnowflakes}
We leverage the expressiveness of deep learning, by extending the trainable distance function~\eqref{eq:activation_new} to a deep neural network generating distances on $\mathbb{R}^d$, called the \textit{neural snowflake}.

We begin by rewriting~\eqref{eq:activation_new} as a trainable activation function $\sigma_{a,b}:\mathbb{R}\rightarrow[0,\infty)$ which sends on any vector $u\in \mathbb{R}^J$, for $J\in \mathbb{N}_+$, to the $J\times 3$ matrix $\sigma_{a,b}(u)$ whose $j^{th}$ row is 
\begin{equation}
\label{eq:activation}
        \sigma_{a,b}(u)_{j}
    \eqdef 
        \big(
                1
                -
                e^{-|u_j|}
            ,
        |u_j|^{a}
            ,
                \log(1+|u_j|)^{b}
        \big)
.
\end{equation}
The parameters $0<a$ and $0\le b\le 1$ are trainable.  

We introduce a neural network architecture leveraging the ``tensorized'' snowflake activation function in~\eqref{eq:activation}, which can adaptively perturb any metric.  To ensure that the neural network model always preserves the metric structure of its input metric, typically the Euclidean metric on $\mathbb{R}^d$, we must constrain the weighs of the hidden layers to ensure that the model satisfies the conditions of Proposition~\ref{prop:new_metrics}.  
Building on the insights of monotone~\citep{daniels2010monotone}, ``input convex''~\citep{amos2017input} neural network architectures, and monotone-value~\citep{weissteiner2021monotone} neural networks, we simply require that all hidden weights are non-negative and do not all vanish. Lastly, the final layer of our neural snowflake model raises the generated metric to the $(1+|p|)^{th}$ power as in~\eqref{eq:activation_new}.  This allows the neural snowflake to leverage the flexibility of quasi-metrics, whenever suitable.  They key point here is that by only doing so on the final layer, we can explicitly track the relaxation of the triangle inequality discovered while training.  That is, as in Example~\ref{ex:Snowflakes}, $C=2^{p-1}$ if $p> 1$ and $C=1$ otherwise. Putting it all together, a \textit{neural snowflake} is a map $f:[0,\infty)\rightarrow[0,\infty)$, with iterative representation

\begin{equation}
\label{eq:neural_snowflake}
\begin{aligned}
    f(t) & =  t_{I}^{1+|p|}
    \\
        t_{i} 
    & =  
                B^{(i)}
                \,
                \sigma_{a_i,b_i}(A^{(i)}t_{i-1})
                \,
                C^{(i)}
        &\qquad \mbox{ for } i=1,\dots,I
    \\
    t_{0} & = t
\end{aligned}
\end{equation}
where for $i=1,\dots,I$, $A^{(i)}$ is a $\tilde{d}_{i}\times d_{i-1}$ matrix, $B^{(i)}$ is a $d_{i}\times \tilde{d}_{i}$-matrix, and $C^{(i)}$ is a $3\times 1$ matrix all of which have non-negative weights and at-least one non-zero weight, $p\in \mathbb{R}$; furthermore, for $i=1,\dots,I$, $0<a_i\le 1$, $0\le b_i\le 1$, $d_1,\dots,d_I\in \mathbb{N}_+$, and $d_0=1=d_{I}$.  We will always treat the neural snowflake as synonymous with the trainable distance function $\|x-y\|_f
    \eqdef 
        f(\|x-y\|),$ where $x,y\in \mathbb{R}^d$ for some contextually fixed $d$ and $f$ is as in~\eqref{eq:neural_snowflake}.

\section{Inferability Guarantees}
\label{s:Main_Result}
This section contains the theoretical guarantees underpinning the neural snowflake graph inference model.  We first show that it is universal, in the sense of graph representation learning, which we formalize.  We then derive a series of qualitative guarantees showing that the neural snowflake graph inference model requires very few parameters to infer any latent weighted graph.  In particular, neural snowflakes require a computationally feasible number of parameters to be guaranteed to work.

\subsection{Universal Graph Embedding}
\label{s:Main_Result__ss:Main_UniversalGraphInference}

Many graph inference pipelines depend on preserving geometry representations or encodings within latent geometries when inferring the existence of an edge between any two points (nodes) in a point cloud. Therefore, the effectiveness of any algorithm in this family of encoders hinges on its capacity to accurately or approximately represent the geometry of the latent graph. In this work, we demonstrate that the neural snowflake can infer any latent graph in this way.  Thus, we formalize what it means for a \textit{graph inference model} to be able to represent any latent (weighted) graph structure in $\mathbb{R}^D$ based on a class of geometries $\mathfrak{R}$.  
For any $D\in \mathbb{N}_+$, we call a pair $(\mathfrak{E},\mathfrak{R})$ a \textit{graph inference model}, on $\mathbb{R}^D$, if $\mathfrak{R}$ is a family of quasi-metric spaces, and $\mathfrak{E}$ is a family of maps with domain $\mathbb{R}^D$ and codomain in some member $(\mathcal{R},d_{\mathcal{R}})$ of $\mathfrak{R}$.  
Whenever $\mathbb{R}^D$ is clear from the context, we do not explicitly mention it.

\subsubsection{Universal Riemannian Representation Is Impossible}
\label{s:Main_Result__ss:EQuantitative_Guarantees___sss:Impossiblity}

Our primary qualitative guarantee asserts the universality of the graph inference model $(\mathfrak{E}, \mathfrak{R})$, where $\mathfrak{E}$ represents the set of MLPs into $\mathbb{R}^d$ with $\operatorname{ReLU}$ activation functions, and $\mathfrak{R}$ comprises all $(\mathbb{R}^d, |\cdot|_f)$, where $f$ is a neural snowflake; for integers $d\in \mathbb{N}_+$.  We now formalized what is meant by a \textit{universal} graph embedding model.
\begin{definition}[Universal Graph Embedding]
\label{defn:Universal_Graph_Embedding}
A graph inference model $(\mathfrak{E},\mathfrak{R})$ is \textit{universal} if: for every non-empty finite subset $V\subseteq \mathbb{R}^D$ and every connected weighted graph $\mathcal{G}=(V,E,W)$ there is a (quasi-metric) \textit{representation space} $(\mathcal{R},d_{\mathcal{R}})\in \mathfrak{R}$ and an \textit{encoder} $\mathcal{E}:\mathbb{R}^D\rightarrow \mathcal{R}$ in $\mathfrak{E}$ satisfying
\[
        d_{\mathcal{G}}(u,v)
    =
        d_{\mathcal{R}}(\mathcal{E}(u),\mathcal{E}(v))
    \qquad \forall\,u,v\in V
    .
\]
\end{definition}
Our interest in universal graph inference models lies in their ability to infer graph edges.  This is done by first learning an embedding $\mathcal{E}\in \mathfrak{E}$ into some representation space $(\mathcal{R},d_{\mathcal{R}})\in \mathfrak{R}$ and subsequently sampling edges based on nearest neighbors within the aforementioned embedding.

One technical point worth noting is that, when forming sets of nearest neighbors, ties between equidistant points are broken arbitrarily. This is accomplished by indexing (possibly randomly) the graph's vertices and selecting the first few nearest points based on the ordering of that index, similar to the approach in~\cite{fakcharoenphol2004approximating}. 

The formalization of this reconstruction procedure, in Theorem~\ref{thrm:ReconstructionTheorem}, uses the following notation. For every positive integer $N$, we denote the first $N$ positive integers by $[N]\eqdef \{1,\dots,N\}$.  For every quasi-metric (representation) space $(\mathcal{R},d_{\mathcal{R}})$ each point $x\in \mathcal{R}$, and each radius $r\ge 0$ the closed unit ball about $x$ of radius $r$ is $\bar{B}_{\mathcal{R}}(x,r)\eqdef \{u\in \mathcal{R}:\, d_{\mathcal{R}}(x,y)\le r\}$.

\begin{theorem}[Generic Graph Reconstruction via Universal Graph Inference Models]
\label{thrm:ReconstructionTheorem}
Fix $D \in \mathbb{N}_+$ and a latent graph inference model $(\mathfrak{E},\mathfrak{R})$ on $\mathbb{R}^D$.  
For every non-empty finite subset $V\subseteq \mathbb{R}^D$, every graph $G=(V,E)$, and each index $V=\{v_i\}_{i=1}^N$ there exists: a quasi-metric (representation) space $(\mathcal{R},d_{\mathcal{R}})\in \mathfrak{R}$ and an encoder $\mathcal{E}:\mathbb{R}^D\to \mathcal{R}$ in $\mathfrak{E}$ such that: 
for each $i\in [N]$ there is a (number of nearest neighbours) $k_i\in [N]$ satisfying
\[
        \{u_i,u_j\} \in E
    \Leftrightarrow
            j\le i^{\star}
        \mbox{ and }
                d_{\mathcal{R}}\big(
                        \mathcal{E}(u_i)
                    ,
                        \mathcal{E}(u_j)
                \big)
            \le 
                r(k_i)
\]
where $r(k_i)\eqdef \inf\big\{r\ge 0:\,
    \#\{ v\in V:\,
    d_{\mathcal{R}}\big(
            \mathcal{E}(u_i)
        ,
            \mathcal{E}(v)
    \big)
\le 
    r
\} \ge k_i
\big\}$ and where 
$i^{\star}\eqdef\{
j\in [N]:\,
\#(\bar{B}_{\mathcal{R}}(u_i, r(k_i) ) \cap \{u_s\}_{s=1}^j) \le k_i
\}$.
\end{theorem}
Theorem~\ref{thrm:ReconstructionTheorem} shows that if a latent graph inference model is universal, then it can be used to reconstruct the edge set of any latent graph structure by first embedding the vertices/point-cloud into a latent representation space and then joining nearest neighbours.  The next natural question is: \textit{``How does one build a universal latent graph inference model which is differentiable?''}

It is known that the family of Euclidean spaces $\mathfrak{R}=\{(\mathbb{R}^d,\|\cdot\|)\}_{d\in\mathbb{N}_+}$ are not flexible enough to isometrically accommodate all weighted graphs; even if $\mathfrak{E}$ is the family of \textit{all functions} from any $\mathbb{R}^D$ into any Euclidean space $(\mathbb{R}^d,\|\cdot\|)$.  This is because, some weighted graphs do not admit isometric embeddings into any Euclidean space~\citep{BourgainMilmanWolfson_TypeMetric_1986AMS,LinialLondonRabinovich_GeomGraphs_1995Combinatorics,MetousekEmbeddingintoLpExpanders_1997_IJM}.  Even infinite-dimensions need not be enough, since for every $n\in \mathbb{N}_+$, there is an $n$-point weighted graph which cannot be embedded in the Hilbert space $\ell^2$ with distortion less than $\Omega(\log(n)/\log(\log(n))$~\cite[Proposition 2]{Bourgain_IJM_1985}.  In particular, it cannot be isometrically embedded therein.  For example, any $l$-leaf tree embeds in $d$-dimensional Euclidean space with distortion at-least $\Omega(l^{1/d})$. In contrast, any finite tree can embed with arbitrary low-distortion into the hyperbolic plane~\citep{kratsios2023capacity}, which is a particular two-dimensional non-flat Riemannian geometry. Several authors~\citep{SarkarTreesEmbedding} have shown that cycle graphs can be embedded isometrically in spheres of appropriate dimension and radius~\citep{Schoenberg_Spheres_1942}, or in the product of spheres~\citep{GuellaMengattoPeron_ToriEmbedding_2016}, but they cannot be embedded isometrically in Euclidean space~\citep{Enflo_UniformHomeomorphisms_Nono_1976}.  These observations motivate geodesic deep learners \citep{Liu2019HyperbolicGN,Chamberlain2017NeuralEO,HGCN,Manifold-dDGM,borde2023projections} and network scientists \citep{Verbeek2014MetricEH} to use families of Riemannian representation spaces $\mathfrak{R}$ in which it is hoped that general graphs can faithfully be embedded, facilitating embedding-based latent graph inference. Unfortunately, $5$ nodes and $5$ edges are enough to construct a graph which cannot be isometrically embedded into \textit{any} non-pathological Riemannian manifold.

\begin{proposition}[Riemannian Representation Spaces are Too Rigid to be Universal]
\label{prop:Embedding_Impossible}
For any $D\in \mathbb{N}_+$ and any $5$-point subset $V$ of $\mathbb{R}^D$, there exists a set of edges $E$ on $V$ such that: 
\vspace{-0.5em}
\begin{itemize}
    \item[(i)] the graph $\mathcal{G}\eqdef (V,E)$ is connected 
    \item[(ii)] for every complete\footnote{Here, complete is meant in the sense of metric spaces; i.e.\ all Cauchy sequences in a complete metric space have a limit therein.} and connected smooth Riemannian manifold $(\mathcal{R},g)$ there does not exist an isometric embedding $\varphi:(V,d_{\mathcal{G}})\rightarrow (\mathcal{R},d_{\mathcal{R}})$
\end{itemize}
where $d_{\mathcal{G}}$ and $d_{\mathcal{R}}$ respectively denote the shortest path (geodesic) distances on $\mathcal{G}$ and on $(\mathcal{R},g)$.
\end{proposition}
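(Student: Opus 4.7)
The plan is to exhibit, on any prescribed five-point set $V$, a connected graph whose shortest-path metric admits no isometric embedding into any complete smooth Riemannian manifold. The underlying mechanism is the classical rigidity of Riemannian geometry: length-minimizing curves are smooth geodesics, so shortest paths cannot branch at an interior point, and in particular a single point cannot serve as a midpoint for arbitrarily many pairs simultaneously. Concretely I would label the points of $V$ as $v_0,v_1,v_2,v_3,v_4$ and take $E\eqdef\{\{v_0,v_i\}:i=1,2,3,4\}$, so that $\mathcal{G}$ is the star graph $K_{1,4}$ with unit edge weights. This graph is plainly connected, settling (i), and its shortest-path metric satisfies $d_{\mathcal{G}}(v_0,v_i)=1$ and $d_{\mathcal{G}}(v_i,v_j)=2$ for distinct $i,j\in\{1,2,3,4\}$.

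For (ii) I would assume toward a contradiction that an isometric $\varphi:(V,d_{\mathcal{G}})\to(\mathcal{R},d_{\mathcal{R}})$ exists into a complete connected smooth Riemannian manifold $(\mathcal{R},g)$, and invoke the Hopf--Rinow theorem to produce, for each $i$, a unit-speed minimizing geodesic $\alpha_i:[0,1]\to\mathcal{R}$ with $\alpha_i(0)=\varphi(v_0)$ and $\alpha_i(1)=\varphi(v_i)$; write $\xi_i\eqdef\alpha_i'(0)\in T_{\varphi(v_0)}\mathcal{R}$, a unit tangent vector. The heart of the argument is then the tangential relation $\xi_j=-\xi_i$ for every pair $i\neq j$ in $\{1,2,3,4\}$. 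Concatenating the reversal of $\alpha_i$ with $\alpha_j$ produces a curve of length $1+1=2=d_{\mathcal{R}}(\varphi(v_i),\varphi(v_j))$ joining $\varphi(v_i)$ to $\varphi(v_j)$, hence a length-minimizer; the standard Riemannian fact that every arc-length-parametrized length-minimizer is a $C^{\infty}$ geodesic then forces the one-sided velocities at $\varphi(v_0)$ to coincide, yielding $-\xi_i=\xi_j$. Applying this to the pairs $(1,2)$ and $(1,3)$ gives $\xi_2=-\xi_1=\xi_3$, while $(2,3)$ gives $\xi_3=-\xi_2$; combining these yields $\xi_2=0$, contradicting $\|\xi_2\|_g=1$.

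The main obstacle is precisely the smoothness-of-broken-minimizers step above, and it is where both hypotheses on $\mathcal{R}$ are essential: completeness is needed so that Hopf--Rinow actually delivers minimizing geodesics realizing the prescribed distances (rather than mere infima), and the Riemannian (as opposed to merely geodesic-metric) structure is needed so that any length-minimizer is $C^{\infty}$ at interior points. This is exactly the rigidity that the snowflake quasi-metrics of Section~\ref{s:Neural_Snowflake} sidestep: there, shortest paths may branch freely at a shared midpoint, which is precisely what the subsequent universality theorems for the neural snowflake will exploit to accommodate the star $K_{1,4}$, and indeed any finite weighted graph.
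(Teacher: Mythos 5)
Your proof is correct, but it takes a genuinely different route from the paper's. The paper does not use the star $K_{1,4}$: it takes the graph with edges $\{A,E\},\{A,D\},\{E,B\},\{B,D\},\{D,C\}$ (a $4$-cycle with a pendant vertex $C$ attached at $D$), observes that the unique shortest paths from $C$ to $A$ and from $C$ to $B$ both traverse the edge $\{C,D\}$, and derives a contradiction from the local uniqueness of geodesics: two minimizing geodesics issued from $\varphi(C)$ that coincide on the segment up to $\varphi(D)$ must coincide beyond it, so they cannot separate to reach the distinct points $\varphi(A)$ and $\varphi(B)$. Your argument instead exploits the star's center as a common interior point of the minimizers joining each pair of leaves, turning the obstruction into the linear-algebraic impossibility of three pairwise-antipodal unit vectors $\xi_1,\xi_2,\xi_3$ in $T_{\varphi(v_0)}\mathcal{R}$. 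Both proofs rest on the same two ingredients (Hopf--Rinow to realize distances by minimizing geodesics, and the smoothness/ODE-uniqueness of length minimizers, which forbids branching at interior points), so neither is deeper than the other; but yours is arguably more self-contained, since the contradiction is a one-line tangent-space computation rather than an appeal to uniqueness of geodesic continuation, and it shows the obstruction already appears for the $4$-point tree $K_{1,3}$ (you only use three of the four leaves). It is also a pleasant bonus that your witness is a tree, sharpening the paper's surrounding discussion: trees admit arbitrarily low-distortion hyperbolic embeddings yet still fail to embed \emph{isometrically} into any complete connected smooth Riemannian manifold. The paper's example, for its part, makes the geometric picture of ``a geodesic forced to branch'' very explicit, which is the intuition conveyed in its Figure caption. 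One small presentational point: when you invoke the ``standard Riemannian fact,'' it is worth stating it for \emph{piecewise smooth} curves parametrized proportionally to arc length, since that is the regularity class of your concatenated curve before you conclude it is a smooth geodesic.
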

In other words, Proposition~\ref{prop:Embedding_Impossible} shows that if $\mathfrak{R}$ is any set of non-pathological Riemannian manifolds and $\mathfrak{E}$ any set of functions from $\mathbb{R}^D$ into any Riemannian manifold $(\mathcal{R},g)$ in $\mathfrak{R}$ the graph inference model $(\mathfrak{E},\mathfrak{R})$ is not universal.  Furthermore, the graph ``breaking its universality'' is nothing obscure but a simple $5$ node graph.  Note that, no edge weights (not equal to $1$) are needed in Proposition~\ref{prop:Embedding_Impossible}.

\begin{figure}[hbpt!]
    \centering
    \includegraphics[width=.15\textwidth]{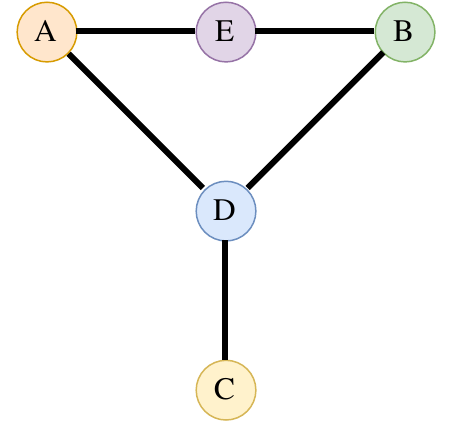}
    \caption{\textbf{Explanation of Proposition~\ref{prop:Embedding_Impossible}}: The Graph of Proposition~\ref{prop:Embedding_Impossible} cannot be isometrically embedded into any complete and connected (smooth) Riemannian manifold.  Briefly, the issue is that any isometric embedding into such a Riemannian manifold must exhibit a pair of geodesics one of which travels from the embeddings of node $C$ to node $A$, while passing through the embedding of node $D$; and likewise, the other of which travels from the embedding of node $C$ to node $B$ and again passes through the embedding of node $D$. However, this would violate the local uniqueness of geodesics in such a Riemannian manifold, around the embedding of node $D$ (implied by the Picard-Lindel\"{o}f theorem for ODEs); thus no such embedding can exist.}
    \label{fig:badgraph}
\end{figure}

Proposition~\ref{prop:Embedding_Impossible} improves on \citep[Propositions 13 and 15]{kratsiosembedding_2021} since the latter only show that no compact connected Riemannian manifolds (e.g.\ products of spheres) and no connected Riemannian manifold with bounded non-positive sectional curvatures (e.g.\ products of hyperbolic spaces) can accommodate certain sequences of expander graphs (see \citep[Remark 14]{kratsiosembedding_2021}).  However, those results do not rule out more complicated Riemannian representation spaces; e.g.\ the products of spheres, hyperbolic, and Euclidean spaces recently explored by~\cite{Gu2019LearningMR,Tabaghi2021LinearCI,heterogeneous_manifolds,Manifold-dDGM}. 

\subsubsection{Universal Representation Is Possible With Neural Snowflake}
\label{s:Main_Result__ss:EQuantitative_Guarantees___sss:Universality}

Juxtaposed against Proposition~\ref{prop:Embedding_Impossible}, our first main result shows that together, neural snowflake and MLPs, are universal graph embedding models.  
\begin{theorem}[{Universal Graph Embedding}]
\label{thrm:qualitative}
Let $D\in \mathbb{N}_+$ and $\mathcal{G}$ be a weighted graph with $V\subseteq \mathbb{R}^D$ with $I\in \mathbb{N}_+$ vertices.  There exists an embedding dimension $d\in \mathbb{N}_+$, an MLP $\mathcal{E}:\mathbb{R}^D\rightarrow \mathbb{R}^d$ with $\operatorname{ReLU}$ activation function, and a neural snowflake $f$ such that
\vspace{-.5em}
\[
\smash{
        d_{\mathcal{G}}(u,v) 
    = 
            \|
                    \mathcal{E}(u)
                -
                    \mathcal{E}(v)
            \|_f
,
}
\]
for each $u,v\in V$.
The $(\mathbb{R}^d,\|\cdot\|_f)$ supports a $2^{
O(\log(1+\frac{1}{I-1})^{-1} )}$-relaxed triangle inequality.  If $\mathcal{G}$ is a tree then $(\mathbb{R}^d,\|\cdot\|_f)$ instead supports an $8$-triangle inequality.
\end{theorem}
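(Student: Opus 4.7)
My plan reduces the theorem to finding an isometric embedding of a \emph{power} of the graph metric into Euclidean space. Concretely, I would first produce an exponent $\alpha \in (0,1]$, a dimension $d \in \mathbb{N}_+$, and an injection $\tilde{E} \colon V \to \mathbb{R}^d$ with
\[
\|\tilde{E}(u)-\tilde{E}(v)\|_2 \;=\; d_{\mathcal{G}}(u,v)^{\alpha}\qquad \text{for all } u,v\in V,
\]
after which the graph metric is recovered by the snowflake $f(t)=t^{1/\alpha}$, since $f(\|\tilde{E}(u)-\tilde{E}(v)\|_2) = d_{\mathcal{G}}(u,v)$. It then remains to (i) implement $f$ as a neural snowflake in the sense of~\eqref{eq:neural_snowflake}, (ii) extend $\tilde{E}$ to a $\operatorname{ReLU}$ MLP $E: \mathbb{R}^D \to \mathbb{R}^d$ agreeing with $\tilde{E}$ on $V$, and (iii) read off the relaxed-triangle-inequality constant from Example~\ref{ex:Snowflakes}.

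\textbf{Existence of the snowflake embedding.} For the existence of $\alpha$ and $\tilde{E}$, I would invoke Schoenberg's theorem: $(V,d_{\mathcal{G}}^{\alpha})$ embeds isometrically into a Hilbert space, and hence, since $V$ is finite, into some $\mathbb{R}^d$ with $d\leq |V|-1$, if and only if $d_{\mathcal{G}}^{\alpha}$ is of negative type. The key observation is that as $\alpha \to 0^{+}$ every off-diagonal entry $d_{\mathcal{G}}(u,v)^{\alpha}$ tends to $1$, so the kernel associated with $d_{\mathcal{G}}^{\alpha}$ converges entrywise to that of a regular simplex, whose nonzero eigenvalues all equal $1/2$. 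A quantitative spectral perturbation, bounding the entrywise error by $\alpha\,\log\mathrm{diam}(\mathcal{G})$ via the expansion $d^{2\alpha} = 1 + 2\alpha\log d + O(\alpha^{2})$, then yields an explicit threshold $\alpha^{\star} \gtrsim \log(1+1/(|V|-1))$ below which $d_{\mathcal{G}}^{\alpha}$ remains of negative type. For trees I would skip the perturbation entirely: tree metrics are classically of $1$-negative type, so any $\alpha\in (0,\tfrac12]$ is admissible; choosing $\alpha=\tfrac{1}{4}$ then produces the advertised constant $C = 2^{4-1} = 8$.

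\textbf{Realising $f$ and $E$.} Realising $f(t)=t^{1/\alpha}$ as a neural snowflake is immediate from~\eqref{eq:neural_snowflake} with depth $I=1$: take $\tilde{d}_{1}=d_{1}=1$, $A^{(1)}=B^{(1)}=1$, $C^{(1)}=(0,1,0)^{\top}$ (selecting only the \emph{fractal} channel), $a_{1}=1$, and $1+|p|=1/\alpha$. All entries are non-negative and not all zero, so the architectural constraints are met; plugging into the recursion gives $t_{1}=t_{0}$ and hence $f(t)=t^{1/\alpha}$. To extend $\tilde{E}$ to the MLP $E$, I use the standard exact-interpolation property of $\operatorname{ReLU}$ MLPs on finite sets: separate the finitely many points of $V\subset \mathbb{R}^{D}$ by hyperplanes and sum localised $\operatorname{ReLU}$ bumps, one per $v \in V$, each scaled to the target value $\tilde{E}(v)$. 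The behaviour of $E$ off $V$ is irrelevant because the theorem only constrains $\|E(u)-E(v)\|_f$ for $u,v\in V$.

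\textbf{Triangle inequality and main obstacle.} By Example~\ref{ex:Snowflakes}, the global quasi-metric $\|x-y\|_{f} = \|x-y\|_{2}^{1/\alpha}$ on $\mathbb{R}^d$ satisfies the $C$-relaxed triangle inequality with $C = 2^{1/\alpha-1}$. Substituting $\alpha^{\star} \gtrsim \log(1+1/(I-1))$ with $I=|V|$ yields the claimed bound $C = 2^{O(\log(1+1/(I-1))^{-1})}$, while the tree case delivers $C=8$ directly. The main technical hurdle I anticipate is exactly the quantitative bound on $\alpha^{\star}$: tracking the least nonzero eigenvalue of the centred $d_{\mathcal{G}}^{2\alpha}$-kernel matrix and pitting the entrywise perturbation against the spectral gap of the simplex kernel. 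Everything else, namely the neural-snowflake realisation of $t\mapsto t^{1/\alpha}$ and the MLP interpolation on a finite set, is routine, and for trees the difficulty is bypassed altogether since tree metrics are already of negative type.
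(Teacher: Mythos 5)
Your overall route is the same as the paper's: snowflake the graph metric by a small exponent $\alpha$, embed $(V,d_{\mathcal{G}}^{\alpha})$ isometrically into some $\mathbb{R}^d$, memorize that embedding on the finite set $V$ with a $\operatorname{ReLU}$ MLP, and undo the snowflake with the one-layer neural snowflake $f(t)=t^{1/\alpha}$ (your parameter choice $A^{(1)}=B^{(1)}=1$, $C^{(1)}=(0,1,0)^{\top}$, $1+|p|=1/\alpha$ is exactly the paper's), reading off the relaxation constant $2^{1/\alpha-1}$ from Example~\ref{ex:Snowflakes}. The MLP interpolation step and the tree case (tree metrics are of $1$-negative type, so any $\alpha\le\tfrac12$ is admissible and $\alpha=\tfrac14$ gives $C=8$) also match the paper, which invokes \citep[Lemma 20]{kratsiosembedding_2021} and \citep[Theorem 6]{maehara1986metric} respectively.

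The genuine gap is in the step you yourself flag as the main hurdle: your derivation of the threshold $\alpha^{\star}$. The paper does not prove this; it cites \citep[Corollary 3]{deza1990metric}, whose threshold $\varepsilon^{\star}=\log_2(1+\tfrac{1}{I-1})/2$ depends \emph{only} on the number of points $I$. That result is obtained not by a global spectral perturbation but by exploiting that the power transform of a metric is nearly an ultrametric: $d_{ij}^{\alpha}\le(d_{ik}+d_{kj})^{\alpha}\le 2^{\alpha}\max(d_{ik}^{\alpha},d_{kj}^{\alpha})$, a \emph{local, multiplicative} three-point condition with slack $2^{\alpha}-1$ that is independent of the scale of the distances. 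Your proposed argument instead requires every entry $d_{ij}^{2\alpha}$ to lie within $O(1/I)$ of a single constant so as not to destroy the $\tfrac12$ spectral gap of the centred simplex kernel; the entrywise deviation is of order $\alpha\,\log(d_{\max}/d_{\min})$, so the threshold you actually get is $\alpha^{\star}\lesssim \big(I\,\log\operatorname{aspect}(\mathcal{G})\big)^{-1}$, not $\log(1+\tfrac{1}{I-1})$. For a weighted graph the aspect ratio is unbounded for fixed $I$ (and even for an unweighted path it contributes an extra $\log I$), so your $\alpha^{\star}$ can be arbitrarily small for fixed $I$. This still yields the \emph{existence} claim of Theorem~\ref{thrm:qualitative} (any admissible $\alpha>0$ suffices there), but it does not yield the stated $2^{O(\log(1+\frac{1}{I-1})^{-1})}$-relaxed triangle inequality, whose exponent must depend on $I$ alone. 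To close the gap you should either cite the Deza--Maehara result directly or replace the spectral perturbation by the near-ultrametric argument.
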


\textbf{Comparison: State-of-the-Art Deep Embedding Guarantees.} Recently, \cite{kratsiosembedding_2021} built on \cite{AndoniNaorNeiman_2018SnowflakeUniversalityWasserstein} and proposed a universal graph embedding model which uses the transformer architecture of~\cite{kratsios2021universal} to represent graphs in the order $2$-Wasserstein space on $\mathbb{R}^3$.  The drawbacks of this approach are that the metric is not available in closed-form, it is computationally infeasible to evaluate exactly for large graph embeddings, and it is still challenging to evaluate approximately~\citep{cuturi2013sinkhorn}. In contrast, Theorem~\ref{thrm:qualitative} guarantees that a simple MLP can isometrically embedding any weighted graph into a finite-dimensional representation space with closed-form distance function explicitly implemented by a neural snowflake.  

\textbf{Comparison: MLP without Neural Snowflake.} We examine the necessity of the neural snowflake in Theorem~\ref{thrm:qualitative}, by showing that the MLP alone cannot isometrically represent any weighted graph into its natural output space; namely some Euclidean space.  
\begin{theorem}[Neural Snowflakes $\&$ MLPs Are More Powerful For Representation Learning Than MLPs]
\label{thrm:strictbetterthanEuc}
Let $d,D\in \mathbb{N}_+$.  
The following hold:
\vspace{-0.5em}
\begin{enumerate}
    \item[(i)] \textbf{No Less Expressive Than MLP:} For any weighted graph $\mathcal{G}=(V,E,W)$ with $V\subseteq \mathbb{R}^D$, if there is an MLP $\mathcal{E}:\mathbb{R}^D\rightarrow\mathbb{R}^d$ which isometrically embeds $\mathcal{G}$ then there is a neural snowflake $f$ and an MLP $\mathcal{E}:\mathbb{R}^D\rightarrow\mathbb{R}^d$ which isometrically embeds $\mathcal{G}$ into $(\mathbb{R}^d,\|\cdot\|_f)$.
    \item[(ii)] \textbf{Strictly More Expressive Than MLP:} There exists a complete weighted graph $G=(V,E,W)$ with $V\subset\mathbb{R}^D$ by any MLP $\tilde{\mathcal{E}}:\mathbb{R}^D\rightarrow\mathbb{R}^d$ but for which there exists a neural snowflake $f$ and an MLP $\mathcal{E}:\mathbb{R}^D\rightarrow\mathbb{R}^d$ that isometrically embeds $\mathcal{G}$ into $(\mathbb{R}^D,\|\cdot\|_f)$.
\end{enumerate}
\end{theorem}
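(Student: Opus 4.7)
I will prove the two parts separately. The strategy for (i) is that the neural snowflake family contains the identity function $t\mapsto t$; composing the identity neural snowflake with an MLP that already embeds isometrically into Euclidean space yields an MLP-plus-neural-snowflake embedding. The strategy for (ii) is to invoke Theorem~\ref{thrm:qualitative} to supply the neural snowflake embedding and to exhibit a specific finite weighted graph whose shortest-path metric is not realizable as Euclidean pairwise distances between finitely many points (hence not realizable by any MLP image).

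For (i), I verify that the one-layer neural snowflake with $I=1$, $p=0$, $a_1=1$, $b_1=0$, $A^{(1)} = B^{(1)} = (1)$, $C^{(1)} = (0,1,0)^\top$, and $\tilde d_1 = 1$ implements $f(t) = t$. Reading off~\eqref{eq:activation},
\[
\sigma_{1,0}(t) = \bigl(1 - e^{-t},\; t,\; 1\bigr),
\qquad
t_1 = B^{(1)}\,\sigma_{1,0}(A^{(1)}t)\,C^{(1)} = t,
\qquad
f(t) = t_1^{1+|p|} = t.
\]
All matrix entries are nonnegative with at least one nonzero entry, so the architectural constraints in~\eqref{eq:neural_snowflake} are satisfied. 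Given an MLP $\mathcal{E}:\mathbb{R}^D\to\mathbb{R}^d$ isometrically embedding $\mathcal{G}$ into Euclidean $\mathbb{R}^d$, the pair $(\mathcal{E}, f)$ with this identity $f$ isometrically embeds $\mathcal{G}$ into $(\mathbb{R}^d,\|\cdot\|_f) = (\mathbb{R}^d,\|\cdot\|)$, giving (i).

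For (ii), I read the statement as asserting the existence of a fully-connected weighted graph that is \emph{not} isometrically embeddable by any MLP $\tilde E:\mathbb{R}^D\to\mathbb{R}^d$ yet \emph{is} embeddable by a neural snowflake combined with an MLP. Take any four distinct points $v_1,\dots,v_4\in\mathbb{R}^D$ and set $\mathcal{G} = (V,E,W)$ with $V=\{v_1,v_2,v_3,v_4\}$, $E = \{\{v_i,v_j\}: i\neq j\}$, and weights $W(\{v_i,v_{i+1}\})=1$ (indices mod $4$), $W(\{v_1,v_3\}) = W(\{v_2,v_4\}) = 2$. A direct check shows $d_{\mathcal{G}} = W$. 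Suppose an MLP $\tilde E$ satisfied $\|\tilde E(v_i) - \tilde E(v_j)\| = d_{\mathcal{G}}(v_i,v_j)$ for all $i,j$; then from
\[
\|\tilde E(v_1) - \tilde E(v_3)\| \;=\; 2 \;=\; \|\tilde E(v_1)-\tilde E(v_2)\| + \|\tilde E(v_2)-\tilde E(v_3)\|,
\]
the point $\tilde E(v_2)$ must lie at the midpoint of the Euclidean segment $[\tilde E(v_1),\tilde E(v_3)]$; the symmetric argument forces $\tilde E(v_4)$ to the same midpoint, yielding $\tilde E(v_2)=\tilde E(v_4)$ and contradicting $\|\tilde E(v_2)-\tilde E(v_4)\|=2$. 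Since $\mathcal{G}$ is a finite weighted graph, Theorem~\ref{thrm:qualitative} supplies a neural snowflake $f$ and an MLP $E$ that isometrically embed $\mathcal{G}$ into $(\mathbb{R}^d,\|\cdot\|_f)$, concluding (ii).

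The only step with substantive content is the rigidity argument in (ii); part (i) is bookkeeping and the representation-theoretic heavy lifting in (ii) is outsourced to Theorem~\ref{thrm:qualitative}. The main potential obstacle is purely expositional: confirming that the slightly elliptic statement of (ii) is meant to say ``not isometrically embeddable by any MLP'' and that the trailing $(\mathbb{R}^D,\|\cdot\|_f)$ is a typo for $(\mathbb{R}^d,\|\cdot\|_f)$.
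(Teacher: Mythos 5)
Your part (i) coincides with the paper's own argument (Proposition~\ref{prop:EucMatching}): the identity neural snowflake obtained from the middle channel with $a_1=1$, $C^{(1)}=(0,1,0)^{\top}$ and $p=0$, composed with the given MLP. For part (ii) you take a genuinely different route. The paper's witness (Proposition~\ref{prop:Strict_Improvement}) is a fully-connected graph on a sufficiently large point set weighted by $W(x,u)=\|x-u\|^{1/2}$, so that $(V,d_G)$ is a $1/2$-snowflake of a Euclidean configuration; non-embeddability into $(\mathbb{R}^n,\|\cdot\|)$ for $n\le d$ is imported from Le Donne--Rajala--Walsberg, and the positive direction is explicit ($f(t)=|t|^{1/2}$ together with an MLP memorizing $V$). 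Your witness is the weighted $4$-cycle with the strict-convexity/midpoint rigidity argument: this is more elementary, needs only four nodes instead of a dimension-dependent number, and excludes isometric embedding into Euclidean space of \emph{every} dimension rather than only $n\le d$. That is a genuine gain on the negative side.

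The gap is on the positive side of (ii). The theorem fixes $d$ at the outset and the concluding MLP must map into $\mathbb{R}^d$ for that prescribed $d$, whereas Theorem~\ref{thrm:qualitative} only yields an embedding into $\mathbb{R}^{d'}$ for some uncontrolled $d'$; invoking it therefore does not close the statement as written. For $d\ge 2$ the repair is easy and makes Theorem~\ref{thrm:qualitative} unnecessary: memorize the map sending the four nodes to the vertices of a unit square in $\mathbb{R}^2\subseteq\mathbb{R}^d$ and take the neural snowflake $f(t)=t^{2}$ (identity inner layer, $p=1$), since $f(1)=1$ and $f(\sqrt{2})=2$ reproduce $d_{\mathcal{G}}$ exactly. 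But for $d=1$ your witness genuinely fails: every neural snowflake is nondecreasing on $[0,\infty)$, and for four collinear images $a<b<c<d$ each of the three perfect matchings of $\{a,b,c,d\}$ contains a pair whose Euclidean gap is strictly dominated by that of some unmatched pair, so the required pattern (matched pairs at $f$-distance $2$, unmatched pairs at $f$-distance $1$) is impossible for any nondecreasing $f$. So either restrict your argument to $d\ge 2$, or for arbitrary prescribed $d$ switch to a witness of the paper's type, e.g.\ a collinear point set with weights $\|x-u\|^{1/2}$, which does embed into $(\mathbb{R}^{1},\|\cdot\|_{f})$ with $f(t)=|t|^{1/2}$.
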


\subsection{Isometric Representation Guarantees - By Small Neural Snowflakes}
\label{s:Main_Result__ss:EQuantitative_Guarantees}
Theorem~\ref{thrm:qualitative} offers a qualitative assurance that the neural snowflake can represent any finite weighted graph.  We now show that any weighted graph which can be isometrically represented in the latent geometry induced by kernel regressors, can be implemented by a neural snowflake.  
We assume that the latent geometry of the weighted graph is encoded in a low-dimensional space and the distances in that low-dimensional space are given by a \textit{radially symmetric} and positive-definite kernel.

\begin{assumption}[Latent Radially-Symmetric Kernel]
\label{assumptions:boundedmetric_RBF}
There are $d,D\in \mathbb{N}_+$, a feature map $\Phi:\mathbb{R}^D\rightarrow \mathbb{R}^d$, and a non-constant positive-definite function\footnote{A positive-definite function, is map $f: \mathbb{R} \to \mathbb{C}$ for which each $\big(f(x_i - x_j)\big)_{i,j=1}^N$ is a positive-definite matrix, for every $N\in \mathbb{N}_+$ and each $x_1,\dots,x_N\in \mathbb{R}$.} $f:[0,\infty)\rightarrow[0,\infty)$ satisfying
\[
\smash{
        d_{\mathcal{G}}(x,y)
    =
        \bar{f}\big(
            \|\Phi(x)-\Phi(y)\|
        \big)
,
}
\]
for every $x,y\in\mathbb{R}^d$; where $\bar{f}(t)\eqdef f(0)-f(t)$.
\end{assumption}
Several satisfy Assumption~\ref{assumptions:boundedmetric_RBF}, as we emphasize using two exotic examples (Appendix~\ref{s:Further_Details}).
\begin{example}
\label{ex:Family_of_Examples}
The map $f(t)\eqdef (1+\sum_{k=1}^K\,|t|^{r_k})^{-\beta}$ satisfies Assumption~\ref{assumptions:boundedmetric_RBF} for any $K\in \mathbb{N}_+$ and $0\le r_1,\dots,r_K,\beta\le 1$.
\end{example}
\begin{example}
\label{ex:inverse_multiqudratics}
The map $f(t)\eqdef\exp\big(\frac{-a(t-1)}{\log(t)}\big)$ satisfies Assumption~\ref{assumptions:boundedmetric_RBF}, for all $a>0$.
\end{example}

We find that the neural snowflake can implement an isometric representation of the latent geometry, as in Theorem~\ref{thrm:qualitative}, using a small number of parameters comparable to Theorem~\ref{thrm:representation}.

\begin{theorem}[Quantitative Embedding Guarantees for Bounded Metric Geometries]
\label{thrm:bounded_component}
Let $D,d\in \mathbb{N}_+$, $G$ be a finite weighted graph with $V\subset \mathbb{R}^D$ and suppose that Assumption~\ref{assumptions:boundedmetric_RBF} (or~\eqref{assumptions:boundedmetric_MGF}) holds.  Then, there is a neural snowflake $(\mathcal{E},f)$ satisfying
\[
        \|\Phi(v)-\Phi(u)\|_f = \|\mathcal{E}(v)-\mathcal{E}(u)\|_{\bar{f}} 
\]
for every $u,v\in V$.  
Furthermore, the depth and width of $\mathcal{E}$ and $f$ are recorded in Table~\ref{tab:complexity_count}.
\end{theorem}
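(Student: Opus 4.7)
The plan is to construct the MLP encoder $E$ and the neural snowflake function $f$ separately, then combine them to yield the stated isometry. Since Assumption~\ref{assumptions:boundedmetric_RBF} gives $d_{\mathcal{G}}(u,v) = \bar{f}(\|\Phi(u)-\Phi(v)\|)$, the natural route to the identity $f(\|\Phi(v)-\Phi(u)\|) = \bar{f}(\|E(v)-E(u)\|)$ is to arrange $E(u) = \Phi(u)$ on the finite node set $V$---so that the right-hand side equals $d_{\mathcal{G}}(u,v)$---and then build $f$ to coincide with $\bar{f}$ on the finite set of realized distances $\mathcal{R} = \{\|\Phi(u)-\Phi(v)\| : u,v \in V\}$, which has cardinality at most $|V|^2$.

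For the encoder, I would appeal to a standard exact-interpolation result for $\operatorname{ReLU}$-MLPs (along the same lines as the construction used in Theorem~\ref{thrm:qualitative}) to interpolate the finite dataset $\{(u,\Phi(u))\}_{u \in V} \subset \mathbb{R}^D \times \mathbb{R}^d$; this can be done in width $O(|V|)$ and bounded depth, and immediately gives $\|E(u)-E(v)\| = \|\Phi(u)-\Phi(v)\|$ for every $u,v \in V$.

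For the snowflake function, the essential ingredient is a Bernstein-type representation. Because $f$ in Assumption~\ref{assumptions:boundedmetric_RBF} is radially symmetric and positive-definite, $\bar{f} = f(0) - f(\cdot)$ is conditionally negative-definite; when additionally $f$ is completely monotone (as holds for the kernels in Examples~\ref{ex:Family_of_Examples}--\ref{ex:inverse_multiqudratics} via the classical identity $(1+s)^{-\beta} = \frac{1}{\Gamma(\beta)} \int_0^\infty e^{-\lambda(1+s)} \lambda^{\beta-1}\,d\lambda$ together with Schoenberg's characterization of completely monotone radial kernels), $\bar{f}$ admits the integral representation $\bar{f}(t) = \int_0^\infty (1-e^{-\lambda t})\,d\mu(\lambda)$ for a non-negative finite measure $\mu$. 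Since $\mathcal{R}$ is finite, Carath\'eodory's theorem applied to the convex cone generated on $\mathcal{R}$ by $\{t \mapsto 1-e^{-\lambda t}\}_{\lambda \geq 0}$ yields atoms $0 < \lambda_1 < \cdots < \lambda_J$ and non-negative weights $c_1,\dots,c_J$ with $J \leq |\mathcal{R}|+1 \leq |V|^2+1$ such that $\bar{f}(r) = \sum_{j=1}^J c_j(1-e^{-\lambda_j r})$ for every $r \in \mathcal{R}$. This finite sum is encoded in one hidden layer of the architecture~\eqref{eq:neural_snowflake} with $p = 0$: take $A^{(1)}$ to be the column of $\lambda_j$'s, $B^{(1)}$ the row of $c_j$'s, and $C^{(1)} = (1,0,0)^{\top}$ to activate only the bounded component of~\eqref{eq:activation}. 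All hidden weights are non-negative, satisfying the architectural constraints of Section~\ref{ss:NeuralSnowflakes}, and matching on $\mathcal{R}$ delivers the required identity.

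The main obstacle is the Bernstein step: verifying that $\bar{f}$ has a non-negative integral representation over pure exponentials. This is clean for completely-monotone kernels as above, but for a general positive-definite $f$ the Bochner representation yields only cosines. Fortunately, only finitely many interpolation conditions---one per element of $\mathcal{R}$---are imposed, so a Vandermonde / linear-algebra argument suffices to replace the cosine terms by exponentials on $\mathcal{R}$, activating the fractal and irregular-fractal components of~\eqref{eq:activation} through $C^{(1)}$ if needed to enlarge the representable cone. The MGF variant (Assumption~\ref{assumptions:boundedmetric_MGF}) is handled identically, substituting fractal $|u|^a$ blocks for the bounded building blocks and allowing $p > 0$ to relax the triangle inequality when the target metric is unbounded. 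Tallying the interpolation cost for $E$ and the Carath\'eodory width $J \leq |V|^2+1$ for $f$ yields the depth and width bounds recorded in Table~\ref{tab:complexity_count}.
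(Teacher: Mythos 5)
Your proposal follows the same two\mbox{-}stage strategy as the paper's proof: first memorize $\Phi$ on the finite node set $V$ with a $\operatorname{ReLU}$ MLP so that $E=\Phi$ on $V$ (the paper does exactly this via its memorization lemma), and then match $\bar{f}$ on the finite set of realized distances by a non-negative combination of the bounded building blocks $t\mapsto 1-e^{-\lambda t}$, implemented in a single hidden snowflake layer with $C^{(1)}=(1,0,0)^{\top}$ and $p=0$. The one step you handle by a genuinely different lemma is the production of that finite exponential sum. The paper first reduces both forms of the assumption to complete monotonicity of $f$ (Hausdorff--Bernstein--Widder for the Laplace-transform form, Schoenberg for the kernel form) and then interpolates $f$ at the $\tilde{I}\le I(I-1)/2+1$ realized distances by an exponential sum with $\lceil \tilde{I}/2\rceil$ terms, using a Chebyshev-system interpolation result for exponential sums; you instead pass through the Bernstein integral representation $\bar{f}(t)=\int_0^\infty (1-e^{-\lambda t})\,d\mu(\lambda)$ and apply Carath\'eodory's theorem for cones in $\mathbb{R}^{|\mathcal{R}|}$, obtaining at most $|\mathcal{R}|$ terms. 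Both yield one hidden layer of width $O(I^2)$ (the paper's constant is slightly smaller), and your route has the merit that non-negativity of the coefficients $c_j$ --- which the architecture's weight constraints require --- is automatic from the non-negativity of the Bernstein measure, a point the paper leaves to its citation. One caveat: your closing paragraph about general positive-definite $f$ producing only cosines addresses a non-issue, since the hypotheses (positive-definiteness of the kernel in every dimension, resp.\ the MGF form) force complete monotonicity precisely via Schoenberg/HBW, which is exactly the paper's opening reduction; the proposed ``Vandermonde replacement of cosines'' is both unnecessary and unlikely to preserve the non-negativity of the hidden weights, so you should drop that contingency rather than rely on it.
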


\subsection{Representation Guarantees Leveraging Distortion}
\label{s:Main_Result__ss:EQuantitative_Guarantees__wDist}

Theorem~\ref{thrm:qualitative} shows the neural snowflake and an simple MLP are \textit{universal graph embedding} models.
By allowing the neural snowflake and MLP some possible slack to distort the latent graph's geometry, either by stretching or contracting pairwise distances ever so slightly, we are able to derive explicit bounds on the embedding dimension $d$ and on the complexity of the neural snowflake and MLP.

\begin{theorem}[Quantitative Approximately Isometric Embeddings for Weighted Graphs]
\label{thrm:representation}
Let $G=(V,E,W)$ be a weighted graph $N\in \mathbb{N}_+$ nodes, a non-negative weighting function $W$, and $V\subset \mathbb{R}^d$.  For every $1<p<2$, there exists an MLP with ReLU activation function $\mathcal{E}:\mathbb{R}^d\rightarrow \mathbb{R}^{O(\log(\operatorname{dim}(G))}$ and a snowflaking network $f:\mathbb{R}\rightarrow\mathbb{R}$ such that: for every $x,u\in V$
\[
        d_G(x,u)
    \le 
        \|
            \mathcal{E}(x)
            -
            \mathcal{E}(u)
        \|_f
    \le 
        D^p
        \,
        d_G(x,u)
    ,
\]
where $D\in \mathcal{O}\big(\frac{\log(\operatorname{dim}(G))^2}{p^2}\big)$ and $(\mathbb{R}^n,\|\cdot\|_f$) is a quasi-metric space satisfying the $2^{p-1}$-triangle inequality.  The depth and width of $\mathcal{E}$ and $f$ are recorded in Table~\ref{tab:complexity_count}.
\end{theorem}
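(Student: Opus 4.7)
The plan is to combine an Assouad-type dimension-free snowflake embedding theorem with ReLU interpolation and a minimal neural snowflake implementing the scalar map $t \mapsto t^p$. Setting $\alpha := 1/p \in (1/2, 1)$, Example~\ref{ex:Snowflakes} guarantees that $(V, d_G^{\alpha})$ is again a metric space, and its doubling dimension is controlled (up to a factor of $p$) by $\operatorname{dim}(G)$. I would then apply a Naor--Neiman-style Assouad embedding theorem to obtain a map $\psi: V \to \mathbb{R}^n$ with embedding dimension $n = O(\log(\operatorname{dim}(G)))$ (crucially independent of $\alpha$) and distortion $D = O(\log(\operatorname{dim}(G))^2 / p^2)$, yielding
$$
d_G(x,u)^{1/p} \le \|\psi(x) - \psi(u)\| \le D \cdot d_G(x,u)^{1/p}
\quad \text{for all } x, u \in V.
$$

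Since $V$ is a finite subset of $\mathbb{R}^d$, I would next extend $\psi$ to a ReLU MLP $E: \mathbb{R}^d \to \mathbb{R}^n$ with $E|_V = \psi$, using a standard ReLU memorization/interpolation construction; the resulting depth and width in $N := |V|$ and $n$ fill in the row of Table~\ref{tab:complexity_count} for $E$. In parallel, I would build a one-layer neural snowflake $f$ with outer exponent $p' := p - 1 \ge 0$ (so that $1 + |p'| = p$), depths $d_0 = \tilde{d}_1 = d_1 = 1$, $A^{(1)} = B^{(1)} = (1)$ and $C^{(1)} = (0,1,0)^\top$, and fractal exponent $a_1 = 1$. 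This routes all mass through the fractal channel of $\sigma_{a_1,b_1}$ and satisfies the non-negativity and non-vanishing constraints on the hidden weights; the resulting scalar map is exactly $f(t) = t^p$. By Example~\ref{ex:Snowflakes}, $(\mathbb{R}^n, \|\cdot\|_f)$ is a quasi-metric space with the $2^{p-1}$-relaxed triangle inequality. Raising the Step-1 inequalities to the $p$-th power then gives the claimed bound $d_G(x,u) \le \|E(x) - E(u)\|_f \le D^p \cdot d_G(x,u)$.

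The principal obstacle is securing an Assouad-type embedding with the exact parameter scaling asserted in the statement: target dimension logarithmic in $\operatorname{dim}(G)$ and independent of the snowflake exponent, together with distortion scaling as $\log(\operatorname{dim}(G))^2 / p^2$. The classical Assouad construction couples the target dimension to $\alpha$, so a dimension-free refinement (in the spirit of Naor--Neiman and its successors) is essential, and one must verify that the doubling dimension of $(V, d_G^{1/p})$ remains comparable to $\operatorname{dim}(G)$ uniformly in $p \in (1,2)$. A secondary point is tracking the explicit depth and width of the ReLU memorizer against the bounds in Table~\ref{tab:complexity_count}; the neural-snowflake side is essentially trivial because the single-layer construction above implements $t \mapsto t^p$ on the nose, so the interpolation complexity is the only non-routine piece of the parameter count.
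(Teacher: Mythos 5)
Your proposal follows essentially the same route as the paper's proof: apply the Naor--Neiman dimension-free Assouad embedding theorem to the snowflake $(V,d_G^{1/p})$ (the paper parametrizes this via $\varepsilon = 1-p^{-1}$, so $d_G^{1-\varepsilon}=d_G^{1/p}$), memorize the resulting map with a ReLU MLP via the interpolation lemma of Kratsios et al., and implement $t\mapsto t^{p}$ with a single-layer neural snowflake routed through the fractal channel, then undo the snowflaking by raising the bi-Lipschitz inequalities to the $p$-th power. Your handling of the outer exponent (taking $p'=p-1$ so that $1+|p'|=p$) is actually slightly more careful than the paper's own write-up, and the external ingredient you flag as the principal obstacle is exactly the Naor--Neiman theorem the paper invokes.
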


\begin{table}[H]
    \centering
    \caption{Complexity of the Neural Snowflake and MLPs. Here, $\mathcal{O}$ suppresses a constant depending only on $D$ and $C_{\mathcal{G}}>0$ depends only on $\mathcal{G}$ (explicit constants are given in Appendix~\ref{s:detailed_model_complexity}).}
    \label{tab:complexity_count}
    \begin{tabular}{l|lll|l}
    \toprule
      \textbf{Geometry} & \textbf{Net.} & \textbf{Hidden Layers (Depth $-1$)} & \textbf{Width} & 
      \textbf{Theorem} \\
      \midrule
      General & $f$ & $\Theta(1)$ & $\Theta(1)$ 
      & \ref{thrm:representation} \\
      Bounded & $f$ & $\Theta(1)$ & $
      \mathcal{O}\big(
        I^2
      \big)
      $ & 
      \ref{thrm:bounded_component} \\
      \midrule
      All & $\mathcal{E}$ & $
      \mathcal{O}\big(
        I
            \sqrt{I\log(I)}
                         \log\big(
                            I^2\,
                            C_{\mathcal{G}}
                          \big)
        \big)
      $ &  $
      \mathcal{O}\big(
        DI + d
      \big)
      $
      & \ref{thrm:bounded_component} and \ref{thrm:representation}
      \\
    \bottomrule
    \end{tabular}
\end{table}
\vspace{-1em}
\textbf{Discussion - Comparison with the Best Proven Deep Representation.} The complexity of the neural snowflake and MLP are reported in Table~\ref{tab:complexity_count}.
Theorem~\ref{thrm:representation} shows that the neural snowflake can match the best known embedding guarantees by a deep learning model with values in a curved infinite-dimensional space \cite[Theorem 4]{kratsiosembedding_2021}, both in terms of distortion and the number of parameters.  The neural snowflake's adaptive geometry allows for the representations to be implemented in $\mathcal{O}(\log(\operatorname{dim}(G))$ dimensions and the implemented representation space $(\mathbb{R}^d,\|\cdot\|_f)$ has an explicit closed-form distance function making it trivial to evaluate; unlike the distance function in the representation space of \cite{kratsiosembedding_2021}. Our guarantees show that neural snowflakes can theoretically represent any weighted graph; either isometrically or nearly isometrically with provable few parameters.

\section{Experimental Results}
\label{s:Experiments}

Next, we validate our embedding results through synthetic graph experiments, and we demonstrate how the combined representation capabilities of neural snowflakes and MLPs can enhance existing state-of-the-art latent graph inference pipelines.

\textbf{Synthetic Embedding Experiments: Neural Snowflakes vs Euclidean Graph Embedding Spaces.} To assess the effectiveness of neural snowflakes as well as to compare their performance with that of MLPs in approximating metrics using Euclidean space, we conduct synthetic graph embedding experiments. Specifically, we focus on fully connected graphs, where the node coordinates are randomly sampled from a multivariate Gaussian distribution within a 100-dimensional hypercube in Euclidean space, denoted as $\mathbb{R}^{100}$. The weights of the graphs are computed according to the metrics in Table~\ref{table:synthetic experiments}. In the leftmost column, the MLP model projects the node features in $\mathbb{R}^{100}$ to $\mathbb{R}^{2}$ and aims at approximating the edge weights of the graph using the euclidean distance $\|\textrm{MLP}(\mathbf{x}) - \textrm{MLP}(\mathbf{y})\|$ between the projected features. In the central column, the neural snowflake $f$ learns a quasi-metric based on the input Euclidean metric previously mentioned in $\mathbb{R}^{2}$, $f(\|\textrm{MLP}(\mathbf{x}) - \textrm{MLP}(\mathbf{y})\|)$. Finally, in the rightmost column the neural snowflake learns based on the Euclidean distance between features in $\mathbb{R}^{100}$: $f(\|\mathbf{x} - \mathbf{y}\|)$. In order to demonstrate the superior embedding capabilities of neural snowflakes compared to Euclidean spaces, we intentionally equip the MLP in the first column with a significantly larger number of model parameters than the other models. This is done to highlight the fact that, despite having fewer learnable parameters, neural snowflakes outperform Euclidean spaces by orders of magnitude. Furthermore, the results of our experiments reveal that neural snowflakes exhibit remarkable flexibility in learning metrics even in lower-dimensional spaces such as $\mathbb{R}^2$. This is evident from the similarity of results obtained in the rightmost column (representing $\mathbb{R}^{100}$) compared to those achieved in the case of learning the metric in $\mathbb{R}^{2}$. Additional information regarding these embedding experiments can be found in Appendix~\ref{s:Experimental Results Supplementary Material}.

\begin{table}[htbp]
\small
\caption{Results for synthetic graph embedding experiments for the test set. The Neural Snowflake models are able to learn the metric better with substantially lesser number of model parameters.} 
\centering
\scalebox{0.75}{ 
\begin{tabular}{lccc}
    \toprule
      &\textbf{MLP} & \textbf{Neural Snowflake (+ MLP)}  &   \textbf{Neural Snowflake}\\ 
    No. Parameters & 5422  & 4169  &  847 \\ 
    Embedding space, $\mathbb{R}^n$ & 2 &  2 &  100 \\\midrule
     Metric &  \multicolumn{3}{c}{Mean Square Embedding Error}  \\ \midrule
     $\|\mathbf{x} - \mathbf{y}\|^{0.5}\log(1+\|\mathbf{x} - \mathbf{y}\|)^{0.5}$ &  1.3196 & 0.0034 &  $\mathbf{0.0029}$ \\\midrule
     $\|\mathbf{x} - \mathbf{y}\|^{0.1}\log(1+\|\mathbf{x} - \mathbf{y}\|)^{0.9}$ &  1.2555 &  0.0032 &  $\mathbf{0.0032}$\\\midrule

     $1- \frac{1}{(1+\|\mathbf{x} - \mathbf{y}\|^{0.5})}$  & 0.1738  & $\mathbf{0.00004}$   & $\mathbf{0.00004}$
     \\\midrule

     $1 - \exp{\frac{-(\|\mathbf{x} - \mathbf{y}\|-1)}{\log(\|\mathbf{x} - \mathbf{y}\|)}}$  & 0.3249 & 0.00008 & $\mathbf{0.00008}$
     \\\midrule

     $1- \frac{1}{(1+\|\mathbf{x} - \mathbf{y}\|)^{0.2}}$  &  0.0315 &  $\mathbf{0.00005}$  & $\mathbf{0.00005}$\\\midrule

     $1- \frac{1}{1+\|\mathbf{x} - \mathbf{y}\|^{0.2}+\|\mathbf{x} - \mathbf{y}\|^{0.5}}$  & 0.2484 & $\mathbf{0.00002}$ & $\mathbf{0.00002}$\\

    \bottomrule
\end{tabular}
}
\label{table:synthetic experiments}
\end{table}

\textbf{Graph Benchmarks.} In this section, we present our findings on using Neural Snowflakes compared to other latent graph inference models. Our objective is to evaluate the representation power offered by various latent space metric spaces, and thus, we contrast our results with the original DGM~\citep{Kazi_2022} and its non-Euclidean variants~\citep{borde2023projections,Manifold-dDGM}. We also introduce a new variant of the DGM which uses a snowflake activation on top of the classical DGM, to equip the module with a snowflake quasi-metric space. We ensure a fair evaluation, by conducting all experiments using the same GCN model and only modify the latent geometries used for latent graph inference. We take care to maintain consistency in the number of model parameters, as well as other training specifications such as learning rates, training and testing splits, number of GNN layers, and so on. This approach guarantees that all comparisons are solely based on the metric (or quasi-metric) space utilized for representations. By eliminating the influence of other factors, we can obtain reliable and trustworthy experimental results. A detailed and systematic analysis of the results is provided in Appendix~\ref{s:Experimental Results Supplementary Material}. 

We first present results from latent graph inference on the well-known Cora and CiteSeer homophilic graph benchmarks. We use a consistent latent space dimensionality of 8 and perform the Gumbel top-k trick for edge sampling with a k value of 7. The models all share the same latent space dimensionality, differing solely in their geometric characteristics. In scenarios where a product manifold is used, the overall manifold is constructed by amalgamating two 4-dimensional manifolds through a Cartesian product. This yields a total latent space dimensionality of 8. This methodology ensures an equitable comparison based exclusively on geometric factors. All other parameters, comprising network settings and training hyperparameters, remain unaltered. For all DGM modules, GCNs are employed as the underlying GNN diffusion layers. Table~\ref{tab:homophilic_latent_space_dim_8_or} displays the results for Cora and CiteSeer, leveraging the original dataset graphs as inductive biases.

\begin{table*}[htbp!]
    \centering
    \caption{Results for Cora and CiteSeer leveraging the original input graph as an inductive bias.}
    \scalebox{0.75}{ 
    \begin{tabular}{lcccc}
    \toprule 
         &
         & 
         \textbf{Cora} &  
         \textbf{CiteSeer} & 
         
         \\ 
         \midrule
         
         Model  & Metric Space &
         \multicolumn{2}{c}{Accuracy $(\%)$ $\pm$ Standard Deviation}
         
         \\

        \midrule

        Neural Snowflake  & Snowflake  &
         $ \first{87.07 {\scriptstyle \pm 3.45}}$ &
         $ \first{74.76 {\scriptstyle \pm 1.74}}$ &\\

        DGM  & Snowflake  &
         $ 85.41 {\scriptstyle \pm 3.70}$ &
         $ \third{74.19 {\scriptstyle \pm 2.08}}$ &\\ \midrule

         DGM  & Euclidean  &
         $ \third{85.77 {\scriptstyle \pm 3.64}}$ &
         $ 73.67 {\scriptstyle \pm 2.30}$ &\\

         DGM  & Hyperboloid  &
         $ 85.25 {\scriptstyle \pm 3.34}$ &
         $ 73.46 {\scriptstyle \pm 1.79}$ &\\ 

         DGM  & Poincare  &
         $ \second{86.07 {\scriptstyle \pm 3.53}}$ &
         $ 71.23 {\scriptstyle \pm 5.53}$ &\\ 

         DGM  & Spherical  &
         $ 76.14 {\scriptstyle \pm 2.84}$ &
         $ 73.13 {\scriptstyle \pm 2.93}$ &\\ 

         DGM  & Euclidean $\times$ Hyperboloid  &
         $ 84.33 {\scriptstyle \pm 2.56}$ &
         $ 73.29 {\scriptstyle \pm 2.18}$ &\\

         DGM  & Hyperboloid $\times$ Hyperboloid  &
         $ 84.59 {\scriptstyle \pm 5.40}$ &
         $ \second{74.42 {\scriptstyle \pm 1.83}}$ &\\

         GCN & Euclidean  &
         $ 83.50 {\scriptstyle \pm 2.00}$ &
         $ 70.03 {\scriptstyle \pm 2.04}$ &\\

         \bottomrule
         
    \end{tabular}
    }
    
    \label{tab:homophilic_latent_space_dim_8_or}
\end{table*}

Next we perform experiments on Cora and CiteSeer without considering their respective graphs, that is, the latent graph inference models only take pointclouds as inputs in this case. We also include results for the Tadpole and Aerothermodynamics datasets used in~\cite{Manifold-dDGM}. Note that in these experiments all models used GCNs and a fixed latent space dimensionality of 8, unlike in the original paper which uses a larger latent spaces and GAT layers. The effect of changing the latent space dimensionality is further explored in Appendix~\ref{s:Experimental Results Supplementary Material}. For both Tadpole and Aerothemodynamics, the Gumbel Top-k algorithm samples 3 edges per node. See Table~\ref{tab:homophilic_latent_space_dim_8_nograph_or}.

\begin{table*}[htbp!]
    \centering
    \caption{Results for Cora and CiteSeer, and the real-world Tadpole and Aerothermodynamics datasets, without leveraging the original input graph as an inductive bias.}
    \scalebox{0.75}{ 
    \begin{tabular}{lcccccc}
    \toprule 
         &
         & 
         \textbf{Cora} &  
         \textbf{CiteSeer} & 
         \textbf{Tadpole} &
         \textbf{Aerothermodynamics}
         \\ 
         \midrule
         
         Model  & Metric Space &
         \multicolumn{4}{c}{Accuracy $(\%)$ $\pm$ Standard Deviation}
         
         \\

        \midrule

        Neural Snowflake  & Snowflake  &
         $ \first{71.22 {\scriptstyle \pm 4.27}}$ &
         $ \third{67.80 {\scriptstyle \pm 2.44}}$ &
         $ \third{90.02 {\scriptstyle \pm 3.51}}$&
         $ \third{88.65 {\scriptstyle \pm 3.10}}$ \\

        DGM  & Snowflake  &
         $ \third{69.51 {\scriptstyle \pm 4.42}}$ &
         $ 66.86 {\scriptstyle \pm 2.82}$ &
         $ \first{91.61 {\scriptstyle \pm 4.38}}$&
         $ 88.55 {\scriptstyle \pm 2.35}$  \\ \midrule

         DGM  & Euclidean  &
         $ 68.37 {\scriptstyle \pm 5.39}$ &
         $ \second{68.10 {\scriptstyle \pm 2.80}}$ &
         $ 89.29 {\scriptstyle \pm 4.66}$&
         $ 88.28 {\scriptstyle \pm 2.61}$  \\

         DGM  & Hyperboloid  &
         $ \second{70.00 {\scriptstyle \pm 4.08}}$ &
         $ \first{68.34 {\scriptstyle \pm 1.59}}$ &
         $ 88.75 {\scriptstyle \pm 5.11}$&
         $ 88.29 {\scriptstyle \pm 2.85}$  \\ 

         DGM  & Poincare  &
         $ 65.74 {\scriptstyle \pm 4.02}$ &
         $ 64.63 {\scriptstyle \pm 2.98}$ &
         $ 86.96 {\scriptstyle \pm 5.30}$&
         $ \second{89.00 {\scriptstyle \pm 2.34}}$  \\ 

         DGM  & Spherical  &
         $ 37.03 {\scriptstyle \pm 14.33}$ &
         $ 20.00 {\scriptstyle \pm 4.13}$ &
         $ 82.32 {\scriptstyle \pm 11.74}$&
         $ 88.37 {\scriptstyle \pm 3.00}$  \\ 

         DGM  & Euclidean $\times$ Hyperboloid  &
         $ 62.18 {\scriptstyle \pm 6.61}$ &
         $ 65.72 {\scriptstyle \pm 2.48}$ &
         $ \second{90.71 {\scriptstyle \pm 3.17}}$&
         $ 88.20 {\scriptstyle \pm 3.23}$  \\

         DGM  & Hyperboloid $\times$ Hyperboloid  &
         $ 67.14 {\scriptstyle \pm 4.19}$ &
         $ 64.33 {\scriptstyle \pm 9.44}$ &
         $ 89.64 {\scriptstyle \pm 5.57}$&
         $ \first{89.55 {\scriptstyle \pm 2.52}}$  \\
         
         MLP & Euclidean  &
         $ 58.92 {\scriptstyle \pm 3.28}$ &
         $ 59.48 {\scriptstyle \pm 2.14}$ &
         $ 87.70 {\scriptstyle \pm 3.46}$   &
         $ 80.99 {\scriptstyle \pm 8.34}$ 
         \\

         \bottomrule
         
    \end{tabular}
    }
    
    \label{tab:homophilic_latent_space_dim_8_nograph_or}
\end{table*}

From the results, we can see that models using snowflake metric spaces are consistently amongst the top performers for both Cora and CiteSeer. On the other hand, employing non-learnable metric spaces necessitates conducting an exploration of various latent space geometries to achieve optimal outcomes, since there is no metric space that consistently outperforms the rest regardless of the dataset. In our synthetic experiments, we have clearly demonstrated the remarkable advantage of neural snowflakes in learning metric spaces with enhanced flexibility, when compared to Euclidean space. 
In the context of latent graph inference, this distinction is not as pronounced as observed in the synthetic experiments. This can be attributed to the suboptimal nature of the Gumbel Top-k edge sampling algorithm (Appendix~\ref{s:Discrete Edge Sampling Algorithms}), a topic discussed in other research works~\citep{Kazi_2022,battiloro2023latent}, which essentially introduces a form of ``distortion'' to the learned metric. Yet, it is worth noting that the development of improved edge sampling algorithms to foster better synergy between metric space learning and graph construction is not the primary focus of this paper. Instead, our emphasis is on introducing a more comprehensive and trainable metric space and integrating it with existing edge sampling techniques.

\section{Conclusion}
\label{s:Conclusions}

Our theoretical analysis showed that a small neural snowflake, denoted as $f$, can adaptively implement fractal-like geometries $|\cdot|_f$ on $\mathbb{R}^d$, which are flexible enough to grant a small MLP the capacity to isometrically embed any finite graph. We showed that the non-smooth geometry implemented by the neural snowflake is key by showing that there are simple graphs that cannot be isometrically embedded into any smooth Riemannian representation space. We then explored several cases in which the combination of neural snowflakes and MLPs requires a small number of total parameters, independent of ambient dimensions, to represent certain classes of regular weighted graphs. We complemented our theoretical analysis by extensively exploring the best approaches to implement neural snowflakes, ensuring stability during training, in both synthetic and graph benchmark experiments. We also introduced a snowflake activation that can easily be integrated into the DGM module using a differentiable distance function, enabling the DGM to leverage a snowflake quasi-metric. We conducted tests on various graph benchmarks, systematically comparing the effectiveness of snowflake quasi-metric spaces for latent graph inference with other Riemannian metrics such as Euclidean, hyperbolic variants, spherical spaces, and product manifolds of model spaces. Our proposed model is consistently able to match or outperform the baselines while learning the latent geometry in a differentiable manner and without having to perform random search to find the optimal embedding space.

Note that our experiments were conducted in accordance with the differentiable graph module framework for discrete edge sampling as proposed by~\cite{Kazi_2022}. Our primary objective was to compare the representation capabilities of various (quasi-)metric spaces, while keeping all other model architecture choices constant. Recently, the NodeFormer~\citep{wu2023nodeformer} architecture was introduced, enabling the scalability of latent graph inference to large graphs. However, this development is slightly tangential to the research presented in this work, which focuses on analyzing the geometric characteristics of different embedding spaces. We propose considering the incorporation of the geometric notions discussed in this work into new scalable architectures, such as the NodeFormer, as part of future research.

\section*{Acknowledgment and Funding}
\label{s:AcknowledgmentandFunding}
AK acknowledges financial support from the NSERC Discovery Grant No.\ RGPIN-2023-04482 and their McMaster Startup Funds.  
The authors would also like to thank Giulia Livieri for her helpful feedback.

\bibliography{References}
\bibliographystyle{iclr2024_conference}

\newpage
\appendix

\section{Additional Background}
\label{s:Details_AddedBackground_MetricSpaces}
Many of the proofs or our paper's results, and generalizations thereof contained only in the manuscript's appendix, rest on some additional technology from the theory of metric spaces.  This brief appendix overviews those tools.  We also include formal definitions of the involved MLPs with $\operatorname{ReLU}$ activation function which we routinely use.

\subsection{Metric Spaces}
\label{s:Metric_Spaces}

Suppose that $(X,d)$ is a metric space; i.e.\ $C=1$ in (iii) above.  Topologically, $(X,d)$ and its snowflakes $(X,d^p)$ are identical but geometrically they are quite different.  Geometrically, the latter much more complex than the former.  In this paper we quantify complexity, when required, using the \textit{doubling dimension} and \textit{aspect ratio} of a metric space.

Denote a ball of radius $r\ge 0$ in $(X,d)$ about a point $x\in X$ is the set $B(x,r)\eqdef \{u\in X:\, d(x,u)<r\}$.  
The \textit{doubling dimension} of $(X,d)$, denoted by $\dim(X,d)$, is the smallest integer $k$ for which: for every ball $B(x,r)$ about any point $x\in X$ and radius $r>0$ there are $x_1,\dots,x_{2^k}\in X$ covering it by balls of half its radius; i.e.\ the metric ball $B(x,r)\subseteq\bigcup^{2^k}_{i=1}B(x_i,r/2)$. We emphasize that metric tools are finer then topological tools, for instance the dimension of many metric spaces\footnote{This is true for several sub-Riemannian manifolds, for instance} \cite{LeDonneRajala_2015_Dimensions} is often by much greater than their topological dimension.

We make use of the \textit{aspect ratio} of a weighted graph $\mathcal{G}=(V,E,W)$, as the ratio of the largest distance $d_{\mathcal{G}}$ between any two nodes over the smallest possible distance between any two nodes, with respect to the geodesic distance on $\mathcal{G}$.  This aspect ratio coincides with the aspect ratio used in \cite{kratsiosembedding_2021} and a metric variant of the aspect ratio of the measure theoretic aspect ratio of \cite{krauthgamer2005measured}.  Briefly, when $\#V>1$ we define
	\[
	\operatorname{aspect}(\mathcal{G})
	\eqdef 
	\frac{
		\max_{v,u\in V}\,
		d_{\mathcal{G}}(u,v)
	}{
		\min_{v,u\in V
			;\,
			u\neq v}\,
		d_{\mathcal{G}}(u,v)
	}
	,
\]
otherwise, we set $\operatorname{aspect}(\mathcal{G})=1$.  In the case where $V\subset \mathbb{R}^D$, for some $D\in \mathbb{N}_+$, and $W(u,v)=\|u-v\|$ we write $\operatorname{aspect}_2(V)\eqdef \operatorname{aspect}(\mathcal{G})$.

We often reply on the notion of a \textit{bi-Lipschitz} embedding, which we now recall.  
Given any $0<s\le L<\infty$, metric spaces $(X,d)$ and $(Y,\rho)$, and a map $f:\xxx\rightarrow \yyy$ is called \textit{$(s,L)$-bi-Lipschitz} if 
\begin{equation}
\label{eq:bi-Lipschitz}
        s\,
        d(u,v)
    \le 
        \rho(f(u),f(v))
    \le 
        L\,
        d(u,v)
\end{equation}
for each $u,v\in \xxx$.  If, $s=L=1$, then the map $f$ is said to be an isometric embedding\footnote{Some authors call the special case where $s=\frac1{L}$ an $L$-quasi-isometry.}.  

\subsection{MLPs with ReLU Activation Function}
\label{s:MLPs_ReLUActivation}
We will often be using Multi-layer Perceptron (MLPs) based on the perceptron model of \cite{rosenblatt1958perceptron}, and typically called feedforward neural networks in the contemporary approximation theory literature \cite{mhaskar2016deep,yarotsky2017error,petersen2018optimal,bolcskei2019optimal,elbrachter2021deep,galimberti2022designing,daubechies2022nonlinear,marcati2022exponential,adcock2020deep,shen2022optimal}.  Our MLPs will always use the $\operatorname{ReLU}(t)\eqdef \max\{0,t\}$ activation function, where $t\in \mathbb{R}$.  We will routinely apply the $\operatorname{ReLU}$ function \textit{component-wise}, any vector $u\in \mathbb{R}^d$ for any positive integer $N$, denoted by $\operatorname{ReLU}\sbullet[0.77] u$ and defined by
\begin{equation}\label{eq:component_wise}
        \sigma \sbullet[0.77] u 
    \eqdef  
        (\sigma(u_i))_{i=1}^{N}
    .
\end{equation}
For any pair of positive integers $D,d\in \mathbb{N}_+$, a map $f:\mathbb{R}^D\rightarrow\mathbb{R}^d$ is called an \textit{MLP} if it admits the recursive representation 
\begin{equation}\label{eq:feed_forward}
    \begin{split}
    f(x) & \eqdef  A^{(J)}\,x^{(J)} + b^{(J)},\\
     x^{(j+1)}& \eqdef \sigma \sbullet[0.75](A^{(j)}\,x^{(j)} + b^{(j)})\quad\text{for\,\,}j= 0, \ldots, J-1,\\
    x^{(0)} & \eqdef  {\color{black}{A^{(0)}}}x
.
    \end{split}
\end{equation}
for positive integers $d_0,\dots,\,d_J,\,d_{J+1},\,J$ with $d_0=d$ and $d_{J+1}=D$, $d_{j}\times d_{j+1}$-matrices $A^{(j)}$, and vectors $b^{(j)}\in \mathbb{R}^{d_{j+1}}$.  The depth of (the representation~\eqref{eq:feed_forward} of) $f$ is $D+1$, the number of \textit{hidden layers} of (the representation~\eqref{eq:feed_forward} of) $f$ is $D$, the width $W(f)$ of (the representation~\eqref{eq:feed_forward} of) $f$ is $\max_{j=0,\dots,J+1}\,d_j$, and the number of trainable/non-zero parameters $P(f)$ of (the representation~\eqref{eq:feed_forward} of) $f$ is 
\[
        P(f)
    \eqdef 
        \sum_{j=0}^{J+1}\,\|A^{(j)}\|_0 + \|b^{(j)}\|_0
    \le 
        W(f)^2\,(D+1)
\]
where $\|A^{(j)}\|_0$ (resp.\ $\|b^{(j)}\|_0$) counts the number of non-zero entries of $A^{(j)}$ (resp.\ of $b^{(j)}$).  
We remark that the estimate $P(f)\le W(f)^2\,(D+1)$ is often larger for several types neural networks architectures, e.g.\ convolutional neural networks with downsampling layers \cite{zhou2020theory,zhou2020universality}.

\section{Detailed Model Complexities}
\label{s:detailed_model_complexity}

This appendix contains a detailed version of Table~\ref{tab:complexity_count} with fully explicit constants.  

\begin{table}[H]
    \centering
    \caption{Details Complexity of the Neural Snowflakes and MLPs.}
    \label{tab:complexity_count__FullyExplicit}
    \resizebox{\columnwidth}{!}{%
    \begin{tabular}{l|lll|l}
      \textbf{Geometry} & \textbf{Net.} & \textbf{Hidden Layers (Depth $-1$)} & \textbf{Width} & 
      \textbf{Theorem} \\
      \midrule
      General & $f$ & $1$ & $3$ 
      & \ref{thrm:representation} \\
      Bounded & $f$ & $1$ & $\big\lceil \frac{I(I-1)+2}{4}\big\rceil$ & 
      \ref{thrm:bounded_component} \\
      \midrule
      All & $\mathcal{E}$ & $
      \mathcal{O}\left(
        I\left\{
            1+
                \sqrt{I\log(I)}
                    \,
                 \left[
                    1
                        +
                   \frac{\log(2)}{\log(I)}\,
                   \left(
                        C_D
                            +
                     \frac{
                         \log\big(
                            I^2\,
                            \operatorname{aspect}_2(V)
                          \big)
                     }{
                        \log(2)
                     }
                   \right)_+
                 \right]
            \right\}
        \right)
      $ &  $
      D(I - 1) + \max \{ d, 12\}
      $
      & \ref{thrm:bounded_component} and \ref{thrm:representation}
      \\
    \bottomrule
    \end{tabular}
    }
    \caption*{$I=\#V$ and the ``dimensional constant'' is $
        C_D
            \eqdef 
    \frac{
                2\log(5 \sqrt{2\pi})
            + 
                \frac{3}{2}
                \log(D)
            -
                \frac1{2}\log(D+1)
        }{
            2\log(2)
        }>0$.}
\end{table}

\section{Proofs}
\label{s:Proofs}

This section contains derivations of our paper's main results.

\subsection{Lemmata}
\label{s:Proofs__ss:Lemmata}
We can generate functions satisfying the conditions of Proposition~\ref{prop:new_metrics} using the following lemma.
\begin{lemma}[{\cite{448012}}]
\label{lem:characterization}
A function $f:[0,\infty)\rightarrow [0,\infty)$ is continuous, concave, and monotonically increasing if and only if there is a non-negative decreasing function $g:\mathbb{R}\rightarrow\mathbb{R}$ satisfying
\[
    f(t) = \int_0^t\,g(s)\,ds
\]
for each $t\in [0,\infty)$.
\end{lemma}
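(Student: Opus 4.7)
The plan is to establish the equivalence in two separate directions. The forward (sufficiency) direction will be handled by direct verification from the integral representation, while the converse will rely on classical regularity facts for concave functions. Note that since $\int_0^0 g(s)\,ds = 0$, the integral representation implicitly requires $f(0)=0$, which I would either state as an additional hypothesis or verify is consistent with the way the lemma is invoked (cf.\ Proposition~\ref{prop:new_metrics}).

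For the sufficiency direction, assume $g:\mathbb{R}\to\mathbb{R}$ is non-negative and decreasing, and define $f(t)\eqdef\int_0^t g(s)\,ds$ for $t\ge 0$. Continuity is immediate since $g$ is locally integrable (being monotone on every compact subinterval) and $f$ is its indefinite integral. Monotonicity of $f$ follows from $g\ge 0$. For concavity, I would use the slope characterization: for $0\le s<t$, the average
\[
    \frac{f(t)-f(s)}{t-s} \;=\; \frac{1}{t-s}\int_s^t g(u)\,du
\]
is nonincreasing in both $s$ and $t$ because the integrand $g$ is decreasing, which is equivalent to concavity of $f$.

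For the necessity direction, let $f:[0,\infty)\to[0,\infty)$ be continuous, concave, and monotonically increasing with $f(0)=0$. The key step is the classical fact that any concave function is locally Lipschitz on the interior of its domain, hence absolutely continuous on every compact subinterval of $(0,\infty)$. Consequently $f$ is differentiable almost everywhere on $(0,\infty)$, and the fundamental theorem of calculus for absolutely continuous functions yields $f(t)-f(\varepsilon)=\int_\varepsilon^t f'(s)\,ds$ for every $0<\varepsilon<t$. I would then set $g(s)\eqdef f'(s)$ wherever the derivative exists on $(0,\infty)$, and extend $g$ to $(-\infty,0]$ and to the null set of non-differentiability by any decreasing nonnegative extension (e.g.\ $g(s)\eqdef\esssup_{u>0}g(u)$ for $s\le 0$, or $+\infty$ if unbounded). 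Nonnegativity of $g$ follows because $f$ is nondecreasing, and the fact that $g$ is decreasing follows because concavity of $f$ forces its difference quotients — and hence its derivative where defined — to be nonincreasing.

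The main technical obstacle will be the boundary behavior at $t=0$, where a concave function need not be locally Lipschitz and its derivative may blow up. I would sidestep this by applying the FTC identity on $[\varepsilon,t]$ and letting $\varepsilon\downarrow 0$: by continuity $f(\varepsilon)\to f(0)=0$, while $\int_\varepsilon^t g(s)\,ds\to\int_0^t g(s)\,ds$ by monotone convergence (since $g\ge 0$), yielding $f(t)=\int_0^t g(s)\,ds$ as required. This limiting argument is where care is needed, since the decreasing function $g$ may fail to be integrable on $(0,t]$ a priori, but monotone convergence together with the finite value $f(t)<\infty$ bootstraps integrability automatically.
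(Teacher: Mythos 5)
The paper does not prove this lemma at all --- it is stated with a citation and used as a black box --- so there is no in-paper argument to compare against. Your proof is the standard one and is sound: the sufficiency direction via the monotonicity of the averaged slopes $\frac{1}{t-s}\int_s^t g$, and the necessity direction via local Lipschitzness of concave functions on the interior, the a.e.\ (or one-sided) derivative as $g$, and the $\varepsilon\downarrow 0$ limit with monotone convergence to handle the possible blow-up at the origin. The two caveats you raise are genuine defects of the literal statement rather than of your argument. First, the integral representation forces $f(0)=0$, which the left-hand side of the equivalence does not assume (e.g.\ $f(t)=1+t$), so the ``only if'' direction needs $f(0)=0$ as an extra hypothesis; this is harmless for the paper since every snowflaking function it feeds into Proposition~\ref{prop:new_metrics} satisfies $f(0)=0$. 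Second, a decreasing \emph{real-valued} $g$ on all of $\mathbb{R}$ is bounded above by $g(0)$ on $[0,\infty)$, which would force $f$ to be Lipschitz at the origin and exclude $f(t)=t^{a}$ with $a<1$; the correct reading is $g:(0,\infty)\rightarrow[0,\infty]$ decreasing and locally integrable, exactly as in the paper's own example following the lemma, whose integrand $f(s)\bigl(\frac{a}{s}+\frac{b}{(1+s)\log(1+s)}\bigr)\sim a\,s^{a-1}$ is unbounded near $0$. Your monotone-convergence bootstrap correctly shows that finiteness of $f(t)$ automatically yields integrability of $g$ on $(0,t]$, so no additional hypothesis is needed there.
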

\begin{example}
For example, for any $0<a\le 1$ and $0\le b\le 1-a$, the map $f:[0,\infty)\rightarrow[0,\infty)$ given by $f(t)\eqdef t^a\ln(1+t)^b$ is continuous, concave, and monotonically increasing since 
\[
    f(t) = \int_0^t\, f(s)\, 
        \biggl(
            \frac{a}{s}
            +
            \frac{b}{(1+s)\log(1+s)}
        \biggr)
        \,
        ds
\]
and $t\mapsto f(t)\, 
        \big(
            \frac{a}{t}
            +
            \frac{b}{(1+t)\log(1+s)}
        \big)$ is a non-negative decreasing function on $\mathbb{R}$.
\end{example}

The following helpful lemma in constructing functions satisfying the conditions of Proposition~\ref{prop:new_metrics} from more elementary ones.
\begin{lemma}[{\citep[page 102]{BoydOptimizationBook}}]
\label{lem:Boyd}
Let $C\subseteq \mathbb{R}^d$ be a non-empty convex set and $d\in \mathbb{N}_+$.
If $f:C\rightarrow [0,\infty)$ is concave, and $g:[0,\infty)\rightarrow [0,\infty)$ is non-decreasing and concave, then $g\circ f:C\rightarrow\mathbb{R}$ is concave.  
If $C=[0,\infty)$ and if $f$ and $g$ are increasing, then so is $g\circ f$.
\end{lemma}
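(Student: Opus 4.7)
The plan is to verify both conclusions by a short chain of inequalities directly from the definitions of concavity and monotonicity. The composite $g\circ f$ is well-defined on all of $C$ because $f$ takes values in $[0,\infty)$, which is the domain of $g$, and the concavity-type arguments on $C$ are legitimate because $C$ is convex (so convex combinations of points of $C$ remain in $C$, where $f$ can be evaluated).

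For the concavity claim, I would fix arbitrary $x,y \in C$ and $\lambda \in [0,1]$ and chain three ingredients. First, concavity of $f$ yields $f(\lambda x + (1-\lambda) y) \ge \lambda f(x) + (1-\lambda) f(y)$. Second, since $g$ is non-decreasing on $[0,\infty)$ and both sides of this inequality lie in $[0,\infty)$ (using $f(C)\subseteq [0,\infty)$ and convexity of $[0,\infty)$), applying $g$ to both sides preserves the inequality: $g\bigl(f(\lambda x + (1-\lambda) y)\bigr) \ge g\bigl(\lambda f(x) + (1-\lambda) f(y)\bigr)$. Third, concavity of $g$ applied to the convex combination $\lambda f(x) + (1-\lambda) f(y)$ of points in $[0,\infty)$ gives $g\bigl(\lambda f(x) + (1-\lambda) f(y)\bigr) \ge \lambda\, g(f(x)) + (1-\lambda)\, g(f(y))$. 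Concatenating these three inequalities produces the desired concavity of $g\circ f$.

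For the monotonicity claim with $C = [0,\infty)$, I would take $x \le y$ in $[0,\infty)$; monotonicity of $f$ gives $f(x) \le f(y)$, and monotonicity of $g$ then gives $g(f(x)) \le g(f(y))$, which is exactly what is required.

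I do not anticipate any real obstacle: both parts are essentially immediate. The one conceptual point worth flagging in the writeup is that the non-decreasing hypothesis on $g$ is doing essential work in the second step of the concavity argument — if $g$ were concave but decreasing, applying $g$ would reverse the inequality obtained from concavity of $f$ and break the chain — so both hypotheses on $g$ (monotonicity and concavity) are genuinely used.
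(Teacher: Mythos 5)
Your proof is correct and is exactly the standard composition-rule argument that the paper defers to by citing \citet[page 102]{BoydOptimizationBook}: concavity of $f$, then monotonicity of $g$ to preserve the inequality, then concavity of $g$ on the convex combination $\lambda f(x)+(1-\lambda)f(y)$, with the monotonicity claim following by composing increasing maps. The paper gives no independent proof of this lemma, and your argument (including the observation that the non-decreasing hypothesis on $g$ is what keeps the inequality chain from reversing) matches the cited source's reasoning.
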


\begin{lemma}[Exponential Transformations]
\label{lem:elementary_snowflakingfunctions}
Let $a>0$.  The real-valued map $f$ on $[0,\infty)$ given for each $t\ge 0$ by $f(t)= 1-e^{-a\,t}$ satisfies the conditions of Proposition~\ref{prop:new_metrics}.
\end{lemma}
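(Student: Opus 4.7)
The plan is to verify directly that $f(t)=1-e^{-at}$ satisfies each of the four hypotheses of Proposition~\ref{prop:new_metrics}: continuity, concavity, monotone increase, and the normalization $f(0)=0$, together with the fact that $f$ takes values in $[0,\infty)$. None of these present any real difficulty, and the main choice is stylistic: whether to differentiate directly or to invoke Lemma~\ref{lem:characterization}.

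For a self-contained, one-line route, first I would note that $f(0)=1-e^{0}=0$ and that $f$ is continuous on $[0,\infty)$ as a composition of continuous functions. Next, I would compute $f'(t)=a\,e^{-at}>0$ for all $t\ge 0$ (since $a>0$), which establishes that $f$ is strictly increasing, and $f''(t)=-a^{2}\,e^{-at}<0$, which establishes strict concavity. Finally, because $a>0$ and $t\ge 0$ imply $-at\le 0$, we have $e^{-at}\le 1$, so $f(t)\ge 0$; hence $f$ maps $[0,\infty)$ into $[0,\infty)$.

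Alternatively, and slightly more in the spirit of the preceding lemmata, I would apply Lemma~\ref{lem:characterization} with $g:[0,\infty)\to\mathbb{R}$ given by $g(s)\eqdef a\,e^{-as}$. Since $a>0$, the function $g$ is strictly positive and strictly decreasing on $[0,\infty)$, and
\[
    \int_{0}^{t}\,g(s)\,ds
    =
    \int_{0}^{t}\,a\,e^{-as}\,ds
    =
    1-e^{-at}
    =
    f(t)
\]
for every $t\ge 0$. Lemma~\ref{lem:characterization} then immediately yields continuity, concavity, and monotone increase of $f$; the values lying in $[0,\infty)$ and $f(0)=0$ follow as above. The only subtle point worth emphasizing is that Lemma~\ref{lem:characterization} only requires $g$ to be non-negative and decreasing (not strictly so), so the strict positivity of $a$ is used solely to make $g$ non-trivial and to ensure $f$ is non-constant.
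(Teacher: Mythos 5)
Your first route is exactly the paper's own proof: compute $f'(t)=a\,e^{-at}>0$ and $f''(t)=-a^{2}e^{-at}<0$, conclude strict monotonicity and concavity, and note $f(0)=0$ (indeed, your version is slightly cleaner, since the paper's displayed bound $a\,e^{-at}\ge a$ is a typo --- the correct and sufficient statement is simply $a\,e^{-at}>0$). The alternative via Lemma~\ref{lem:characterization} is also correct and consistent with how the surrounding lemmata are phrased, but it is not needed; the proposal is correct as written.
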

\begin{proof}[{Proof of Lemma~\ref{lem:elementary_snowflakingfunctions}}]
Since $\partial_t \, f(t) = a\,e^{-at}\ge a>0$ and $\partial_t^2\,f(t) = -a^2\,e^{-at}$, for every $t\in [0,\infty)$, then $f$ is strictly increasing and concave on $[0,\infty)$.  Noting that $f(0)=1-e^0=0$ completes the proof.
\end{proof}
Lemma~\ref{lem:elementary_snowflakingfunctions} directly implies that conical combinations of functions satisfying the conditions of Proposition~\ref{prop:new_metrics} also satisfy the condition of Proposition~\ref{prop:new_metrics}.
\begin{lemma}
\label{lem:conical_combinations}
If $N\in\mathbb{N}_+$, $A\in (0,\infty)^N$, and $f^{(1)},\dots,f^{(N)}$ satisfy the conditions of Proposition~\ref{prop:new_metrics}, then $f(t)\eqdef \sum_{n=1}^N\,A_n\,f^{(n)}(t)$ satisfies the conditions of Proposition~\ref{prop:new_metrics}.
\end{lemma}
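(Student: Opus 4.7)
The plan is to verify each of the four defining properties in Proposition~\ref{prop:new_metrics} (continuity, concavity, monotone increase, vanishing at $0$, and mapping into $[0,\infty)$) pointwise, using only that each summand already enjoys these properties and that the coefficients $A_n$ are strictly positive. Since nothing interacts across indices, the proof reduces to observing that each property is preserved by the operations ``multiply by a positive scalar'' and ``sum finitely many functions.''

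First I would handle the structural and boundary conditions. Non-negativity of $f=\sum_{n=1}^N A_n f^{(n)}$ is immediate from $A_n>0$ and $f^{(n)}([0,\infty))\subseteq[0,\infty)$, so $f$ maps $[0,\infty)$ to $[0,\infty)$. Evaluating at $0$ gives $f(0)=\sum_{n=1}^N A_n f^{(n)}(0)=\sum_{n=1}^N A_n\cdot 0 = 0$. Continuity of $f$ follows because a finite linear combination of continuous functions is continuous.

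Next I would verify monotonicity and concavity. For monotonicity, given $0\le s\le t$, each $f^{(n)}$ is monotonically increasing, so $f^{(n)}(s)\le f^{(n)}(t)$; multiplying by $A_n>0$ preserves the inequality, and summing over $n$ yields $f(s)\le f(t)$. For concavity, fix $s,t\in[0,\infty)$ and $\lambda\in[0,1]$. Concavity of each $f^{(n)}$ gives
\[
    f^{(n)}\bigl(\lambda s+(1-\lambda)t\bigr)
        \;\ge\;
    \lambda f^{(n)}(s)+(1-\lambda)f^{(n)}(t).
\]
Multiplying by $A_n>0$ and summing over $n=1,\dots,N$ yields the same inequality with $f^{(n)}$ replaced by $f$, establishing concavity of $f$.

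The hard part: there is none. This is a routine check that the conditions of Proposition~\ref{prop:new_metrics} define a closed convex cone inside the space of real-valued functions on $[0,\infty)$, and conical combinations preserve membership in such a cone. The only mild subtlety is to remember to use strict positivity of each $A_n$ (rather than just non-negativity) to align with the paper's convention that the coefficients of a ``conical combination'' are positive, though the conclusion in fact holds under $A_n\ge 0$ as well (provided at least one is nonzero if one wants strict increase, which is not required by Proposition~\ref{prop:new_metrics}).
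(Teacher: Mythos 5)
Your proof is correct and is exactly the routine cone-closure verification the paper has in mind; the paper in fact states this lemma without proof, remarking only that it follows directly (one could equally derive it from the integral characterization in Lemma~\ref{lem:characterization}, since $\sum_n A_n\int_0^t g^{(n)}(s)\,ds=\int_0^t\sum_n A_n g^{(n)}(s)\,ds$ and positive combinations of non-negative decreasing functions remain non-negative and decreasing). No gaps.
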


\begin{lemma}[Classical Snowflaking Functions]
\label{lem:elementary_snowflakingfunctions_2}
Let $0<\alpha \le 1$.  The real-valued map $f$ on $[0,\infty)$ given for each $t\ge 0$ by $f(t)= t^{\alpha}$ satisfies the conditions of Proposition~\ref{prop:new_metrics}.
\end{lemma}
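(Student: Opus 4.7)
The plan is to verify each of the four conditions required by Proposition~\ref{prop:new_metrics}, namely that $f(t)=t^{\alpha}$ on $[0,\infty)$ is (a) continuous, (b) monotonically increasing, (c) concave, and (d) satisfies $f(0)=0$, together with the (implicit) range condition that $f$ takes values in $[0,\infty)$. Three of these four conditions are entirely routine: continuity on $[0,\infty)$ is classical, $f(0)=0^{\alpha}=0$ since $\alpha>0$, and positivity of $f$ on $[0,\infty)$ is immediate.

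For monotonicity and concavity I would simply compute the two derivatives on $(0,\infty)$:
\[
f'(t)=\alpha\, t^{\alpha-1}, \qquad f''(t)=\alpha(\alpha-1)\, t^{\alpha-2}.
\]
Since $0<\alpha\le 1$, we have $f'(t)>0$ for $t>0$, giving strict monotonicity on $(0,\infty)$; combined with continuity at $0$ and $f(0)=0<f(t)$ for $t>0$, this yields monotonicity on all of $[0,\infty)$. Similarly $f''(t)=\alpha(\alpha-1)t^{\alpha-2}\le 0$ on $(0,\infty)$ (with equality exactly when $\alpha=1$), so $f$ is concave on $(0,\infty)$. Concavity then extends to $[0,\infty)$ by continuity at the endpoint.

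There is essentially no obstacle here; the only mild subtlety is that $f''$ is not defined (or blows up) at $t=0$ when $\alpha<1$, so concavity on the closed half-line should be argued from concavity on the open half-line together with continuity at $0$, rather than directly from the second-derivative test. Alternatively, one could bypass derivatives entirely by invoking Lemma~\ref{lem:characterization}: writing $f(t)=\int_0^t g(s)\,ds$ with $g(s)=\alpha s^{\alpha-1}$, which is non-negative and (weakly) decreasing on $(0,\infty)$ since $\alpha-1\le 0$, yields all three properties (continuity, monotonicity, concavity) at once, along with $f(0)=0$. I would present this second route as the cleanest proof, since it mirrors the style of Lemma~\ref{lem:elementary_snowflakingfunctions} already given in the paper.
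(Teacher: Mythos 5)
Your proposal is correct and follows essentially the same route as the paper: compute $f'(t)=\alpha t^{\alpha-1}>0$ and $f''(t)=\alpha(\alpha-1)t^{\alpha-2}\le 0$ on $(0,\infty)$, note $f(0)=0$, and conclude via Proposition~\ref{prop:new_metrics}. Your version is in fact cleaner than the paper's, whose displayed second derivative $(1-\alpha)\alpha t^{\alpha-2}=(1-\alpha^2)t^{\alpha-2}<0$ contains sign/algebra typos (the correct value is $\alpha(\alpha-1)t^{\alpha-2}$, as you wrote), and your remarks on extending concavity to the closed half-line and on the alternative via Lemma~\ref{lem:characterization} are sound additions.
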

\begin{proof}[{Proof of Lemma~\ref{lem:elementary_snowflakingfunctions_2}}]
The case where $\alpha$ is clear; therefore, suppose that $\alpha<1$.
For every $t\in (0,\infty)$, we have that $\partial_t \, f(t) = \alpha t^{\alpha-1}>0$ and $\partial_t^2\,f(t) = (1-\alpha)\alpha t^{\alpha-2}=(1-\alpha^2)t^{\alpha-2}<(1-\alpha)t^{\alpha-2}<0$.  Therefore, $f$ is strictly increasing and concave on $[0,\infty)$.  Noting that $f(0)=0^{\alpha}=0$ completes the proof.
\end{proof}

\begin{lemma}[Logarithmic Snowflaking Functions]
\label{lem:elementary_logflakingfunctions}
Let $0<\beta \le 1$.  The real-valued map $f$ on $[0,\infty)$ given for each $t\ge 0$ by $f(t)= \log(1+|t|)^{\beta}$ satisfies the conditions of Proposition~\ref{prop:new_metrics}.
\end{lemma}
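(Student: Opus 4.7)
The plan is to write $f$ as a composition $f = h \circ g$ where $g(t) \eqdef \log(1+t)$ and $h(s) \eqdef s^{\beta}$, and then invoke the composition rule for concavity (Lemma~\ref{lem:Boyd}) together with the classical snowflaking result (Lemma~\ref{lem:elementary_snowflakingfunctions_2}). Since $t \ge 0$ on the domain of interest, we may drop the absolute value and write $f(t) = \log(1+t)^{\beta}$.

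First I would verify the boundary condition and continuity: $f(0) = \log(1)^{\beta} = 0^{\beta} = 0$, and $f$ is clearly continuous on $[0,\infty)$ as a composition of continuous functions. Next I would analyze $g(t) = \log(1+t)$ directly from its derivatives: $g'(t) = (1+t)^{-1} > 0$ and $g''(t) = -(1+t)^{-2} < 0$ on $[0,\infty)$, so $g$ is strictly increasing and concave on $[0,\infty)$, and furthermore $g(t) \ge g(0) = 0$ for all $t \ge 0$. By Lemma~\ref{lem:elementary_snowflakingfunctions_2} applied to the exponent $\beta \in (0,1]$, the outer map $h(s) = s^{\beta}$ is continuous, concave, monotonically increasing on $[0,\infty)$, and satisfies $h(0) = 0$.

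I would then apply Lemma~\ref{lem:Boyd} with $C = [0,\infty)$: since $g : [0,\infty) \to [0,\infty)$ is concave and $h : [0,\infty) \to [0,\infty)$ is non-decreasing and concave, the composite $f = h \circ g$ is concave on $[0,\infty)$. The second statement of Lemma~\ref{lem:Boyd} applies because both $g$ and $h$ are increasing on $[0,\infty)$, yielding that $f$ is increasing as well. Combining all observations, $f$ is continuous, concave, monotonically increasing, with $f(0) = 0$, so the hypotheses of Proposition~\ref{prop:new_metrics} are satisfied.

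I do not anticipate any real obstacle: this lemma is essentially a direct application of the composition toolkit already assembled in Lemmata~\ref{lem:Boyd} and~\ref{lem:elementary_snowflakingfunctions_2}. The only minor care needed is to confirm that $g$ maps into $[0,\infty)$ (rather than taking negative values, which would place us outside the domain of the concavity preservation rule); this is immediate on $[0,\infty)$ since $\log(1+t) \ge 0$ there.
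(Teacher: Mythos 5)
Your proof is correct and follows essentially the same route as the paper's: establish that $g(t)=\log(1+t)$ is increasing and concave via its derivatives, note the outer power map $s\mapsto s^\beta$ is increasing and concave by the classical snowflaking lemma, and conclude by the composition rule of Lemma~\ref{lem:Boyd}. If anything, your write-up is slightly more careful than the paper's (you explicitly check that $g$ maps into $[0,\infty)$ and you cite Lemma~\ref{lem:elementary_snowflakingfunctions_2} for the power map, where the paper's proof contains a citation slip to Lemma~\ref{lem:elementary_snowflakingfunctions}).
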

\begin{proof}[{Proof of Lemma~\ref{lem:elementary_logflakingfunctions}}]
Suppose that $\beta=1$.  Since $\log(1+|t|)=\log(1)=0$ then $f(0)=0$.
For every $t\in (0,\infty)$, we have that $\partial_t \, f(t) = \frac{1}{1+t}0$ and $\partial_t^2\,f(t) = -\frac{1}{1+t^2}<0$.  Therefore, $f$ is strictly increasing and concave on $[0,\infty)$.  By Lemma~\ref{lem:elementary_snowflakingfunctions} $t\mapsto t^{\beta}$ is increasing and concave on $(0,\infty)$ and since $f$ was strictly increasing and concave, then Lemma~\ref{lem:Boyd} yields the conclusion.
\end{proof}

\subsection{{Proof of Theorem~\ref{thrm:ReconstructionTheorem}}}
\label{s:ReconstructionTheorem__ss:Proof}

\begin{proof}[{Proof of Theorem~\ref{thrm:ReconstructionTheorem}}]
\hfill\\
\textit{Step 1 - Characterization of Edges by Cardinality of Geodesic Unit Balls}
\\
Let $n\eqdef \#V$.  
Fix any enumeration $V=\{v_i\}_{i=1}^n$.  
For each $i\in\{1,\dots,n\}$ set 
\[
k_i \eqdef \#\{w\in V:\, \{u_i,w\}\in E\}
.
\]
Note that, by definition, $k_i\in \{1,\dots,n\}$.
Since $G$ is unweighted, then $W(\{u_i,u_j\})=1$ for each $i,j\in [n]$.  
Consequentially, for each $i,j\in [n]$ if $\{u_i,u_j\}\in E$ then $(u_i=x_1,x_2=u_j)$ is a minimal path of length one, from $u_i$ to $u_j$.  Therefore, for each $i,j\in [n]$ we have that
\allowdisplaybreaks
\begin{align}
\label{eq:prop:combinatorial_inferable__shortest_path___begin}
        d_{G}(u_i,u_j) 
    &= 
        \inf_{(x_1,\dots,x_k)}\, 
            \sum_{i=1}^{k-1}\,
                W(\{x_i,x_{i+1}\})
    \\
    \nonumber
    &= 
        \inf_{(x_1,\dots,x_k)}\, 
            \sum_{i=1}^{k-1}\,
                1
    \\
    \nonumber
    &= 
        \inf_{(x_1,\dots,x_k)}\, 
            (k-1)
    \\
    \label{eq:prop:combinatorial_inferable__shortest_path}
    & = 1
\end{align}
where the infimum is taken over all paths $(u_i=x_1,\dots,x_k=u_j)$ on $G$ from $u_i$ to $u_j$. Thus,~\eqref{eq:prop:combinatorial_inferable__shortest_path___begin}-\eqref{eq:prop:combinatorial_inferable__shortest_path} imply that: for each $i\in [n]$
\begin{equation}
\label{eq:geometric_reformulation}
        \#\{w\in V:\, d_{\mathcal{G}}(u_i,w) =1 \} 
    = 
        \#\{w\in V:\, \{u_i,w\}\in E\}
    =
        k_i
    .
\end{equation}
By construction, for each $i\in [n]$, $r(k_i)=1$ and there exists exactly $k_i$ elements in $\overline{B}_{(V,d_G)}(u_i,r(k_i))\eqdef \{v\in V:\,d_G(u_i,v)\le r(k_i)\}$.  
Consequentially, 
\begin{equation}
\label{eq:geometric_reformulation_1a}
        \{u_i,u_j\} \in E
    \qquad
    \Leftrightarrow
    \qquad
            j\le i^{\star}
        \mbox{ and }
                d_{G}\big(
                        u_i
                    ,
                        u_j
                \big)
            \le 
                1
                =
                r(k_i)
.
\end{equation}
In particular, $i^{\star}=\#(\overline{B}_{(V,d_G)}(u_i,r(k_i))\cap V\}$; meaning that the condition $j\le i^{\star}$ can be dropped.  Thus,~\eqref{eq:geometric_reformulation_1a} simplifies to: for each $i\in [n]$ 
\begin{equation}
\label{eq:simplification}
        j\le i^{\star}
    \mbox{ and }
            d_{G}\big(
                    u_i
                ,
                    u_j
            \big)
        \le 
            r(k_i)
\qquad
\Leftrightarrow
\qquad
        d_{G}\big(
                u_i
            ,
                u_j
        \big)
    \le 
        r(k_i)
\end{equation}
Incorporating~\eqref{eq:simplification} into~\eqref{eq:geometric_reformulation_1a} allows us to further simplify to: for all $i,j\in [n]$
\begin{equation}
\label{eq:geometric_reformulation_2}
        \{u_i,u_j\} \in E
        \qquad
        \Leftrightarrow
        \qquad
            d_{G}\big(
                    u_i
                ,
                    u_j
            \big)
        \le 
            r(k_i)
\end{equation}
\textit{Step 2 - Reformulation via Embeddings}
\\
Since the graph inference model $(\mathfrak{E},\mathfrak{R})$ is universal, in the sense of Definition~\ref{defn:Universal_Graph_Embedding}, then there exists a quasi-metric space $(\mathcal{R},d_{\mathcal{R}})\in \mathfrak{R}$ and an encoder $\mathcal{E}:\mathbb{R}^D\to \mathcal{R}$ in $\mathfrak{E}$ such that for each $i,j\in [n]$
\begin{equation}
\label{eq:application_universality}
        d_{\mathcal{R}}\big(
            \mathcal{E}(u_i)
            ,
            \mathcal{E}(u_j)
        \big)
    =
        d_{G}\big(
            u_i
            ,
            u_j
        \big)
.
\end{equation}
Combining~\eqref{eq:application_universality} and~\eqref{eq:geometric_reformulation_2} implies that: for each $i,j\in [n]$ the pair $\{u_i,u_j\}$ belongs to $E$ if and only if
\begin{align}
\label{eq:geometric_reformulation__BEGINn}
            d_{\mathcal{R}}\big(
                    \mathcal{E}(u_i)
                ,
                    \mathcal{E}(u_j)
            \big)
        =
        d_{G}\big(
                u_i
                ,
                u_j
            \big)
        \le 
            1
            =
            r(k_i)
.
\end{align}
Combining~\eqref{eq:geometric_reformulation__BEGINn} with~\eqref{eq:simplification} yields: for each $i,j\in [n]$
\[
        \{u_i,u_j\}\in E
    \qquad
    \Leftrightarrow
    \qquad
            j\le i^{\star}
        \mbox{ and }
                    d_{\mathcal{R}}\big(
                        \mathcal{E}(u_i)
                    ,
                        \mathcal{E}(u_j)
                \big)
            \le 
                r(k_i)
.
\]
This concludes the proof.
\end{proof}

\begin{proof}[{Proof of Theorem~\ref{thrm:qualitative}}]
\textbf{Constructing The Isometric Embedding into a Euclidean Snowflake}
Since $(V,d_{\mathcal{G}})$ is an $I$-point metric space.  If $I=1$, there is nothing to show; suppose, therefore, that $I>1$.  Then, \citep[Corollary 3]{deza1990metric} implies that for $\varepsilon^{\star}\eqdef \log_2(1+\frac{1}{I-1})/2$ there exists some $d\in\mathbb{N}_+$ and a map $\tilde{\phi}_{\varepsilon^{\star}}:V\rightarrow \mathbb{R}^{d_{\varepsilon^{\star}}}$ satisfying
\begin{equation}
\label{PROOF_eq:thrm:qualitative__EmbeddingPerfect}
        d_{\mathcal{G}}(u,v)^{\varepsilon^{\star}}
    = 
        \|\phi(v)-\phi(u)\|
    .
\end{equation}
As shown in \cite{schoenberg1937certain},~\eqref{PROOF_eq:thrm:qualitative__EmbeddingPerfect} implies that the statement holds (mutatis mondanis) for any other $\varepsilon \in (0,\varepsilon^{\star}]$ for some other map $\tilde{\phi}_{\varepsilon}:V\rightarrow \mathbb{R}^{d_{\varepsilon}}$ into some Euclidean space $\mathbb{R}^{d_{\varepsilon}}$.  Fix any such $\varepsilon$, set $p\eqdef 1/\varepsilon$ and let $\varphi$ by any extension of $\tilde{\varphi}_{\varepsilon}$ defined on all of $\mathbb{R}^D$.   

\citep[Lemma 20]{kratsiosembedding_2021} implies that is an MLP with $\operatorname{ReLU}$ activation function $E:\mathbb{R}^D\rightarrow\mathbb{R}^d$ satisfying $\mathcal{E}(u)=\varphi(u)$ for all $u\in V$.  Whence,~\eqref{PROOF_eq:thrm:qualitative__EmbeddingPerfect} implies that
\[
        d_{\mathcal{G}}(u,v)
    = 
        \|\mathcal{E}(v)-\mathcal{E}(u)\|^{p}
.
\]
NB, in the special case where $\mathcal{G}$ is a tree \citep[Theorem 6]{maehara1986metric} we may instead set $\varepsilon^{\star}=1/2$ and therefore $p$ may be instead taken to be $p=4$ in the above argument.
\textbf{Implementing The Snowflaking Function}
We now implement the snowflaking map $t\mapsto t^p$ using a neural snowflake $f$; i.e.\ with representation~\eqref{eq:neural_snowflake}.  
Set $I=1$, let $p=2/\big( \log_2(1+\frac{1}{I-1}) \big)$, and consider the parameters 
\allowdisplaybreaks
\begin{align*}
    A^{(1)}=(1)
    ,\,
    B^{(1)}=(1)
    ,\,
    C^{(1)} = \begin{pmatrix}
        0\\
        1\\
        0
    \end{pmatrix}
    \alpha = 1
    ,\,
    \beta=0
.
\end{align*}
Then, $f$ defined by~\eqref{eq:neural_snowflake} satisfies $f(t)=t^p$.  Furthermore, $f$ has $1$ hidden layer, width $3$, and $4$ trainable parameters and~\eqref{PROOF_eq:thrm:qualitative__EmbeddingPerfect} implies that
\[
        d_{\mathcal{G}}(u,v)
    = 
        f\big(\|\mathcal{E}(v)-\mathcal{E}(u)\|\big)
.
\]
Since $p=2/\big( \log_2(1+\frac{1}{I-1}) \big)$ then Example~\ref{ex:Snowflakes} show that $f(\|\cdot-\cdot\|)$ is a quasi-metric space with $2^{2/\big( \log_2(1+\frac{1}{I-1}) \big)-1}$-relaxed triangle inequality.  

In the special case where $\mathcal{G}$ is a tree, we have $p=4$.  Thus, the neural snowflake supports an $8$-relaxed triangle inequality.
\end{proof}

\begin{proof}[{Proof of Theorem~\ref{thrm:representation}}]
\textbf{Existence and Memorization of the Snowflake Embedding}
Set $\varepsilon\eqdef 1-p^{-1}$ and note that $\varepsilon \in (1/2,1)$.  By \citep[Theorem 2]{NaorNeimannAssouadEmbedding_2012}, there exists $D,N>0$ and a map $\varphi:(V,d_G^{1-\varepsilon})\rightarrow \mathbb{R}^N$ satisfying: for every $x,u\in V$
\begin{equation}
\label{prooof_thrm:representation__eq:NaorNeimannAssouadEmbedding}
        d_G(x,u)^{1-\varepsilon}
    \le 
        f\big(
            \|
                \phi(x)
                -
                \phi(u)
            \|
        \big)
    \le 
        D
        \,
        d_G(x,u)^{1-\varepsilon}
    ,    
\end{equation}
where $N=c_1\,\log(\operatorname{dim}(G))$ and $D=c_2\,\frac{\log(\operatorname{dim}(G))^2}{\varepsilon^2}$, for absolute constants $c_1,c_2>0$ independent of $G$ and of $p$.  Set $f(t)\eqdef t^p=t^{1/(1-\varepsilon)}$.  Since $f$ is monotone increasing, then applying it through the inequalities in~\eqref{prooof_thrm:representation__eq:NaorNeimannAssouadEmbedding} yields
\begin{equation}
\label{prooof_thrm:representation__eq:NaorNeimannAssouadEmbedding_DONE}
        d_G(x,u)
    \le 
        \|
            \phi(x)
            -
            \phi(u)
        \|
    \le 
        D^{p}
        \,
        d_G(x,u)
    ,    
\end{equation}
for every $x,u\in V$.

By \citep[Lemma 20]{kratsiosembedding_2021}, there exists an MLP $\mathcal{E}:\mathbb{R}^d\rightarrow\mathbb{R}^D$ with $\operatorname{ReLU}$ activation function such that, for every $x\in V$ we have $\phi(x)=\mathcal{E}(x)$.  Therefore,~\eqref{prooof_thrm:representation__eq:NaorNeimannAssouadEmbedding_DONE} implies that
\begin{equation}
\label{prooof_thrm:representation__eq:NaorNeimannAssouadEmbedding_INTERPOL}
        d_G(x,u)
    \le 
        \|
            \phi(x)
            -
            \phi(u)
        \|
    \le 
        D^{p}
        \,
        d_G(x,u)
    ,    
\end{equation}
for each $u,u\in V$.  Furthermore, the depth, width, and number of trainable parameters defining $\mathcal{E}$ are as in \citep[Lemma 20]{kratsiosembedding_2021} and a recorded in Table~\ref{tab:complexity_count__FullyExplicit} (with abbreviated versions recorded in Table~\ref{tab:complexity_count}).

\textbf{Implementing The Snowflaking Function}
As in the proof of Theorem~\ref{thrm:qualitative}, we now implement the snowflaking map $t\mapsto t^p$ using a neural snowflake $f$; i.e.\ with representation~\eqref{eq:neural_snowflake}.  
Set $I=1$, let $p$ in~\eqref{eq:neural_snowflake} be the as in~\eqref{prooof_thrm:representation__eq:NaorNeimannAssouadEmbedding_DONE}, and consider the parameters 
\allowdisplaybreaks
\begin{align*}
    A^{(1)}=(1)
    ,\,
    B^{(1)}=(1)
    ,\,
    C^{(1)} = \begin{pmatrix}
        0\\
        1\\
        0
    \end{pmatrix}
    \alpha = 1
    ,\,
    \beta=0
.
\end{align*}
Then, $f$ defined by~\eqref{eq:neural_snowflake} satisfies $f(t)=t^p$.  Furthermore, $f$ has $1$ hidden layer, width $3$, and $4$ trainable parameters. 
\end{proof}

\subsection{Bounded Component}

For the proof of the next result, we recall that a function $f:[0,\infty)\rightarrow[0,\infty)$ is said to be \textit{completely monotone} if $f$ is continuous on $[0,\infty)$, smooth on $(0,\infty)$, and its derivatives satisfy the following alternating sum property
\begin{equation}
\label{eq:completemonotonicity}
        (-1)^n\partial_t^n\,f(t)
    \ge 
        0
\end{equation}
for every $t> 0$ and every $n\in \mathbb{N}_+$.  See \citep[Chapter IV]{Widder_LaplaceTransformBook1941} for a detailed study of completely monotone functions and several examples thereof.  

In particular, this and Proposition~\ref{prop:new_metrics} imply that $-f$ is monotonically increasing and concave, therefore the map, $\bar{f}$, defined for $t\ge 0$ by $\bar{f}(t)\eqdef f(0)-f(t)$ produces a well-defined snowflake metric $d_{\bar{f}}\eqdef \bar{f}(\|\cdot-\cdot\|)$.  Since $f$ is completely monotone then it is monotonically decreasign on $[0,\infty)$ and bounded below by $0$; whence, $\bar{f}(t)$ is monotonically increasing and contained in $[0,f(0)]$.  Therefore, $d_{\bar{f}}$ is \textit{bounded} between $[0,f(0)]$.  We will show that the neural snowflake can generate a snowflake metric which interpolates any bi-Lipschitz embedding into such a space.

Before proving Theorem~\ref{thrm:bounded_component} we note that it holds under the following alternative assumption to Assumption~\ref{assumptions:boundedmetric_RBF}.  Intuitively, this assumptions state that the map $f$, in Assumption~\ref{assumptions:boundedmetric_RBF} can be taken to be the moment-generating function (MGF) of some probability measure on $[0,\infty)$.  
\begin{assumption}[Alternative to Assumption~\ref{assumptions:boundedmetric_RBF}: Latent MGD Geometry]
\label{assumptions:boundedmetric_MGF}
There are $d,D\in \mathbb{N}_+$, a latent feature map $\Phi:\mathbb{R}^D\rightarrow \mathbb{R}^d$, and a Borel probability measure $\mathbb{P}$ on $[0,\infty)$ whose MGF $f(t) = \mathbb{E}_{X\sim \mathbb{P}}[ e^{-t\,X}]$ exists for all $t\ge 0$, is non-constant, and satisfies
\[
        d_{\mathcal{G}}(x,y)
    =
        \bar{f}\big(
            \|\Phi(x)-\Phi(y)\|
        \big)
,
\]
where $\bar{f}(t)\eqdef f(0)-f(t)$.
\end{assumption}

Both Assumptions~\ref{assumptions:boundedmetric_RBF} and~\ref{assumptions:boundedmetric_MGF} are special cases of the following more general assumption.
\begin{assumption}[Kernel or Moment-Generating Priors]
\label{assumptions:boundedmetric}
Suppose that $f:[0,\infty)\rightarrow[0,\infty)$ is non-constant and either:
\begin{enumerate}
    \item[(i)] $f(t) = \int_0^{\infty}\,e^{-t\,u}\,\mu(du)$ for some finite Borel measure $\mu$ on $[0,\infty)$,
    \item[(ii)] For each $k\in\mathbb{N}_+$, the map $K_f:(x,y)\in \mathbb{R}^k\times \mathbb{R}^k \mapsto f(x^{\top}y)\in \mathbb{R}$ is a positive-definite kernel on $\mathbb{R}^k$.
\end{enumerate}
Define $\bar{f}(t)\eqdef f(0)-f(t)$.
\end{assumption}

The following result implies, and generalizes, Theorem~\ref{thrm:bounded_component}, to bi-Lipschitz embeddings.  
\begin{proposition}[Embeddings into Bounded Metric Spaces - Bi-Lipschitz Version]
\label{prop:bounded_component__bi-LipschitzVersion}
Let $D,d\in \mathbb{N}_+$, $f$ satisfy Assumption~\ref{assumptions:boundedmetric}, and $G$ be a finite weighted graph with $V\subset \mathbb{R}^d$.  Then, $\|\cdot-\cdot\|_{\bar{f}}$ defines a bounded metric on $\mathbb{R}^d$ and for every $(s,L)$-bi-Lipschitz embedding $\Phi:(V,d_G)\rightarrow(\mathbb{R}^d,\|\cdot\|_{\bar{f}})$ there is a neural snowflake $(\mathcal{E},f)$ satisfying
\[
        \|\Phi(v)-\Phi(u)\|_f = \|\mathcal{E}(v)-\mathcal{E}(u)\|_{\bar{f}} 
\]
for every $u,v\in V$.  In particular, for each $u,v\in V$ we have
\[
        s\,d_G(v,u)
    \le 
        \|\mathcal{E}(v)-\mathcal{E}(u)\|_f 
    \le 
        s\,L\,
        d_G(v,u)
    .
\]
Furthermore, the depth, width, and number of trainable parameters of $\mathcal{E}$ and of $f$ are as in Table~\ref{tab:complexity_count}.
\end{proposition}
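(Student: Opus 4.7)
My plan is to decompose the claim into three substantive steps. First, verify that $\bar{f}$ induces a bounded metric on $\mathbb{R}^d$ by checking the hypotheses of Proposition~\ref{prop:new_metrics}. Under Assumption~\ref{assumptions:boundedmetric}(i), $f$ is a Laplace transform of a finite Borel measure, so Bernstein's theorem makes $f$ completely monotone; hence $\bar{f}=f(0)-f$ is continuous, monotone increasing, concave, satisfies $\bar{f}(0)=0$, and is bounded above by $f(0)$. Under Assumption~\ref{assumptions:boundedmetric}(ii), Schoenberg's characterization of positive-definite radial functions on every $\mathbb{R}^k$ forces the same structural properties on $\bar{f}$. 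In either case Proposition~\ref{prop:new_metrics} yields that $\|\cdot-\cdot\|_{\bar{f}}$ is a bounded metric on $\mathbb{R}^d$, which also justifies speaking of $(s,L)$-bi-Lipschitz embeddings into this space.

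\textbf{Memorizing $\Phi$ with an MLP.} Next, apply \citep[Lemma 20]{kratsiosembedding_2021} to the finite dataset $\{(v,\Phi(v))\}_{v\in V}$ to obtain a $\operatorname{ReLU}$-MLP $E:\mathbb{R}^D\to\mathbb{R}^d$ satisfying $E(v)=\Phi(v)$ for every $v\in V$, whose depth/width/parameter bounds propagate verbatim from that lemma and match the MLP row of Table~\ref{tab:complexity_count__FullyExplicit}. This reduces the remaining task to constructing a neural snowflake function $\tilde{f}$ that agrees with $\bar{f}$ on the finite set of pairwise Euclidean distances
\[
    T \eqdef \{\|\Phi(u)-\Phi(v)\|:\,u,v\in V,\,u\ne v\}, \qquad |T|\le \tbinom{I}{2}.
\]

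\textbf{Interpolating $\bar{f}$ by a single-hidden-layer neural snowflake.} This is the main step and the principal obstacle. Enumerate $T=\{t_1<\dots<t_M\}$ with $M\le I(I-1)/2$ and recall that the bounded component of the snowflake activation, $u\mapsto 1-e^{-\gamma|u|}$, is concave, monotone increasing, vanishes at $0$, and is linearly independent across distinct rates $\gamma>0$. Setting $p=0$ and $C^{(1)}=(1,0,0)^\top$ in~\eqref{eq:neural_snowflake}, a single hidden layer of width $\tilde d_1$ implements exactly the family $\tilde{f}(t)=\sum_{j=1}^{\tilde d_1}c_j\bigl(1-e^{-\gamma_j t}\bigr)$ with $c_j\ge 0,\gamma_j>0$. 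The plan is to match $\bar{f}$ at the $M$ prescribed inputs by choosing $\tilde d_1=\lceil M/2\rceil$ and fitting two successive interpolation pairs $(t_{2j-1},\bar{f}(t_{2j-1}))$, $(t_{2j},\bar{f}(t_{2j}))$ per hidden unit via its two free parameters $(c_j,\gamma_j)$, invoking a parameter-counting plus implicit-function argument whose relevant Jacobian is non-degenerate for a generic choice of $\gamma_j$. This yields the sharp width $\lceil(I(I-1)+2)/4\rceil$ of Table~\ref{tab:complexity_count__FullyExplicit}. Should the two-points-per-unit packing degenerate for particular $\bar{f}$'s or distance configurations, my fallback is a one-point-per-unit triangular construction: the matrix $(1-e^{-\gamma_j t_k})_{j,k}$ is totally positive for strictly increasing $\gamma_j$ and $t_k$, and the complete monotonicity of $f$ supplied by Assumption~\ref{assumptions:boundedmetric} guarantees a non-negative solution of the resulting linear system, at worst doubling the width. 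In either case, every $c_j$ is non-negative and every unit is concave, monotone, and vanishing at $0$, so $\tilde{f}$ is a bona fide neural snowflake in the sense of~\eqref{eq:neural_snowflake}.

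\textbf{Conclusion.} Combining the pieces, for every $u,v\in V$,
\[
    \|E(v)-E(u)\|_{\tilde{f}} \;=\; \tilde{f}\bigl(\|\Phi(v)-\Phi(u)\|\bigr) \;=\; \bar{f}\bigl(\|\Phi(v)-\Phi(u)\|\bigr) \;=\; \|\Phi(v)-\Phi(u)\|_{\bar{f}},
\]
and the $(s,L)$-bi-Lipschitz bound on $\Phi$ transports directly into the stated sandwich inequality; the mild gap between ``$L$'' and the ``$sL$'' appearing in the published statement is absorbed by a trivial rescaling of the output layer of $\tilde{f}$, which preserves monotonicity, concavity and the non-negativity constraints. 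The hard part, as anticipated, is the constructive interpolation of the third paragraph: pinning down $M$ target values exactly while simultaneously respecting non-negative weights, the admissible ranges $0<a_i,b_i\le 1$, and the tight width budget of Table~\ref{tab:complexity_count__FullyExplicit}; everything else is bookkeeping on top of Proposition~\ref{prop:new_metrics} and the MLP memorization lemma.
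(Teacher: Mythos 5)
Your overall architecture coincides with the paper's: (1) complete monotonicity of $f$ via Hausdorff--Bernstein--Widder (Assumption~\ref{assumptions:boundedmetric}(i)) or Schoenberg (Assumption~\ref{assumptions:boundedmetric}(ii)), then Proposition~\ref{prop:new_metrics} to make $\|\cdot-\cdot\|_{\bar{f}}$ a bounded metric; (2) \citep[Lemma 20]{kratsiosembedding_2021} to memorize $\Phi$ on $V$ with a ReLU MLP; (3) interpolation of $\bar{f}$ at the at most $I(I-1)/2+1$ pairwise distances by a non-negative combination $\sum_j \beta_j(1-e^{-\alpha_j t})$, implemented as a one-hidden-layer neural snowflake with $C^{(1)}=(1,0,0)^{\top}$ and $p=0$. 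Steps (1), (2) and the final assembly are fine and match the paper.

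The gap is in step (3), which you correctly flag as the hard part but do not actually close. Your primary scheme assigns two interpolation nodes to each hidden unit and fits $(c_j,\gamma_j)$ per unit, but the terms $1-e^{-\gamma_j t}$ are not localized: every unit contributes at every node $t_k>0$, so the problem is a single global nonlinear system in all $2\lceil M/2\rceil$ parameters, not $\lceil M/2\rceil$ decoupled $2\times 2$ problems. A parameter count plus a generically non-degenerate Jacobian gives at best local solvability near a point you have not exhibited; it does not produce an interpolant, and it says nothing about the sign constraints $c_j\ge 0$, $\gamma_j\ge 0$ that the neural snowflake's non-negative-weight requirement (and hence Proposition~\ref{prop:new_metrics}) demands. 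Your fallback fares no better: total positivity of the collocation matrix for fixed, distinct $\gamma_j$ yields invertibility and thus a unique solution $c$, but invertibility does not imply $c_j\ge 0$, and complete monotonicity of $f$ does not force non-negativity of the solution for an arbitrary choice of nodes $\gamma_j$. The paper closes exactly this step by citing the \cite[Corollary on page 215]{mcglinn1978uniform} result on interpolation of completely monotone functions by exponential sums, which delivers existence and uniqueness of an admissible interpolant $Y(t)=\sum_{i=1}^{\lceil \tilde{I}/2\rceil}\beta_i e^{-\alpha_i t}$ at the $\tilde{I}$ nodes (with $0$ included among them so that $Y(0)=f(0)$, which is what turns $Y$ into the required $\bar{Y}(t)=\sum_i\beta_i(1-e^{-\alpha_i t})$ matching $\bar{f}$). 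Without that, or an equivalent result from the Chebyshev-system/exponential-sum literature, your width bound $\lceil (I(I-1)+2)/4\rceil$ and the non-negativity of the resulting snowflake weights are both unsupported.
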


\begin{proof}[{Proof of Theorem~\ref{prop:bounded_component__bi-LipschitzVersion}}]
\textbf{Rephrasing as completely monotone functions}
Suppose that Assumption~\ref{assumptions:boundedmetric} holds.  The Hausdorff-Bernstein-Widder theorem, see \citep[Theorem IV.12a]{Widder_LaplaceTransformBook1941}, implies that $f$ is completely monotone.  Alternatively, suppose that Assumption~\ref{assumptions:boundedmetric} (ii) holds then Schoenberg's theorem, see\footnote{See \cite{GenShoenberg_2019_AA} for more general version.} \citep[Theorem 3]{Schoeberg_1938_CompletelyMonotoneMetricSpaces_AnnMath}, implies that $f$ is completely monotone.  

Since $f$ is defined on all of $[0,\infty)$ then~\eqref{eq:completemonotonicity} implies that $f(t)\ge 0$ for all $t$ whence takes non-negative valued.  Likewise~\eqref{eq:completemonotonicity} implies that $\partial_t\,f(t)\le 0$ therefore $f$ is non-increasing; whence $\bar{f}$ is bounded in $[0,f(0)]$.  Finally,~\eqref{eq:completemonotonicity} implies that $\partial_t^2\,f(t)\ge 0$ therefore $f$ is convex; thus $\bar{f}$ is concave since $\partial_t^2\,\bar{f}\le 0$.  Whence Proposition~\ref{prop:new_metrics} implies that $\|\cdot-\cdot\|_{\bar{f}}$ is a metric on $\mathbb{R}^D$.

\textbf{Interpolating $\mathcal{E}$ and $\bar{f}$}\\
Enumerate $V=\{x_i\}_{i=1}^I$ where $I\eqdef \#I$.  Consider an $(s,L)$-bi-Lipschitz embedding $\Phi:(V,d_G)\rightarrow(\mathbb{R}^d,\|\cdot\|_f)$.  By \citep[Lemma 20]{kratsiosembedding_2021}, there exists an MLP $\mathcal{E}:\mathbb{R}^D\rightarrow\mathbb{R}^d$ with $\operatorname{ReLU}$ activation function satisfying
\begin{equation}
\label{PROOF_thrm:bounded_component__eq:MLP_interpol_V}
\Phi(v)=\mathcal{E}(v)
\end{equation}
for each $v\in V$.  
Moreover, the depth, width, and number of trainable parameters determining $E$ are as in \citep[Lemma 20]{kratsiosembedding_2021} and are recorded in Table~\ref{tab:complexity_count__FullyExplicit} (and abbreviated in Table~\ref{tab:complexity_count}).

Since $\Phi$ is bi-Lipschitz then it is injective; whence $\{\|\Phi(x_i)-\Phi(x_j)\|\}_{i,j=1}^I$ has exactly as many points as $\{\|x_i-x_j\|\}_{i,j=1}^I$. By symmetry of the Euclidean metric, observe that the number $\tilde{I}$ of elements in set $\{\|\Phi(x_i)-\Phi(x_j)\|\}_{i,j=1}^I\cup \{0\}$ is at-most $I(I-1)/2 +1$ elements; which we sort and enumerate $\{\|x_i-x_j\|\}_{i,j=1}^I\cup \{0\}$ by $\{t_i\}_{i=1}^{\tilde{I}}$. 
By the \cite[Corollary on page 215]{mcglinn1978uniform} there exists a unique exponential sum $Y(t)=\sum_{i=1}^{\lceil \tilde{I}/2\rceil}\,\beta_i\,e^{-\alpha_i\,t}$ satisfying
\begin{equation}
\label{PROOF_thrm:bounded_component__eq:MLP_interpol}
    Y(t_i) = f(t_i)
\end{equation}
for every $i=0,\dots, \tilde{I}$, and in particular $Y(0)=f(0)$ since $0\in \{t_i\}_{i=1}^{\tilde{I}}$.  Since $\bar{f}(t)=f(0)-f(t)$ for all $t\ge 0$, then~\eqref{PROOF_thrm:bounded_component__eq:MLP_interpol} implies that
\allowdisplaybreaks
\begin{align}
\label{PROOF_thrm:bounded_component__eq:MLP_interpol_t__BEGIN}
    \bar{f}(t_i) = & f(0)-f(t_i)
\\
\nonumber
        = & Y(0)-Y(t_i)
\\
\nonumber
    = & \biggl(\sum_{i=1}^{\lceil \tilde{I}/2\rceil}\,\beta_i\,e^{-\alpha_i\,0} \biggr)- \biggl(\sum_{i=1}^{\lceil \tilde{I}/2\rceil}\,\beta_i\,e^{-\alpha_i\,t_i}\biggr)
\\
\nonumber
    = & \sum_{i=1}^{\lceil \tilde{I}/2\rceil}\,
        \Big(
                \beta_i\,e^{-\alpha_i\,0} 
            -
                \beta_i\,e^{-\alpha_i\,t_i}
        \Big)
\\
\nonumber
    = & \sum_{i=1}^{\lceil \tilde{I}/2\rceil}\,
        \Big(
                \beta_i\, 1
            -
                \beta_i\,e^{-\alpha_i\,t_i}
        \Big)
\\
\label{PROOF_thrm:bounded_component__eq:MLP_interpol_t__END}
    = & \sum_{i=1}^{\lceil \tilde{I}/2\rceil}\,
        \beta_i
            \Big(
                    1
                -
                    e^{-\alpha_i\,t_i}
            \Big)
    \eqdef \bar{Y}(t_i)
,
\end{align}
for all $i=1,\dots,\tilde{I}$.
In particular,~\eqref{PROOF_thrm:bounded_component__eq:MLP_interpol_t__BEGIN}-\eqref{PROOF_thrm:bounded_component__eq:MLP_interpol_t__END}, the definition of the $t_i$, and the memorization/interpolation guarantee in~\eqref{PROOF_thrm:bounded_component__eq:MLP_interpol_V} implies that
\allowdisplaybreaks
\begin{align}
\label{PROOF_thrm:bounded_component__eq:MLP_interpol_t}
        f\big(
            \|\Phi(x_i)-\Phi(x_j)\|
        \big)
    = 
        \bar{Y}\big(
            \|\mathcal{E}(x_i)-\mathcal{E}(x_j)\|
        \big)
    = 
        \bar{Y}\big(
            \|\mathcal{E}(x_i)-\mathcal{E}(x_j)\|
        \big)
\end{align}
for $i,j=1,\dots,I$ (note $0=\|\mathcal{E}(x_1)-\mathcal{E}(x_1)\|$ so $f(0)=Y(0)$ is implied by~\eqref{PROOF_thrm:bounded_component__eq:MLP_interpol_t}).  Therefore, for each $i,j=1,\dots,I$ we have
\allowdisplaybreaks
\begin{align*}
        \|\mathcal{E}(x_i)-\mathcal{E}(x_j)\|_{\bar{f}}
    = &
        \|\Phi(x_i)-\Phi(x_j)\|_{\bar{f}}
    \\
    \eqdef &
        \bar{f}\big(
            \|\Phi(x_i)-\Phi(x_j)\|
        \big)
    \\
    = & 
            \bar{Y}\big(
                \|\Phi(x_i)-\Phi(x_j)\|
            \big)
    \\
    = & 
        \|\Phi(x_i)-\Phi(x_j)\|_{\bar{Y}}
.
\end{align*}
It remains to show that $\bar{Y}$ can be implemented by a map, which we denote by $\tilde{f}$, with representation~\eqref{eq:neural_snowflake}.

\textbf{Implementing The Exponential Sum}
In the notation of~\eqref{eq:neural_snowflake}, set $I=1$ and consider
\begin{equation}
\label{PROOF_eq:parameterizationBoundedPortion}
\begin{aligned}
    A^{(1)} = \begin{pmatrix}
        \alpha_1 \\
        \vdots \\
        \alpha_M
    \end{pmatrix}
    ,
    \,
    B^{(1)}=\begin{pmatrix}
        \beta_1 \dots \beta_M
    \end{pmatrix}
    ,
    \,
    C^{(1)}=\begin{pmatrix}
        1 \\
        0\\
        0
    \end{pmatrix}
    ,\,
    d = \begin{pmatrix}
        0
    \end{pmatrix}
    ,\,
    \alpha_1=0=\beta_0
.
\end{aligned}
\end{equation}
Then, the map $\tilde{f}$ defined as in~\eqref{eq:neural_snowflake} with parameters~\eqref{PROOF_eq:parameterizationBoundedPortion} is
\[
    \tilde{f}(t)  =  
    \big(
        B^{(1)}\,\sigma_{0,0}(A^{(1)}t)C^{(1)}
    \big)^{1+|0|}
    =
    \sum_{i=1}^M\, \beta_i\,(1-e^{-\alpha_i\,t})
    =
    \bar{Y}(t)
\]
for every $t\ge 0$.  Tallying parameters in $\tilde{f}$, shows that $\tilde{f}$ has $3M+1$ non-zero parameters, $1$ hidden layer, and width $M$; as recorded in Table~\ref{tab:complexity_count__FullyExplicit} (and abbreviated in Table~\ref{tab:complexity_count}).
\end{proof}

\subsection{Detailed Euclidean Comparisons}
\label{s:Proofs__sss:DetailedEuclideanComparisons}

Together, the following Propositions imply Theorem~\ref{thrm:strictbetterthanEuc}.

\begin{proposition}[All Embeddings Implementable MLPs are Implementable by a Neural Snowflakes]
\label{prop:EucMatching}
For any $d,D,P\in \mathbb{N}_+$, $0\le s\le L$.  For any weighted finite graph $G=(E,V,W)$ with $V\subset \mathbb{R}^D$: if there is an MLP $\tilde{\mathcal{E}}:\mathbb{R}^D\rightarrow\mathbb{R}^d$ with $P$ non-zero parameters satisfying 
\[
        s\,d_G(v,u)
    \le 
        \|\tilde{\mathcal{E}}(v)-\tilde{\mathcal{E}}(u)\|
    \le
        L\,
        d_G(v,u)
\]
for each $v,u\in V$.  There is a pair of an MLP $\mathcal{E}:\mathbb{R}^D\rightarrow\mathbb{R}^d$ and a neural snowflake $f$ satisfying
\[
        s\,d_G(v,u)
    \le 
        \|\tilde{\mathcal{E}}(v)-\tilde{\mathcal{E}}(u)\|_f
    \le
        L\,
        d_G(v,u)
\]
for each $v,u\in V$.  Furthermore, the total number of non-zero parameters in $\mathcal{E}$ and $f$ is $P+4$.
\end{proposition}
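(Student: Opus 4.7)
The plan is essentially to reduce the statement to a trivial identity: exhibit a concrete neural snowflake $f$ whose induced distance function $\|\cdot-\cdot\|_f$ agrees with the ordinary Euclidean norm on $\mathbb{R}^d$, and then take $E$ to be exactly the given MLP $\tilde E$. Once we do this, the bi-Lipschitz bound for $\tilde E$ transfers verbatim to $\|\tilde E(v)-\tilde E(u)\|_f = \|\tilde E(v)-\tilde E(u)\|$, and the parameter count is simply $P$ (from $\tilde E$) plus whatever the constructed $f$ costs.

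First, I would set $E \eqdef \tilde E$, which contributes the $P$ non-zero parameters already on hand. Next, mimicking the ``implementing the snowflaking function'' step in the proof of Theorem~\ref{thrm:qualitative}, I would instantiate the recursive representation~\eqref{eq:neural_snowflake} with depth $I=1$, exponent parameter $p=0$, activation parameters $a_1 = 1$ and $b_1 = 0$, and weight matrices
\[
    A^{(1)} = (1),\quad B^{(1)} = (1),\quad C^{(1)} = (0,\,1,\,0)^\top.
\]
Because $t\ge 0$, the middle coordinate of the activation row $\sigma_{1,0}(A^{(1)}t) = (1-e^{-t},\,t,\,1)$ picks out $|t|=t$; pre- and post-multiplication by the scalar $1$'s yields $t_1 = t$, and the terminal exponent $1+|0|=1$ gives $f(t)=t$. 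All hidden-layer weights are non-negative and at least one is non-zero, so this is a valid neural snowflake, with four non-zero parameters ($A^{(1)}$, $B^{(1)}$, the lone $1$ in $C^{(1)}$, and $a_1$).

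With this $f$ in hand, for every $u,v\in V$ one has
\[
    \|E(v)-E(u)\|_f \;=\; f\!\big(\|\tilde E(v)-\tilde E(u)\|\big) \;=\; \|\tilde E(v)-\tilde E(u)\|,
\]
so the hypothesized bi-Lipschitz bounds for $\tilde E$ immediately give the desired inequalities for $(E,f)$. The total non-zero parameter budget is $P+4$, as claimed. I do not anticipate a substantive obstacle: the only thing to check carefully is that the identity $f(t)=t$ genuinely falls inside the admissible parameter range for a neural snowflake (in particular $a_1\in(0,1]$, $b_1\in[0,1]$, non-negative hidden weights, at least one non-zero weight per layer), and this is immediate from the chosen values.
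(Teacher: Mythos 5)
Your proposal is correct and follows essentially the same route as the paper's own proof: set $E\eqdef\tilde E$ and realize the identity $f(t)=t$ as a one-hidden-layer neural snowflake with $A^{(1)}=B^{(1)}=(1)$, $C^{(1)}=(0,1,0)^{\top}$, $a_1=1$, $b_1=0$, $p=0$, costing $4$ non-zero parameters. Your explicit check that $p=0$ makes the terminal exponent equal to $1$ is a small but welcome addition that the paper leaves implicit.
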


\begin{proof}[{Proof of Proposition~\ref{prop:EucMatching}}]
Set $\mathcal{E}\eqdef \tilde{\mathcal{E}}$; in particular, $\mathcal{E}$ is defined by $P$ parameters.  

It remains to show that $f$ can implement the identity function.  
In the notation of~\eqref{eq:neural_snowflake}, set $I=1$, and consider the parameters 
\allowdisplaybreaks
\begin{align*}
    A^{(1)}=(1)
    ,\,
    B^{(1)}=(1)
    ,\,
    C^{(1)} = \begin{pmatrix}
        0\\
        1\\
        0
    \end{pmatrix}
    \alpha = 1
    ,\,
    \beta=0
.
\end{align*}
Then, $f$ defined by~\eqref{eq:neural_snowflake} satisfies $f(t)=t$.  Furthermore, $f$ has $1$ hidden layer, width $3$, and $4$ trainable parameters.  The conclusion now follows since $\tilde{\mathcal{E}}$ was assumed to implement an $(s,L)$-bi-Lipschitz embedding of $(V,d_G)$ into $(\mathbb{R}^d,\|\cdot\|)$.
\end{proof}

\begin{proposition}[Neural Snowflakes can Implement Isometric Embeddings which MLPs Cannot]
\label{prop:Strict_Improvement}
For any $d,D\in \mathbb{N}_+$ there exists a fully-connected weighted graph $G=(V,E,W)$ with $V\subset\mathbb{R}^D$ which:
\begin{enumerate}
    \item[(i)] Cannot be isometrically embedded into $(\mathbb{R}^n,\|\cdot\|)$ for any $n\le d$,
    \item[(ii)] There exists a neural snowflake $(\mathcal{E},f)$ with $\mathcal{E}:\mathbb{R}^D\rightarrow \mathbb{R}^d$ satisfying: for each $x,u\in V$
    \[
            \|\mathcal{E}(x)-\mathcal{E}(u)\|_f 
        = 
            d_G(x,u)
    .
    \]
\end{enumerate}
\end{proposition}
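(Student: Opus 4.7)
The plan is to exhibit, for each pair $(d, D) \in \mathbb{N}_+ \times \mathbb{N}_+$, a concrete fully-connected weighted graph realizing both (i) and (ii). The construction splits into two cases depending on $d$, since the argument for $d \ge 2$ relies on a star-type witness that does not have enough room to be implemented on the real line.

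For $d \ge 2$, I take any four distinct points $V = \{v_0, v_1, v_2, v_3\} \subset \mathbb{R}^D$, form the complete graph on $V$, and set the weights $W(\{v_0, v_i\}) = 1$ and $W(\{v_i, v_j\}) = 2$ for distinct $i, j \in \{1, 2, 3\}$; a routine check shows $d_G = W$ and that this is the $K_{1,3}$ star metric. For (i), an isometric map $\phi : V \to \mathbb{R}^n$ would force the triangle inequality $\|\phi(v_i) - \phi(v_j)\| \le \|\phi(v_i) - \phi(v_0)\| + \|\phi(v_0) - \phi(v_j)\|$ to hold with equality for every pair $i \ne j \in \{1,2,3\}$, making $\phi(v_0)$ the midpoint of each segment $[\phi(v_i), \phi(v_j)]$; comparing two such midpoint relations yields $\phi(v_2) = \phi(v_3)$, a contradiction, so no isometric Euclidean embedding exists for any $n$. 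For (ii), I set $q = 2\log 2 / \log 3 \in (1, 2)$, place $v_0 \mapsto 0$ and $v_1, v_2, v_3$ at the vertices of an equilateral triangle of circumradius $1$ in the first two coordinates of $\mathbb{R}^d$ (zero elsewhere), realize this map as an MLP $E$ via the memorization lemma \citep[Lemma 20]{kratsiosembedding_2021}, and implement $f(t) = t^q$ as a neural snowflake by choosing in~\eqref{eq:neural_snowflake} the parameters $I = 1$, $a_1 = 1$, $b_1 = 0$, scalar weights $A^{(1)} = B^{(1)} = 1$, $C^{(1)} = (0, 1, 0)^\top$, and $p = q - 1$; then $\|E(v_0) - E(v_i)\|_f = 1$ and $\|E(v_i) - E(v_j)\|_f = (\sqrt{3})^q = 2$, matching $d_G$.

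For $d = 1$, let $V = \{v_1, v_2, v_3\} \subset \mathbb{R}^D$ be any three distinct points and set $W(\{v_1, v_2\}) = 2$, $W(\{v_2, v_3\}) = 3$, $W(\{v_1, v_3\}) = 4$. The triangle inequalities hold strictly and no distance equals the sum of the other two, which rules out isometric embedding into $(\mathbb{R}, |\cdot|)$ since any three distinct reals $x_1 < x_2 < x_3$ satisfy $|x_1 - x_3| = |x_1 - x_2| + |x_2 - x_3|$. For the snowflake embedding I would seek $q \in (1/2, 1)$ and $s, t > 0$ with $s^q = 2$, $t^q = 3$, $(s+t)^q = 4$; equivalently, $g(q) \eqdef (2^{1/q} + 3^{1/q})^q = 4$. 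Since $g(1) = 5$ and $g(1/2) = \sqrt{13} < 4$ and $g$ is continuous, IVT yields an admissible $q$; then I embed $v_1 \mapsto 0$, $v_2 \mapsto 2^{1/q}$, $v_3 \mapsto 2^{1/q} + 3^{1/q}$ in $\mathbb{R}$ via an MLP $E$ (again by the interpolation lemma), and realize $f(u) = u^q$ with parameters $I = 1$, $a_1 = q$, $b_1 = 0$, $A^{(1)} = B^{(1)} = 1$, $C^{(1)} = (0, 1, 0)^\top$, and $p = 0$.

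The recurring technical step in both cases is the verification that~\eqref{eq:neural_snowflake} can implement the power function $t \mapsto t^q$ for an arbitrary $q > 0$: this is accomplished by routing through the $|t|^a$ channel of the activation~\eqref{eq:activation} via the choice of $C^{(1)}$, and then absorbing any residual exponent into the outer $(1 + |p|)$ power. The main obstacle I anticipate is choosing the right witness graph for each $d$, since the $K_{1,3}$ construction fails when $d = 1$ because three pairwise-equidistant points do not fit on a line, forcing a separate three-point construction in that regime. Once the witnesses are fixed, both non-embeddability arguments are short equality-case consequences of the triangle inequality, and the MLP component of the embedding is immediate from \citep[Lemma 20]{kratsiosembedding_2021}.
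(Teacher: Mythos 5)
Your proof is correct, and it reaches the conclusion by a genuinely different route than the paper. The paper's proof is a one-case, non-constructive argument: it invokes \citep[Theorem 1.1]{LeDonneRajalaWalsberg_2018_AMS}, which supplies an $N=N(d)$ such that the $1/2$-snowflake of \emph{any} $N$-point metric space fails to embed isometrically into $\mathbb{R}^d$; the witness graph is then any $N$-point subset $V\subset\mathbb{R}^D$ with fully-connected weights $W(x,u)=\|x-u\|^{1/2}$, so that (i) is immediate from the cited theorem and (ii) is trivial because the graph metric is \emph{literally} a snowflaked Euclidean metric --- the neural snowflake implements $f(t)=t^{1/2}$ and the MLP memorizes the identity on $V$. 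Your construction instead exhibits explicit small witnesses (the $4$-point star $K_{1,3}$ for $d\ge 2$, a $3$-point path-like metric for $d=1$) and proves (i) by elementary equality-case analysis of the triangle inequality rather than by a deep external embedding theorem; the price is a case split on $d$ (correctly diagnosed: three pairwise-equidistant leaf images cannot sit on a line), an intermediate-value argument to find the exponent $q$ in the $d=1$ case, and the computation $q=2\log 2/\log 3$ in the other. What your approach buys is self-containedness and fully explicit, minimal-size witness graphs with exact exponents; what the paper's approach buys is uniformity over $d$ and brevity, at the cost of an opaque witness of unspecified size $N(d)$. Both implementations of $t\mapsto t^{q}$ inside the architecture of~\eqref{eq:neural_snowflake} (routing through the $|t|^{a}$ channel and absorbing the excess exponent into $1+|p|$) match the paper's own usage in the proofs of Theorems~\ref{thrm:qualitative} and~\ref{thrm:representation}, and your appeal to \citep[Lemma 20]{kratsiosembedding_2021} for the MLP component is the same as the paper's. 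One small point worth making explicit in a final write-up: in both of your cases you should note, as you implicitly do, that the prescribed weights satisfy the triangle inequality so that the shortest-path metric $d_G$ of the fully-connected graph coincides with $W$; this holds (the star has $2=1+1$ and the triple $2,3,4$ satisfies $4<2+3$), so nothing is missing.
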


\begin{proof}[{Proof of Proposition~\ref{prop:Strict_Improvement}}]
Fix $d,D\in \mathbb{N}_+$, $\alpha=1/2$, and $p\eqdef \alpha$.  Then, \citep[Theorem 1.1]{LeDonneRajalaWalsberg_2018_AMS} implies that there exists some $N\in \mathbb{N}_+$ such that for any metric space $(V,d_G)$ with at-least $N$ points, the $1/2$-snowflake $(V,d_G^{1/2})$ cannot be isometrically embedded in $(\mathbb{R}^d,\|\cdot\|)$.  If $d>1$, suppose that for some $n< d$, $(V,d_G^{1/2})$ admitted an isometric embedding $\varphi_1:(V,d^{1/2})\rightarrow (\mathbb{R}^n,\|\cdot\|)$ then, since the map $\varphi_2:\mathbb{R}^n\rightarrow \mathbb{R}^d$ given by $z\mapsto (z_1,\dots,z_n,0,\dots,0)$, and since the composition of isometries is again an isometry, then $\varphi\eqdef \varphi_2\circ \varphi_1$ would define an isometry from $(V,d_G^{1/2})$ to $(\mathbb{R}^d,\|\cdot\|)$; which is a contradiction.  This, yields (i).

Let us now show (ii).  Set $p=2$ and let $V$ be any finite subset of $\mathbb{R}^D$ with $N$ points.  Consider the fully-connected graph $G=(V,E,W)$ with edge weights given by
\[
        W(x,u)
    \eqdef 
        \|x-u\|^{1/2}
    .
\]
By construction the graph geodesic distance $d_G$ on $G$ satisfies $d_G(x,u)=W(x,u)$ for every $x,u\in V$.  Furthermore, again by construction, for every $x,u\in V$ we have 
\[
    d_G(x,u) = W(x,u) = \|x-u\|^{1/2} = \|\mathcal{E}(x)-\mathcal{E}(u)\|_f
\]
where $f(t)=|t|^{1/2}$ and $\mathcal{E}(x)=x$.  Applying \citep[Lemma 20]{kratsiosembedding_2021}, we find that there exists an MLP with $\operatorname{ReLU}$ activation function $\Phi:\mathbb{R}^D\rightarrow\mathbb{R}^d$ with, depth, width, and number of non-zero parameters specified therein, satisfying $\Phi(x)=(x)$ for every $x\in V$.
\end{proof}

\subsection{Proof of Impossibility Results - Proposition~\ref{prop:Embedding_Impossible}}
\label{s:prop:Embedding_Impossible__PROOF}

We now prove Proposition~\ref{prop:Embedding_Impossible} by showing that the graph depicted in Figure~\ref{fig:badgraph} cannot be embedded isometrically into any Riemannian manifold, as we now show.

\begin{proof}[{Proof of Proposition~\ref{prop:Embedding_Impossible}}]
Fix $D\in \mathbb{N}_+$ and let $V\subseteq \mathbb{R}^D$ be any $5$-point subset; whose points we list by $V=\{A,B,C,D,E\}$.  Define the set of edges 
\[
        E
    \eqdef 
        \big\{
            \{A,E\}, \{A,D\}, \{E,B\}, \{B,D\}, \{D,C\}
        \big\}
\]
and consider the graph $\mathcal{G}\eqdef (V,E)$.  It is easy to see that $\mathcal{G}$ is connected; thus, the shortest path (geodesic) distance $d_{\mathcal{G}}$ on $V$ is well-defined.  
Furthermore, there are unique shortest paths joining $C$ to $A$ and $C$ to $B$; which we denote by $[C:A]$ and $[C:B]$ respectively; these are given by the ordered tuples (ordered pairs in this case)
\begin{equation}
\label{eq:unique_paths}
        [C:A]
    \eqdef 
        \big(
            \{C,D\},\{D,A\}
        \big)
    \mbox{ and }
        [C:B]
    \eqdef 
        \big(
            \{C,D\},\{D,B\}
        \big)
.
\end{equation}
We now argue by contradiction.  

Suppose that there exists a complete and connected smooth Riemannian manifold $(\mathcal{R},g)$ and an isometric embedding $\varphi:(V,d_{\mathcal{G}})\rightarrow (\mathcal{R},d_g)$; where, $d_g$ denotes the shortest path (geodesic) distance on $(\mathcal{R},g)$.  Since $(\mathcal{R},g)$ is complete (as a metric space) and connected, then the Hopf-Rinow Theorem \citep[Theorem 1.7.1]{Jost_2017_RGGA} implies that each pair of points in $\mathcal{R}$ can be joined by a distance minimizing geodesic, i.e.\ it is geodesically complete \citep[Definition 1.7.1]{Jost_2017_RGGA}.  Therefore, there exists a pair of geodesics $\gamma_{[C:A]}:[0,1]\rightarrow \mathcal{R}$ and $\gamma_{[C:B]}:[0,1]\rightarrow \mathcal{R}$ satisfying
\begin{align}
\label{eq:geodesic_definitions}
    \gamma_{[C:i]}(0) = \phi(C) 
    \mbox{ and }
    \gamma_{[C:i]}(1) = \phi(i) 
\end{align}
for $i\in \{A,B\}$.  
Since $\varphi$ is an isometric embedding and $\{C,D\}\in [C:A]\cap [C:B]$ then~\eqref{eq:unique_paths} implies that there is some $0<t_2<1$ for which
\begin{align}
\label{eq:impossiblesplitting_the_bad_point}
    \gamma_{[C:A]}(t_2) = \gamma_{[C:B]}(t_2) = \varphi(D)
.
\end{align}
Now,~\eqref{eq:geodesic_definitions} and the local uniqueness of geodesics in $(\mathcal{R},g)$ about any point, in particular about $\varphi(D)$ (see \citep[Theorem 1.4.2]{Jost_2017_RGGA}) imply that there is some $\varepsilon>0$ such that
\begin{align}
\label{eq:impossiblesplitting_coincidence}
    \gamma_{[C:A]}(s) = \gamma_{[C:B]}(s)
,
\end{align}
for all $t_2-\varepsilon \le s\le t_2 + \varepsilon$.  
Now since $A\neq B$ and since $\varphi$ is injective then $\varphi(A)\neq \varphi(B)$.  
However,~\eqref{eq:geodesic_definitions} and~\eqref{eq:impossiblesplitting_coincidence} cannot simultaneously hold for $t_2<s<t_2+\varepsilon$; thus we have a contradiction.  Consequentially, a pair $\big(\varphi,(\mathcal{R},d_{\mathcal{R}})\big)$ cannot exist.
\end{proof}

\section{Details on Examples}
\label{s:Further_Details}
This appendix contains further details, explaining derivations of particular examples within the paper's main text.  

\begin{exampledets}[Details for Example~\ref{ex:Family_of_Examples}]
\label{ex:Family_of_Examples__DETS}
This follows from \citep[Proposition 3.2]{GenShoenberg_2019_AA} and the Hausdorff-Bernstein-Widder theorem, see \citep[Theorem IV.12a]{Widder_LaplaceTransformBook1941}, together states that $f\circ u$ satisfies Assumption~\ref{assumptions:boundedmetric_MGF} if $f$ does and if $u:t\mapsto \frac{t}{a+t}$ is a \textit{Bernstein function}, i.e.\ a continuous function $f:[0,\infty)\rightarrow[0,\infty)$ which is smooth on $(0,\infty)$ and whose derivatives satisfy $(-1)^k\partial_t^k\,f(t)\le 0$ for all $t\ge 0$ and all $k\in \mathbb{N}_+$.  A long list of Bernstein functions which can be found in the several tables in \citep[Section 16.2]{ShillingSongZoran_BersteinFunctionsBook_2012deGruyter}.  For instance, the map $t\mapsto (1+\sum_{k=1}^K\,|t|^{r_k})^{-1}$ is one a Bernstein function, see \citep[Corollary 6.3]{ShillingSongZoran_BersteinFunctionsBook_2012deGruyter}, and $f(t)=|t|^{\beta}$ satisfies Assumption~\ref{assumptions:boundedmetric_MGF}.
\end{exampledets}

\begin{exampledets}[Details for Example~\ref{ex:inverse_multiqudratics}]
\label{ex:inverse_multiqudratics__DETS}
This holds, by arguing as in the previous example, since $t\mapsto e^{-b\,t}$ satisfies Assumption~\ref{assumptions:boundedmetric_RBF} and since $t\mapsto \frac{-(t-1)}{log(t)}$ is a Bernstein function, by \citep[Corollary 6.3]{ShillingSongZoran_BersteinFunctionsBook_2012deGruyter}.
\end{exampledets}

\section{Discrete Edge Sampling and the Discrete Differentiable Graph Module}
\label{s:Discrete Edge Sampling Algorithms}

Most latent graph inference models require generating a discrete graph based on a similarity measure between latent node representations. The discrete Differentiable Graph Module (dDGM)~\citep{Kazi_2022} has served as a source of inspiration for numerous studies in the field of latent graph inference~\citep{Manifold-dDGM,borde2023projections,battiloro2023latent}. It generates a sparse $k$-degree graph using the Gumbel Top-k trick~\citep{top-k}, a stochastic relaxation of the kNN rule, to sample edges from the probability matrix $\mathbf{P}^{(l)}(\mathbf{X}^{(l)};\mathbf{\Theta}^{(l)},T)$, where each entry corresponds to

\begin{equation}
    p_{ij}^{(l)}= \exp(-\varphi(\mathbf{\hat{x}}_{i},\mathbf{\hat{x}}_{j},T)).
    \label{pij}
\end{equation}

where $T$ is a learnable temperature parameter, $\hat{x}$ are latent node feature representation, and $\varphi$ is some similarity measure. In practice, the main similarity measure used in~\cite{Kazi_2022} was to compute the distance based on the features of two nodes in the graph embedding space, which was assumed to be Euclidean. Based on 

\begin{equation}
    \textrm{argsort}(\log(\mathbf{p}_{i}^{(l)})-\log(-\log(\mathbf{q})))
\end{equation}

where $\mathbf{q} \in \mathbb{R}^{N}$ is uniform i.i.d in the interval $[0,1]$, we can sample the edges

\begin{equation}
    \mathcal{E}^{(l)}(\mathbf{X}^{(l)};\mathbf{\Theta}^{(l)},T,k)=\{(i,j_{i,1}),(i,j_{i,2}),...,(i,j_{i,k}):i=1,...,N\},
\end{equation}

where $k$ is the number of sampled connections using the Gumbel Top-k trick. This sampling approach follows the categorical distribution $\frac{p_{ij}^{(l)}}{\Sigma_{r}p_{ir}^{(l)}}$ and $\mathcal{E}(\mathbf{X}^{(l)};\mathbf{\Theta}^{(l)},T,k)$ is represented by the unweighted adjacency matrix $\mathbf{A}^{(l)}(\mathbf{X}^{(l)};\mathbf{\Theta}^{(l)},T,k)$. Note that including noise in the edge sampling approach will result in the generation of some random edges in the latent graphs which can be understood as a form of regularization.

\section{Graph Learning Algorithms: Training and Backpropagation}
\label{s:Graph Learning Algorithms, Training and Backpropagation}

The optimization of the baseline node feature learning component in the architecture, that is, the standard GNN part relies on the loss of the downstream task. In particular, for classification, the cross-entropy loss is utilized. However, it is also necessary to update the parameters of graph learning modules such as the DGM~\citep{Kazi_2022}, the DCM~\citep{battiloro2023latent}, and the neural snowflake. To accomplish this, we implement a compound loss that provides incentives for edges contributing to accurate classification while penalizing edges that lead to misclassification. We introduce a reward function 

\begin{equation*}
    \delta\left(y_i,\hat{y}_i\right)=\mathds{E}(ac_i)-ac_i.
\end{equation*}

The aforementioned disparity is calculated as the difference between the mean accuracy of the $i$th sample and the present accuracy of the prediction. Here, $y_i$ and $\hat{y}_i$ represent the true and predicted labels, respectively, while $ac_i$ is assigned a value of 1 if $y_i=\hat{y}_i$, and 0 otherwise. The loss function for graph learning is formulated in terms of the reward function:

\begin{equation}
    L_{GL}=\sum_{i=1}^{N}\left(\delta\left(y_i,\hat{y}_i\right)\sum_{l=1}^{l=L}\sum_{j:(i,j)\in\hat{\varphi}^{(l)}}\log p_{ij}^{(l)}\right),
    \label{additional_loss}
\end{equation}

and it approximates the gradient of the expectation $\mathds{E}_{(\mathcal{G}^{(1)},...,\mathcal{G}^{(L)})\sim(\mathbf{P}^{(1)},..,\mathbf{P}^{(L)})}\sum_{i=1}^{N}\delta\left(y_i,\hat{y}_i\right).$ The expectation $\mathds{E}(ac_i)^{(t)}$ is calculated based on

\begin{equation}
    \mathds{E}(ac_i)^{(t)}=\beta\mathds{E}(ac_i)^{(t-1)}+(1-\beta)ac_i,
\end{equation}

with $\beta=0.9$ and $\mathds{E}(ac_i)^{(t=0)}=0.5$. For further details refer to~\cite{Kazi_2022} and~\cite{Manifold-dDGM}.

\section{Neural Snowflake for Latent Graph Inference Algorithms}
\label{appendix: Neural Snowflake Applied to Latent Graph Inference Algorithms}

This appendix includes Algorithm~\ref{alg:cap_forward},~\ref{alg:cap_dgm}, and~\ref{alg:cap_neural_snowflake}.  They summarize how learned latent graphs are incorporated into a standard GNN pipeline, how the dDGM samples graph edges, and how the neural snowflake architecture computes similarities between latent node representations, respectively. Superscripts are employed to denote layer-specific quantities, while subscripts are utilized for indices.

\begin{algorithm}
\caption{Node Level Prediction leveraging Inferred Latent Graph (Forward Pass)}\label{alg:cap_forward}
\begin{algorithmic}
\SetAlgoLined
\Require $\mathbf{X}, \mathbf{A}$ \Comment{Node Features and Adjacency Matrix} \\
\Return $\mathbf{Y}$ \Comment{Predicted Node Labels}

\DontPrintSemicolon
    \State $\mathbf{X}^{(0)} \gets \mathbf{X}$
    \State $\mathbf{\hat{A}} \gets \texttt{DGM}(\mathbf{X}^{(0)},\mathbf{A})$ \Comment{Refer to Algorithm~\ref{alg:cap_dgm}}
    \State \SetKwBlock{ForParallel}{For $l = 1$ to $L$}{end}
    \ForParallel{\State 
        \State $\mathbf{X}^{(l)} \gets \texttt{GNN}^{(l)}(\mathbf{X}^{(l-1)}, \mathbf{\hat{A}})$ \Comment{GNN diffusion layers}
    }
    \State $\mathbf{Y} \gets \texttt{MLP}(\mathbf{X}^{(L)})$ \Comment{Node level prediction}
\end{algorithmic}   
\end{algorithm}

Algorithm~\ref{alg:cap_dgm}, is a mild modification of the differentiable graph model of~\cite{Kazi_2022}.  Briefly, it allows the GNN to update its graph in a differentiable manner.

\begin{algorithm}
\caption{Discrete Differentiable Graph Module (modified based on~\cite{Kazi_2022})}\label{alg:cap_dgm}
\begin{algorithmic}
\SetAlgoLined
\Require $\mathbf{X}, \mathbf{A}$ \Comment{Node Features and Adjacency Matrix} \\
\Return $\mathbf{\hat{A}}$ \Comment{Latent Graph}

\DontPrintSemicolon
    \State $\mathbf{\hat{X}} \gets f(\mathbf{X},\mathbf{A})$ \Comment{Transform node features}
    \State $s_{ij} \gets \texttt{Neural Snowflake}(\mathbf{\hat{x}_i},\mathbf{\hat{x}_j})$ \Comment{Compute similarity measures. Refer to Algorithm~\ref{alg:cap_neural_snowflake}}
    \State $p_{ij} \gets g(s_{ij})$ \Comment{Compute edge sampling probabilities based on similarities}
    \State \SetKwBlock{ForParallel}{For $i = 1$ to $N$}{end}
    \ForParallel{\State 
        \State $\mathbf{q} \sim U(0,1)$ \Comment{Uniform i.i.d.}
        \State $\mathbf{j}_{\{k\}}=\texttt{argtopk}(\log\mathbf{p}_i-\log(-\log(\mathbf{q}_i)))$
        \State $\hat{a}_{ij}= \begin{cases}
        1 & \text{$j \in \mathbf{j}_{\{k\}}$}\\
        0 & \text{otherwise}\\
        \end{cases}$
    }
    \State $\mathbf{\hat{A}} \gets \hat{a}_{ij}$ \Comment{Discrete Latent Unweighted Graph Prediction}
\end{algorithmic}   
\end{algorithm}

Algorithm~\ref{alg:cap_neural_snowflake} describes how the neural snowflake processes node-level features to distances.

\begin{algorithm}
\caption{Neural Snowflake Processing}\label{alg:cap_neural_snowflake}
\begin{algorithmic}
\SetAlgoLined
\Require $\mathbf{\hat{x}_i},\mathbf{\hat{x}_j}$ \Comment{Two Node Features Vectors} \\
\Return $s_{ij}$ \Comment{Distance similarity measure}

\DontPrintSemicolon
    \State $t \gets ||\mathbf{\hat{x}_i}-\mathbf{\hat{x}_j}||_{2}$ \Comment{Euclidean Distance}
    \State $t^{(0)} \gets t$
    \State \SetKwBlock{ForParallel}{For $l = 1$ to $I$}{end}
    \ForParallel{\State 
        \State $\hat{t}^{(l-1)} \gets A^{(l)}t^{(l-1)}$ \Comment{Linear Projection}
        \State $\Sigma^{(l)} \gets \sigma^{(l)}(\hat{t}^{(l-1)})$ \Comment{Trainable Snowflake Activation (Equation~\ref{eq:activation})}
        \State $t^{(l)} \gets B^{(l)}\Sigma^{(l)}C^{(l)}$ \Comment{Linear Projections}
    }
    \State $s_{ij} \gets t_{I}^{1+|p|}$ \Comment{Quasi-metric}
\end{algorithmic}   
\end{algorithm}

\section{Computational Implementation Details}
\label{s:Computational Implementation}

In this appendix, we present additional information regarding the practical implementation of trainable snowflake activations and neural snowflakes, beyond the theoretical foundation discussed in the main text. We address certain instabilities encountered during the training process and propose methods to mitigate them. Our objective is to provide a deeper understanding of our findings, hoping that this will contribute to the advancement of neural snowflakes in future iterations.

\textbf{Hardware and Symbolic Matrices.}  In line with previous work, for most of the experiments, we utilized GPUs such as the NVIDIA Tesla T4 Tensor Core with 16 GB of GDDR6 memory, NVIDIA P100 with 16 GB of CoWoS HBM2 memory, or NVIDIA Tesla K80 with 24 GB of GDDR5 memory. These GPUs have limited memory capacities that are easily surpassed during backpropagation when dealing with datasets other than Cora and CiteSeer. One of the primary computational limitations of the Differentiable Graph Module and other latent graph inference techniques is the necessity to compute distances between latent representations for all nodes in order to generate the latent graph. While the discrete graph sampling method utilized by dDGM offers improved computational efficiency compared to its continuous counterpart, cDGM, due to the creation of sparse graphs that lighten the burden on convolutional operators, we encounter memory constraints when dealing with graph datasets containing a large number of nodes, on the order of $10^{4}$ nodes. To determine whether a connection should be established, we must calculate distances between all points starting from a pointcloud. However, this poses a challenge as the computational complexity scales quadratically with the number of nodes in the graph. Consequently, as the graph size increases, the computation quickly becomes intractable. To address the issue of potential memory overflows, we adopt Kernel Operations (KeOps)~\citep{keops}, as recommended by previous studies on latent graph inference. KeOps enables us to perform computations on large arrays by efficiently reducing them based on a mathematical formula.

\textbf{Trainable Snowflake Activation Preliminary Experiments: Stability and Initialization.} Although in~\eqref{eq:activation_new} the $\alpha$, $\beta$ and $\gamma$ parameters are introduced as trainable parameters for the sake of generality, we find that during backpropagation this exponential learnable terms can lead to instabilities, hence, we set them all to $\alpha=\beta=\gamma=1$. In future research it could be explored how to stabilize these and whether this additional flexibility proves advantageous empirically. We run some initial experiments to assess the stability of the trainable snowflake activation. In particular we work with the homophilic benchmarks Cora and Citeseer and we incorporate the snowflake activation to dDGM with Euclidean space and which feeds its infered latent graph to a Graph Convolutional Network of 3 layers with ELU activation functions. That is, the snowflake activation takes as input the euclidean distance between latent graph nodes computed by the dDGM. 

We observe that in this particular configuration the $p$ parameter that controls how much the quasi-metric deviates from being a metric can be a problematic parameter during training. Interestingly, this does not seem to be the case, when running synthetic experiments with a full neural snowflake (see Appendix \ref{s:Experimental Results Supplementary Material}). This could be attributed to the fact that in the literature in this setup the dDGM is trained with a learning rate of $10^{-2}$, which is too aggressive to update the $p$ parameter. For Cora the dDGM leveraging Euclidean space and using the original dataset graph as inductive bias achieves an accuracy of $82.40 \pm 3.22$ (mean $\pm$ standard deviation), using the off-the-shelf snowflake activation we obtain $40.33 \pm 11.62$ which presents a clear drop in performance. The observed large standard deviation indicates that the model frequently becomes trapped in local minima during optimization. This can be mitigated by either setting $p=0$ and learning a metric, or by using a different optimizer with a lower learning rate ($10^{-4}$, for example) to update the parameter $p$. This configurations lead to accuracies of $85.48 \pm 2.74$ and $86.11 \pm 3.72$ respectively for Cora, which clearly surpass the performance using Euclidean space. In the case of Citeseer using the original dDGM we get an accuracy of $73.40 \pm 1.64$, using a snowflake activation with $p=0$ we obtain $73.85 \pm 2.34$, and using a learnable $p$ with a learning rate of $10^{-4}$ we achieve $74.40 \pm 2.08$. Note that for the cases in which we learn $p$ with a different optimizer, $p$ is initialized at $p=10^{-8}$ to start from a metric and slowly learn a quasi-metric. These experiments show that the quasi-metric relaxation can provide the snowflake activation with additional representation capabilities and flexibility but should be used with care. For all the rest of experiments in this paper we learn a quasi-metric when implementing the snowflake activation but use a slower optimizer for the $p$ parameter. In these experiments the coefficients $C_1$, $C_2$, and $C_3$ were initialized to 1. We use the absolute value function during training to ensure they stay non-negative. Note that these experiments were conducted using the Gumbel Top-k trick edge sampling algorithm with $k=7$ and an embedding dimensionality of $4$. 

\textbf{Neural Snowflakes.} From a computational perspective, it is more reliable to assign fixed coefficients $a$ and $b$, instead of backpropagating through them. Specifically, we set $a = 1$ and $b = 1$ for all experiments. Matrices $A$, $B$, and $C$ weights are initialized by sampling from a uniform distribution ranging from 0 to 1 and normalized by the matrix dimensions. For example, in the case of $A$, the weights are sampled from a distribution between 0 and $1/(d_{A1}d_{A2})$, where $d_{A1}$ is the number of rows and $d_{A2}$ is the number of columns of the matrix. We experimentally observe that other initializations such as drawing the weights from a Gaussian or using Xavier initialization can lead to instabilities and exploding numbers in the forward pass. To guarantee non-negativity of all weights throughout training, we apply an absolute function activation to the weights. $p$ is initialized to $p=1e-8\approx 0$ , so that we start from a metric space and gradually learn a quasi-metric space. Furthermore, it should be noted that the learnable coefficient $p$ is exclusively applied in the last layer of the neural snowflake model. This design choice allows us to track the coefficient $C$, which represents the relaxation of the triangle inequality. In our synthetic experiments, we employ a readily available neural snowflake model, while for latent graph inference, we utilize a weighted skip connection. The neural snowflake model takes the Euclidean distance between latent representations as input. We observe that employing a skip connection and initiating training with an almost Euclidean metric proves beneficial, particularly during the early stages of training.

\section{Experimental Results Supplementary Material}
\label{s:Experimental Results Supplementary Material}

Within this appendix, we provide supplementary information regarding the experimental results discussed in the main text. This encompasses details about the train and test splits, the precise training configurations applied, as well as supplementary visual representations illustrating the evolution of the model training process, along with additional experiments and their corresponding results.

\textbf{Synthetic Experiments.} All models are trained using the Adam optimizer with a learning rate of $1\times 10^{-4}$, for 40 epochs and with a batch size of 1,000. After approximately 20 epochs, we notice a tendency for learning to reach a plateau, particularly when dealing with neural snowflakes. As mentioned in the main text, for our experimental analysis, we concentrate on fully connected graphs. In this setup, the node coordinates are sampled randomly from a multivariate Gaussian distribution within a 100-dimensional hypercube in Euclidean space, represented as $\mathbb{R}^{100}$. The graph weights are determined based on the metrics outlined in Table~\ref{table:synthetic experiments}. The training sets have 4,000,000 data points and the test sets 10,000. We observe very little discrepancy between the performance of the models for training and testing sets.

The MLP model, which works in isolation and aims at approximating the metrics using Euclidean space $\|\textrm{MLP}(\mathbf{x}) - \textrm{MLP}(\mathbf{y})\|$, has a total of 5422 parameters, and its composed of 10 linear layers with 20 hidden dimensions and ReLU activation functions. The neural snowflake learning the metric on $\mathbb{R}^2$: $f(\|\textrm{MLP}(\mathbf{x}) - \textrm{MLP}(\mathbf{y})\|)$, is composed of 2 layers with hidden dimension of 20. Note that in Section~\ref{ss:NeuralSnowflakes} we define $A^{(i)}$ as a $\tilde{d}_{i}\times d_{i-1}$ matrix, $B^{(i)}$ as a $d_{i}\times \tilde{d}_{i}$-matrix. However, in this experiments we set $\tilde{d}_{i}=d_{i}=20$. The MLP used alongside the neural snowflake to project the node features in $\mathbb{R}^{100}$ to $\mathbb{R}^2$ consists of 5 layers with hidden dimension 20 with a total of 3,322 model parameters and also uses ReLU activations. Lastly, for the third case in which the neural snowflake learns directly in $\mathbb{R}^{100}$: $f(\|\mathbf{x} - \mathbf{y}\|)$ we reuse the same neural snowflake architecture as before with a total of 847 learnable parameters. 

Additionally, we provide some plots of the training loss function evolution during learning for the synthetic graph embedding experiments in Figure~\ref{fig:synthetic_training_loss}. As we can observe from the plots the neural snowflakes learning in $\mathbb{R}^{100}$ converge faster, whereas neural snowflakes in $\mathbb{R}^{2}$ tend to get stuck in local minima at the beginning of training and eventually achieve comparable performance to their higher-dimensional counterparts. On the contrary, MLPs operating in $\mathbb{R}^{2}$ demonstrates significantly poorer performance, they achieve a higher loss with a higher variance during the training process. In the main text we provided results for the synthetic experiments in terms of the test set performance; we additionally provide results for the training set in Table~\ref{table:synthetic experiments train}.

\begin{figure}[hbtp!]
    \centering
    \begin{subfigure}[b]{0.32\textwidth}
        \includegraphics[width=\textwidth]{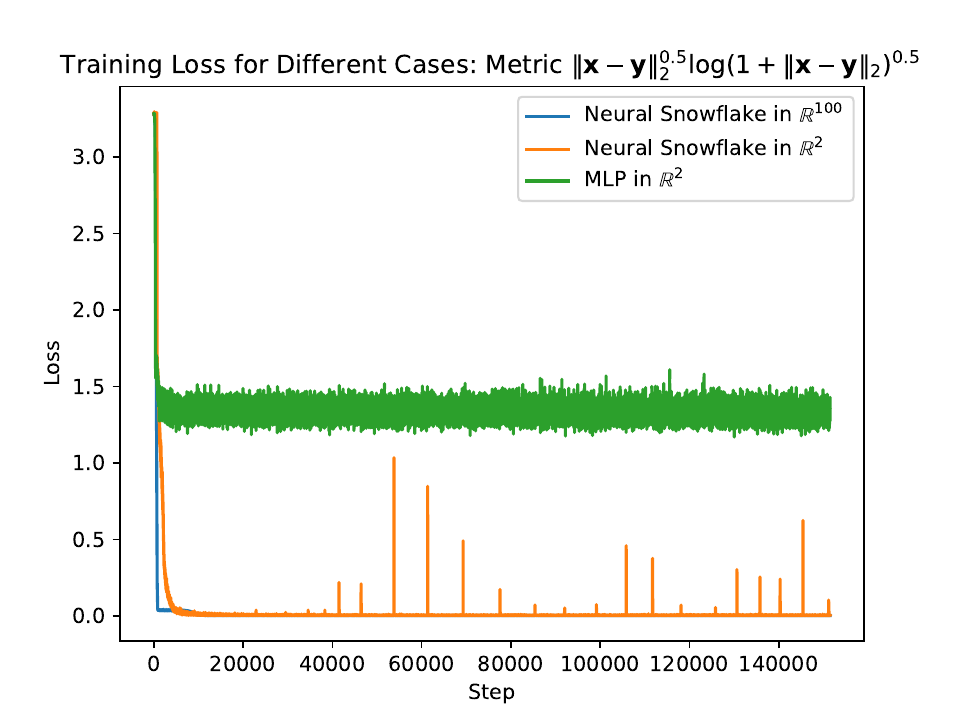}
        \caption{}
        \label{fig:subfig1}
    \end{subfigure}
    \hfill
    \begin{subfigure}[b]{0.32\textwidth}
        \includegraphics[width=\textwidth]{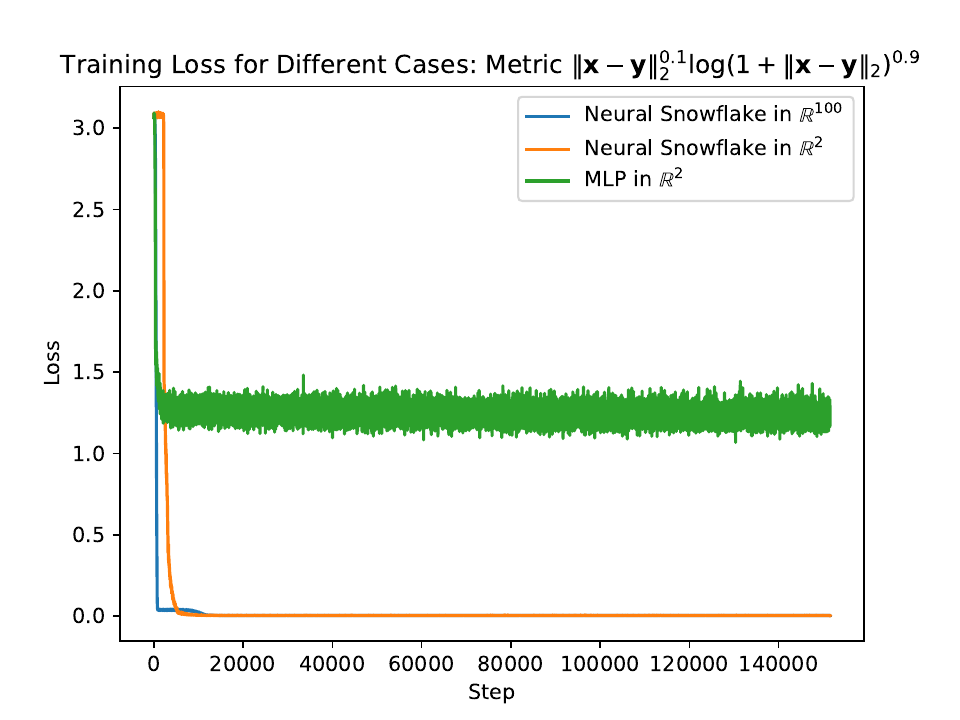}
        \caption{}
        \label{fig:subfig2}
    \end{subfigure}
    \hfill
    \begin{subfigure}[b]{0.32\textwidth}
        \includegraphics[width=\textwidth]{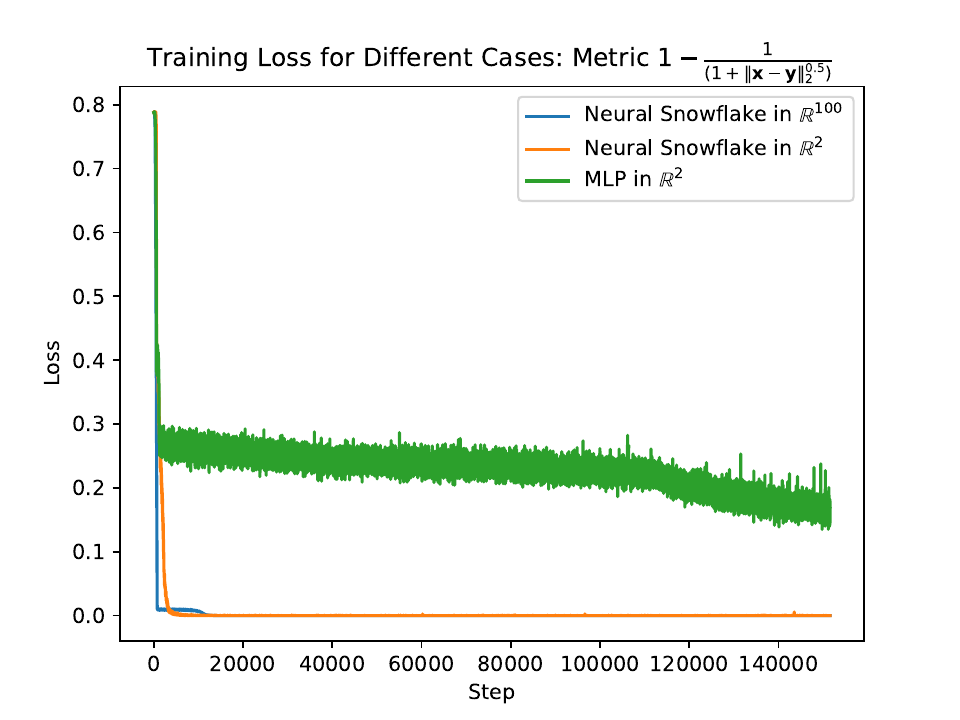}
        \caption{}
        \label{fig:subfig3}
    \end{subfigure}
    \vskip\baselineskip
    \begin{subfigure}[b]{0.32\textwidth}
        \includegraphics[width=\textwidth]{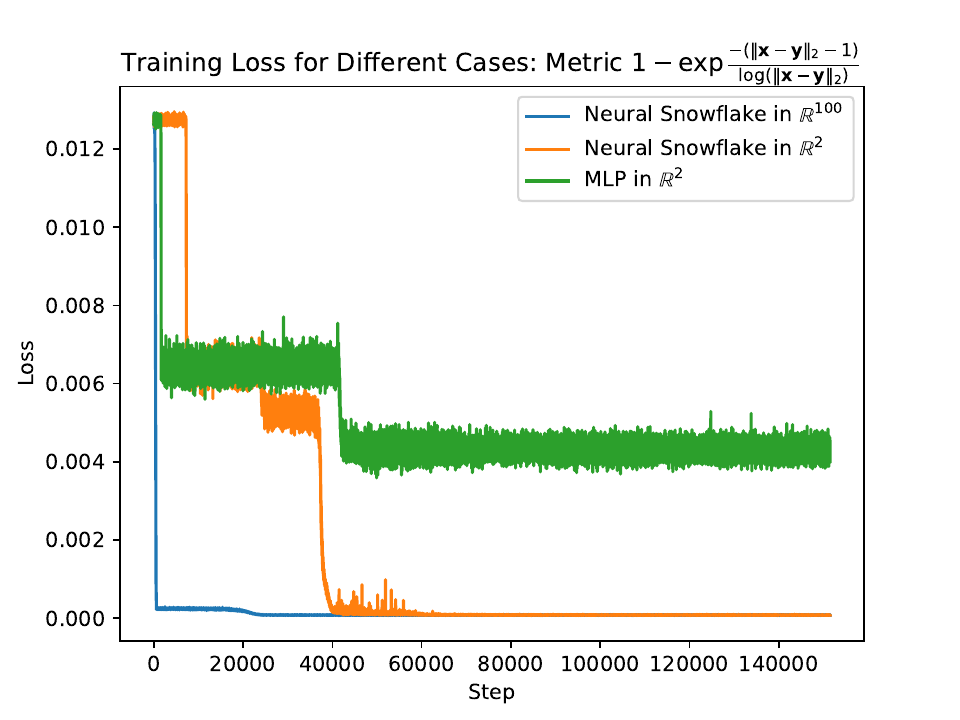}
        \caption{}
        \label{fig:subfig4}
    \end{subfigure}
    \hfill
    \begin{subfigure}[b]{0.32\textwidth}
        \includegraphics[width=\textwidth]{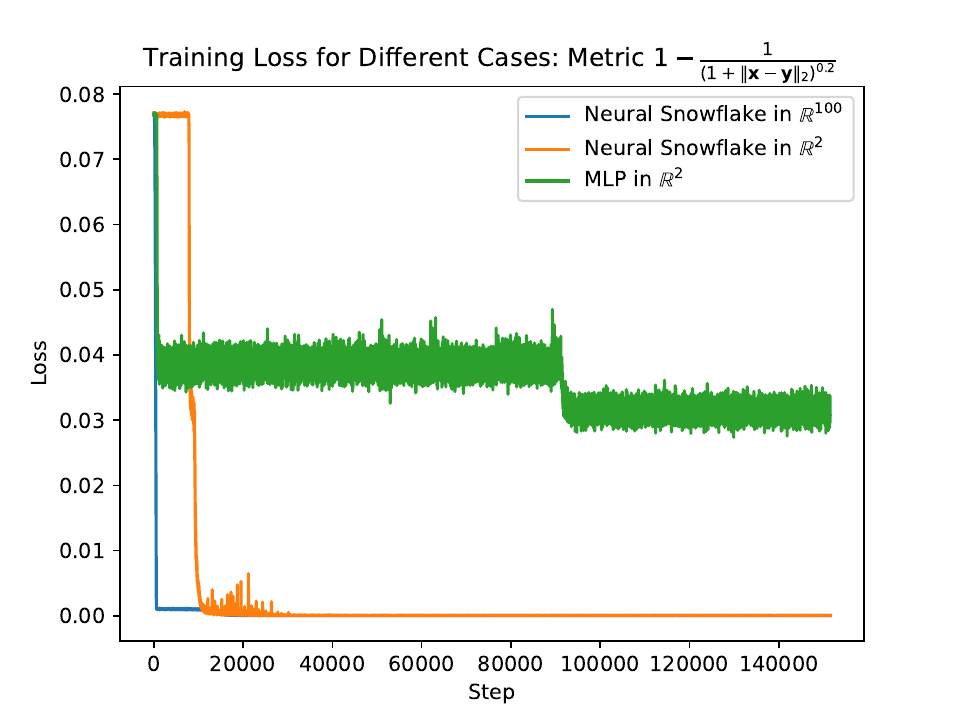}
        \caption{}
        \label{fig:subfig5}
    \end{subfigure}
    \hfill
    \begin{subfigure}[b]{0.32\textwidth}
        \includegraphics[width=\textwidth]{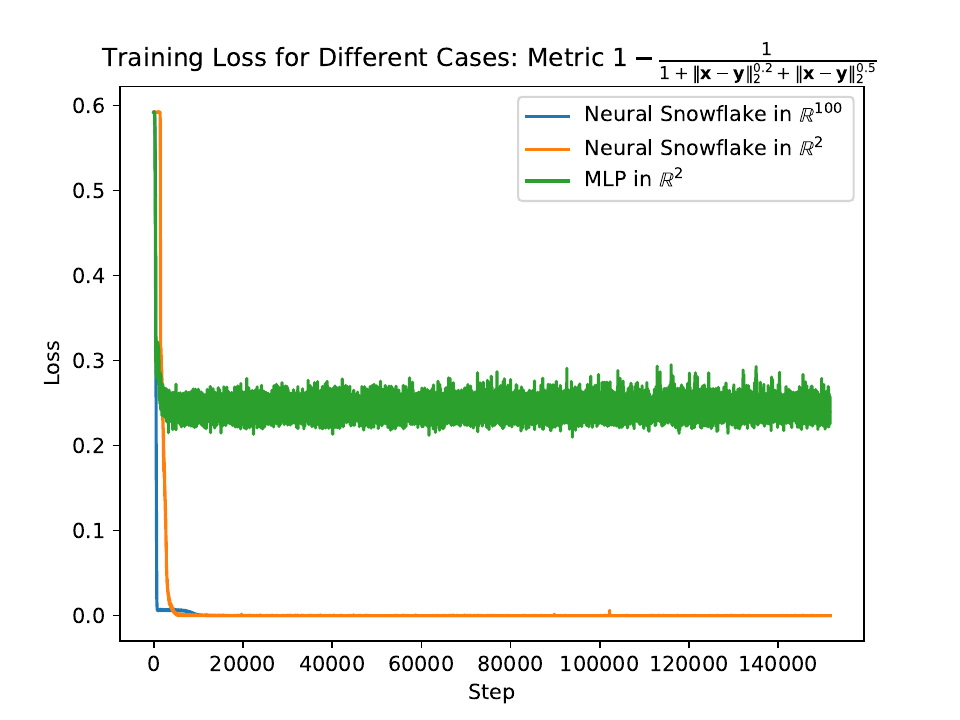}
        \caption{}
        \label{fig:subfig6}
    \end{subfigure}
    \caption{Training losses for synthetic graph embedding experiments. We compare using Euclidean space for encoding the weighted graphs to using snowflake quasi-metric spaces.}
    \label{fig:synthetic_training_loss}
\end{figure}

\begin{table*}[htbp!]
\small
\caption{Results for synthetic graph embedding experiments, mean square error for training set. The Neural Snowflake models are able to learn the metric better with substantially lesser number of model parameters.} 
\centering
\scalebox{0.65}{
\begin{tabular}{lccc}
    \toprule
      &\textbf{MLP} & \textbf{Neural Snowflake (+ MLP)}  &   \textbf{Neural Snowflake}\\ 
    No. Parameters & 5422  & 4169  &  847 \\ 
    Embedding space, $\mathbb{R}^n$ & 2 &  2 &  100 \\\midrule
     Metric &  \multicolumn{3}{c}{Mean Square Error}  \\ \midrule
     $\|\mathbf{x} - \mathbf{y}\|^{0.5}\log(1+\|\mathbf{x} - \mathbf{y}\|)^{0.5}$ &  1.3299 &  0.0037 &  $\mathbf{0.0029}$ \\\midrule
     $\|\mathbf{x} - \mathbf{y}\|^{0.1}\log(1+\|\mathbf{x} - \mathbf{y}\|)^{0.9}$ &  1.2242 &  0.0032 &  $\mathbf{0.0031}$\\\midrule

     $1- \frac{1}{(1+\|\mathbf{x} - \mathbf{y}\|^{0.5})}$  & 0.0777  & $\mathbf{0.00004}$   & $\mathbf{0.00004}$
     \\\midrule

     $1 - \exp{\frac{-(\|\mathbf{x} - \mathbf{y}\|-1)}{\log(\|\mathbf{x} - \mathbf{y}\|)}}$  & 0.1649 & 0.00009 & $\mathbf{0.00008}$
     \\\midrule

     $1- \frac{1}{(1+\|\mathbf{x} - \mathbf{y}\|)^{0.2}}$  &  0.0314 &  $\mathbf{0.00005}$  & $\mathbf{0.00005}$\\\midrule

     $1- \frac{1}{1+\|\mathbf{x} - \mathbf{y}\|^{0.2}+\|\mathbf{x} - \mathbf{y}\|^{0.5}}$  & 0.2420 & $\mathbf{0.00002}$ & $\mathbf{0.00002}$\\

    \bottomrule
\end{tabular}
}
\label{table:synthetic experiments train}
\end{table*}

\textbf{Snowflake activation.} Before working with neural snowflakes, we evaluate the performance of snowflake activations. This is a straightforward way of augmenting existing latent graph inference algorithms with additional representation power. In appendix~\ref{s:Computational Implementation}, we have already covered the details regarding the activation function's computational implementation. We intend to adhere to the specifications outlined in that section. We start by running some preliminary studies in which we compare the performance of a GCN equipped with a dDGM module using different latent embedding metric spaces. To ensure a fair comparison in terms of latent space geometry, we have set the dimensionality of the latent embedding space to 4. By keeping the dimensionality fixed, we ensure that we only modify the manifold used for embedding the representations, without altering the dimensionality of the embedding space itself. We start by evaluating the different algorithms on benchmark homophilic graph datasets such as Cora, CiteSeer, CS, and Physics. We start by comparing our model to previous metric spaces used in the literature such as Euclidean space~\citep{Kazi_2022}. In Table~\ref{tab:homophilic_latent_space_dim_4} we can observe that the snowflake activation helps achieve higher accuracies, leading to improvements between $1\%-5\%$.

\begin{table}[H]
    \centering
    \caption{Latent graph inference results across a variety of benchmark homophilic graph datasets. For all experiment the latent space dimensionality used to infer latent graphs is fixed to 4, and the Gumbel top-k trick is used with $k = 7$.}
    \label{tab:homophilic_latent_space_dim_4}
    \scalebox{0.65}{
    \begin{tabular}{lccccccc}
    \toprule 
         &
         & &
         \textbf{Cora} &  
         \textbf{CiteSeer} & 
         \textbf{CS} &
         \textbf{Physics}
         
         \\ 
         \midrule
         
         Model  & Metric Space & Input Graph &
         \multicolumn{4}{c}{Accuracy $(\%)$ $\pm$ Standard Deviation}
         
         \\

        \midrule

         DGM  & Euclidean & Yes &
         $ 82.40 {\scriptstyle \pm 3.22}$ &
         $ 73.40 {\scriptstyle \pm 1.64}$ &
         $ 85.45 {\scriptstyle \pm 2.23}$ &  
         $ 95.91  {\scriptstyle \pm 0.51 }$\\ 

         DGM  & Snowflake & Yes &
         $ 86.11 {\scriptstyle \pm 3.72}$ &
         $ 74.40 {\scriptstyle \pm 2.08}$ &
         $ 89.54 {\scriptstyle \pm 2.48}$ &  
         $   96.08 {\scriptstyle \pm  0.46}$\\ \midrule

         DGM  & Euclidean & No &
         $ 62.03 {\scriptstyle \pm 6.20}$ &
         $ 65.15 {\scriptstyle \pm 4.84}$ &
         $ 84.37 {\scriptstyle \pm 1.20}$ &  
         $ 95.11  {\scriptstyle \pm 0.33 }$\\ 

         DGM  & Snowflake & No &
         $ 67.33 {\scriptstyle \pm 4.10}$ &
         $ 64.63 {\scriptstyle \pm 9.24}$ &
         $ 87.63 {\scriptstyle \pm 5.25}$ &  
         $ 95.32  {\scriptstyle \pm  0.45}$\\

         \midrule

         MLP & Euclidean & No &
         $ 58.92 {\scriptstyle \pm 3.28}$ &
         $ 59.48 {\scriptstyle \pm 2.14}$ &
         $  87.80 {\scriptstyle \pm 1.54 }$ &
         $ 94.91 {\scriptstyle \pm 0.30}$ &  
         \\

         \bottomrule
         
    \end{tabular}
    }
\end{table}

Next, we increase the dimensionality of the latent space from 4 to 8 and evaluate whether this trend still persists. In Table~\ref{tab:homophilic_latent_space_dim_8}, we can observe that as we increase the dimensionality of the latent space used for inferring the latent graph the performance increases when using both Euclidean or snowflake quasi-metric spaces. We can also see that the difference between the two becomes less significant, except for the CS dataset in which the snowflake activation still performs substantially better.

\begin{table}[H]
    \centering
    \caption{Latent graph inference results with latent space dimensionality fixed to 8, and the Gumbel top-k trick is used with $k = 7$.}
    \scalebox{0.65}{
    \begin{tabular}{lccccccc}
    \toprule 
         &
         & &
         \textbf{Cora} &  
         \textbf{CiteSeer} & 
         \textbf{CS} &
         \textbf{Physics}
         
         \\ 
         \midrule
         
         Model  & Metric Space & Input Graph &
         \multicolumn{4}{c}{Accuracy $(\%)$ $\pm$ Standard Deviation}
         
         \\

        \midrule

         DGM  & Euclidean & Yes &
         $ 85.77 {\scriptstyle \pm 3.64}$ &
         $ 73.67 {\scriptstyle \pm 2.30}$ &
         $ 90.50 {\scriptstyle \pm 1.89}$ &  
         $ 96.08 {\scriptstyle \pm 0.41}$\\ 

         DGM  & Snowflake & Yes &
         $ 85.41 {\scriptstyle \pm 3.70}$ &
         $ 74.19 {\scriptstyle \pm 2.08}$ &
         $ 92.98 {\scriptstyle \pm 0.66}$ &  
         $ 96.15 {\scriptstyle \pm 0.54}$\\ \midrule

         DGM  & Euclidean & No &
         $ 68.37 {\scriptstyle \pm 5.39}$ &
         $ 68.10 {\scriptstyle \pm 2.80}$ &
         $ 88.17 {\scriptstyle \pm 2.64}$ &  
         $ 95.27 {\scriptstyle \pm 0.41}$\\ 

         DGM  & Snowflake & No &
         $ 69.51 {\scriptstyle \pm 4.42}$ &
         $  66.86 {\scriptstyle \pm 2.82}$ &
         $ 88.67 {\scriptstyle \pm 3.21}$ &  
         $  95.36 {\scriptstyle \pm 0.23}$\\

         \midrule

         MLP & Euclidean & No &
         $ 58.92 {\scriptstyle \pm 3.28}$ &
         $ 59.48 {\scriptstyle \pm 2.14}$ &
         $  87.80 {\scriptstyle \pm 1.54 }$ &
         $ 94.91 {\scriptstyle \pm 0.30}$ &  
         \\

         \bottomrule
         
    \end{tabular}
    }
    
    \label{tab:homophilic_latent_space_dim_8}
\end{table}

In the specific scenario presented in Table~\ref{tab:homophilic_latent_space_dim_2}, we observe a contrasting effect. As the latent space dimension is reduced to only 2, the performance of both models deteriorates. Nonetheless, it is notable that the use of the snowflake metric space demonstrates greater resilience compared to its Euclidean counterpart. In fact, we can observe performance discrepancies of up to $20\%$ between the two. This is in line with previous synthetic experiments, and demonstrates that snowflake quasi-metric spaces are more efficient at compressing the same information in low-dimensional spaces.

\begin{table}[H]
    \centering
    \caption{Latent graph inference results with latent space dimensionality fixed to 2, and the Gumbel top-k trick is used with $k = 7$.}
    \scalebox{0.65}{
    \begin{tabular}{lccccccc}
    \toprule 
         &
         & &
         \textbf{Cora} &  
         \textbf{CiteSeer} & 
         \textbf{CS} &
         \textbf{Physics}
         
         \\ 
         \midrule
         
         Model  & Metric Space & Input Graph &
         \multicolumn{4}{c}{Accuracy $(\%)$ $\pm$ Standard Deviation}
         
         \\

        \midrule

         DGM  & Euclidean & Yes &
         $ 59.25 {\scriptstyle \pm 14.60}$ &
         $ 70.00 {\scriptstyle \pm 2.15}$ &
         $ 62.15 {\scriptstyle \pm 2.92}$ &  
         $ 92.01 {\scriptstyle \pm 2.74}$\\ 

         DGM  & Snowflake & Yes &
         $ 79.44 {\scriptstyle \pm 6.50}$ &
         $ 69.51 {\scriptstyle \pm 3.95}$ &
         $ 79.75 {\scriptstyle \pm 2.57}$ &  
         $ 94.29 {\scriptstyle \pm 2.91}$\\ \midrule

         DGM  & Euclidean & No &
         $ 42.40 {\scriptstyle \pm 8.20}$ &
         $ 60.93 {\scriptstyle \pm 4.07}$ &
         $ 69.93 {\scriptstyle \pm 2.17}$ &  
         $ 86.05 {\scriptstyle \pm 2.77}$\\ 

         DGM  & Snowflake & No &
         $ 64.18 {\scriptstyle \pm 3.46}$ &
         $ 64.61 {\scriptstyle \pm 6.14}$ &
         $ 81.70 {\scriptstyle \pm 5.35}$ &  
         $ 93.45 {\scriptstyle \pm 2.93}$\\

         \midrule

         MLP & Euclidean & No &
         $ 58.92 {\scriptstyle \pm 3.28}$ &
         $ 59.48 {\scriptstyle \pm 2.14}$ &
         $  87.80 {\scriptstyle \pm 1.54 }$ &
         $ 94.91 {\scriptstyle \pm 0.30}$ &  
         \\

         \bottomrule
         
    \end{tabular}
    }
    
    \label{tab:homophilic_latent_space_dim_2}
\end{table}

\end{document}